\pgfplotsset{compat=1.16}
\newcommand{\R}{\mathbb{R}}
\newcommand{\N}{\mathbb{N}}
\renewcommand{\d}{\mathrm{d}}
\renewcommand{\P}{\mathcal{P}}
\newcommand{\E}{\mathbb E}
\newcommand{\zb}{\bm}
\DeclareMathOperator{\Id}{Id}
\DeclareMathOperator{\supp}{supp}
\begin{document}

\title{Conditional Wasserstein Distances with Applications in Bayesian OT Flow Matching}

\author{\name Jannis Chemseddine \thanks{Alphabetical Ordering} \email chemseddine@math.tu-berlin.de\\
       \name Paul Hagemann  \email hagemann@math.tu-berlin.de\\
       \name Gabriele Steidl \email steidl@math.tu-berlin.de \\
       \name Christian Wald \email wald@math.tu-berlin.de \\
       \addr Institute of Mathematics\\
       Technical University of Berlin\\
       10623 Berlin, Germany}

\editor{Aryeh Kontorovich}

\maketitle

\begin{abstract}%
  In inverse problems, many conditional generative models approximate the posterior measure by minimizing a distance between the joint measure and its learned approximation. While this approach also controls the distance between the posterior measures in the case of the Kullback--Leibler divergence, the same in general does
  not hold true for the Wasserstein distance. 
  In this paper, we introduce a conditional Wasserstein distance via a set of restricted couplings that equals the expected Wasserstein distance of the posteriors.
  Interestingly, the dual formulation of the conditional Wasserstein-1 distance resembles losses
  in the conditional Wasserstein GAN literature in a quite natural way.
  We derive theoretical properties of the conditional Wasserstein distance, 
  characterize the corresponding geodesics and velocity fields as well as the flow ODEs. 
  Subsequently, we propose to approximate the  velocity fields by relaxing the conditional Wasserstein distance. Based on this, we propose an extension of OT Flow Matching for solving Bayesian inverse problems  and demonstrate its numerical advantages on an inverse problem and class-conditional image generation.
\end{abstract}

\begin{keywords}
Conditional Wasserstein distance, posterior sampling, flow matching, inverse problems, generative modelling
\end{keywords}

\section{Introduction}
Many sampling algorithms for the posterior $P_{X|Y=y}$ 
in Bayesian inverse problems
\begin{equation} \label{inverse}
Y = f(X) + \Xi
\end{equation}
with a forward operator $f: \mathbb X \to \mathbb Y$,
and a noise model $\Xi$,
perform learning either implicitly or explicitly on the joint distribution $P_{Y,X}$.
Most approaches minimize (or upper bound) some loss of the form 
$$L(\theta) = D(P_{Y,X},P_{Y,G_{\theta}}),$$
where $D$ denotes a suitable distance on the space of probability measures. In this framework $G_{\theta}$ is a conditional generative model, which in particular also depends on $y$.
For instance, this is done in the framework of conditional (stochastic) normalizing flows \citep{ardizzone2019guided,HHS22,HHG2023,winkler2019learning}, conditional GANs \citep{mirza2014conditional} or conditional gradient flows for the Wasserstein metric \citep{du2023nonparametric,hagemann2023posterior}. 
In practice however, we mostly look for the posteriors for single points $\tilde{y}$.
Recently in \citep{altekrueger2023conditional}, it was shown that the posterior error for single points $\tilde{y}$ goes to zero if the expected error to the posterior $\mathbb{E}_{Y}\left[W_1(P_{X|Y=y}, P_{Z|Y=y}) \right]$ is small in \citep{altekrueger2023conditional}, where $W_1$ denotes the Wasserstein distance \citep{villani2009optimal}. This shows that it is important to investigate the relation between $\mathbb{E}_{Y}\left[W_1(P_{X|Y=y}, P_{Z|Y=y}) \right]$ and $W_1(P_{Y,Z},P_{Y,X})$.

In \citep{CondWasGen}, the authors investigated the relation between the distance of the joint measures $D(P_{Y,Z}, P_{Y,X})$ and the expected error of the posteriors $\mathbb{E}_{Y}\left[D(P_{Z|Y=y}, P_{X|Y=y}) \right].$ For the Kullback--Leibler divergence $D= \text{KL}$, it follows by the chain rule 
\citep[Theorem 2.5.3]{cover_inf} that 
 $$\mathbb{E}_{y\sim P_Y}\left[\mathrm{KL}(P_{X|Y=y}, P_{Z|Y=y}) \right] = \mathrm{KL}(P_{Y,X}, P_{Y,Z}).$$  
Such results show that it is possible to approximate the averaged posterior via  the joint distribution. 
Unfortunately, we have for the Wasserstein-1 distance that in general only
\begin{equation} \label{w1_coupling}
W_1(P_{Y,X}, P_{Y,Z}) \le \mathbb{E}_{y\sim P_Y}\left[W_1(P_{X|Y=y}, P_{Z|Y=y})\right]
\end{equation}
holds true, in contrast to the equality claim in  \citep[Theorem 2]{CondWasGen}.
A simple counterexample is given in Appendix \ref{sec:ex}.
Intuitively, strict inequality can arise when the optimal transport (OT) plan needs to transport mass in the $Y$-component. 
This is the motivation for considering only plans that do not have mass transport in the $Y$-component.
This leads us to the definition of conditional Wasserstein distances $W_{p,Y}$, where  
admissible transport plans are restricted to the set 
$\Gamma_Y^4=\Gamma_Y^4(P_{Y,X},P_{Y,Z})$ 
of  4-plans 
$\alpha$ 
fulfilling in addition
$(\pi^{1,3})_{\sharp}\alpha = \Delta_{\sharp}P_{Y}$,
where $\Delta(y) = (y,y)$ is the diagonal map:
 \begin{align*}
  W_{p,Y}^p \coloneqq    \underset{\alpha \in \Gamma_Y^4}{\mathsf{inf}}\int\|(y_1,x_1)-(y_2,x_2)\|^p \, \d \alpha
 \end{align*}
 
Inspired by \citep{CondWasGen}, we show that this conditional Wasserstein distance indeed fulfills 
$$W_{p,Y}^p(P_{Y,X},P_{Y,Z}) = \mathbb{E}_{y\sim P_Y}\left[W_p^p(P_{X|Y=y}, P_{Z|Y=y})\right].$$

Further, we prove results on geodesics similar as in 
\citep{ambrosio2005gradient} for the conditional Wasserstein distance: we show the connections to the continuity equation, verify that there exists a velocity field with no mass transport in $Y$-direction and recover a corresponding ODE formulation.
Indeed, this conditional Wasserstein distance can be used to explain a numerical observation made by \citep{du2023nonparametric, hagemann2023posterior}, namely that rescaling the $Y$-component leads to velocity fields with no mass transport in Y-direction in the limit. Using these ideas, we propose to relax the conditional Wasserstein distance to allow "small amounts" of mass transport in $Y$-direction.

Then, we use our insights to design efficient posterior sampling algorithms. 
By leveraging recent ideas of flow matching, see \citep{albergo2023building, liu2023flow,lipman2023flow}, we design Bayesian OT flow matching. 
Note that the recent approaches of \citep{zheng2023guided, wildberger2023flow} do not respect the OT in $X$-direction as they always choose a random coupling between $x$ and $z$ for each observation $y$. This leads to unfortunate situations, where the optimal $Y$-diagonal coupling is not recovered even between Gaussians when we approximate them by empirical measures, see Example \ref{ex:emp_lim}.  
We use our proposed Bayesian OT flow matching and verify its advantages on a Gaussian mixture toy problem and on class conditional image generation on the CIFAR10 dataset. 

\textbf{Contributions}
\begin{itemize}
\item We introduce  conditional Wasserstein distances and highlight their relevance to conditional Wasserstein GANs in inverse problems. 
\item We derive theoretical properties of the conditional Wasserstein distance and  
establish geodesics in this conditional Wasserstein space, with velocity fields having no transport in $Y$-direction.
\item 
We show that the conditional Wasserstein distance can be used in conditional generative approaches and demonstrate the advantages on MNIST particle flows \citep{hagemann2023posterior, altekrueger23a}.  We propose a version of OT flow matching \citep{tong2023improving, pooladian2023multisample} for inverse problems  which uses a relaxed version of our conditional Wasserstein distance, and show that it overcomes obstacles, explained in Example \ref{ex:emp_lim}, from previous flow matching versions for inverse problems \citep{wildberger2023flow, albergo2024stochastic}.
\end{itemize}

\indent\textbf{Related work}
 Our work is in the intersection of conditional generative modeling \citep{adler2018deep, ardizzone2019guided, mirza2014conditional} and (computational) OT \citep{MAL-073, villani2009optimal}. 
The recent work \citep[Theorem 2]{kim2023wasserstein} derives an inequality based on restricting the admissible couplings in the their OT formulation to so-called conditional sub-couplings. Note that their reformulation is only a restatement of the expected value, but does not relate it to the joint distributions. Those authors also look for geodesics in the Wasserstein space, but pursue a different approach. While we relate it to the velocity fields in  gradient flows,  they pursue an autoencoder/GAN idea. 
To the best of our knowledge, the first work which considered conditional Wasserstein distances was \cite{kloeckner2021extensions}. Related absolutely continuous curves were discussed in \cite{peszek2023heterogeneous} including the existence of vector fields for absolutely continuous curves.  \cite{peszek2023heterogeneous} were mainly interested in absolutely continuous curves stemming from gradient flows and not in geodesics.
The closest work, which appeared after our first version of this paper, is  \citep{hosseini2024conditional}. 
Here the authors define the conditional optimal transport problem and calculate its dual. Their work is more focused on the infinite dimensional setting, whereas we consider the velocity fields needed for the flow matching application.
There are two other concurrent works that treat objects similar to what is presented in this paper. \citep{barboni2024understanding} study gradient flows for conditional Wasserstein spaces in the case where $P_Y$ is the Lesbesgue measure on $[0,1]$ in order to analyse the training of infinitely deep and wide ResNets. Another preprint appearing after the ArXiv submission of the present paper, which also uses conditional OT for flow matching, is \citep{kerrigan2024dynamic}. They come up with a similar loss function, which also uses that the velocity field should not transport mass in the Y-component. Another concurrent work is \citep{isobe2024extended}, where they use a generalized continuity equation to extend the flow matching framework to the matrix valued case, where they use it for style transfer. Theoretically our paper is more focused on finding geodesics where no mass in $Y$ is transported whereas they look for translations between classes. 

In the OT literature, there has been a collection of class conditional OT distances used in domain adaption \citep{nguyen2022cycle,rakotomamonjy2021optimal}. In particular, conditional OT as in \citep{tabak2021data} is relevant as they consider OT plans for each condition $y$ minimizing $\mathbb{E}_y[W_1(P_{X|Y=y}, G(\cdot,y)_{\#}P_Z)]$. However they relax their problem using a KL divergence.
The works on Wasserstein gradient flows \citep{ambrosio2005gradient, gig} investigate conditional Wasserstein distances  from a different point of view for defining the so-called geometric tangent space of the 2-Wasserstein space.  Geometric tangent spaces play a crucial role in
Wasserstein gradient flows of maximum mean discrepancies with Riesz kernels in \citep{HGBS2024} and their neural variants in \citep{altekrueger23a}.
In \citep[Remark 7]{hagemann2023posterior}, an inequality between the joint Wasserstein and the expected value over the conditionals is derived, where the result requires compactly supported measures and certain regularity of the associated posterior densities. In \citep{bunne2022supervised}, the supervised training of conditional Monge maps is proposed, for which the authors solved the dual problem using convex neural networks. The authors of \citep{manupriya2023empirical} also consider an amortized objective between the conditional distributions and propose a relaxation, which only needs  samples from the joint distribution involving  maximum mean discrepancies. 
Numerically, we first verify our theoretical statements based on particle flows, which were also used in \citep{altekrueger23a, hagemann2023posterior}. 
Further, we apply our framework to solve inverse problems using Bayesian flow matching \citep{wildberger2023flow, zheng2023guided} 
and  OT flow matching \citep{albergo2024stochastic,lipman2023flow, liu2023flow, tong2023improving, pooladian2023multisample}. 
\\[1ex]

This paper is an extension of our first ArXiv version \citep{CHW2023} 
on conditional Wasserstein distances with more focus on the continuity equation and flow equation for geodesics as well as flow matching. 

\textbf{Outline of the paper}
In Section \ref{sec:prelim}, we recall 
preliminaries from OT. Then, in Section \ref{sec:condW}, we introduce
conditional Wasserstein distances of joint probability measures, 
and show their relation
to the expectation over the Wasserstein distance of the conditional probabilities.
Moreover, we highlight the connection to work on
geometric tangent spaces.
In Section \ref{sec:gan}, we calculate the dual of our conditional Wasserstein-1 distance and show how a loss function used in the
conditional Wasserstein GAN literature arises in a natural way. 
In Section \ref{sec:velocity}, we deal with geodesics with respect to the conditional Wasserstein distance, 
prove properties of the corresponding velocity fields, showing that they vanish in the $Y$-component, and show existence for flow ODEs. 
We propose a relaxation of the conditional Wasserstein distance which appears to be useful for numerical computations
in Section \ref{sec:relax}.
We combine our findings with  OT flow matching to get Bayesian flow matching in Section 
\ref{sec:fm}.
Finally, in Section \ref{sec:numerics}, we present numerical results:  we verify a convergence result for an approximation of the conditional Wasserstein distance using particle flows to MNIST, and
demonstrate the advantages of our Bayesian OT flow matching procedure 
on a Gaussian mixture model toy example and on CIFAR10 class-conditional image generation.
All proofs are postponed to the appendices.
 \section{Preliminaries} \label{sec:prelim}
Throughout this paper, we will use the following notation. These are basics from from optimal transport theory and can be found in \citep{villani2009optimal}.
By 
$\mathcal{P}(\mathbb X)$,
we denote the set of probability measures on $\mathbb X \subseteq \mathbb R^n$ and by 
$\mathcal{P}_p(\mathbb X)$, $p \in [1,\infty)$ 
the subset of measures with finite $p$-th moments.
For  $\mu\in \mathcal P(\mathbb X)$ and a measurable function $F:\mathbb X \to \mathbb Y$, we define the \emph{push forward measure}  by $F_{\sharp}\mu = F \circ \mu^{-1}$. 
For a product space $\prod_{i=1}^K \mathbb X_i$, 
we denote the projection onto the $i_1,\ldots,i_k$-th component by $\pi^{i_1,\ldots,i_k}$. 
The \emph{Wasserstein-$p$ metric} \citep{villani2009optimal} on $\mathcal{P}_p(\mathbb X)$ is given by 
\begin{align} \label{eq:wasserstein}
W_p(\mu, \nu) &\coloneqq \Big(\min_{\gamma \in \Gamma} \int_{\mathbb X^2} \Vert x-y \Vert^p \, \text{d} \gamma(x,y)\Big)^{\frac 1 p}\\
&= \Big(\min_{\gamma \in \Gamma} \mathbb E_{(x,y) \sim \gamma} \big[ \|x-y\|^p \big] \Big)^{\frac 1 p}.
\end{align}
where $\Gamma=\Gamma(\mu,\nu)$ denotes the set of all probability measures 
$\gamma\in \P(\mathbb X \times \mathbb X)$ with marginals $\pi^1_{\sharp}\gamma=\mu$ and $\pi^2_{\sharp}\gamma =\nu$
and $\| \cdot\|$ is the Euclidean distance on $\mathbb R^n$, see \citep{villani2009optimal}.
If $\mu \in \mathcal P_p(\mathbb X)$ is absolutely continuous, then, for $p \in (1,\infty)$, there exists a unique optimal
transport map $T \in L^p_\mu(\mathbb X, \mathbb X)$, also known as Monge map,
which solves
$$
\min_{T \, \text{measurable}} \Big\{\int_{\mathbb X} \Vert x-T(x) \Vert^p \, \text{d} \mu(x) 
\quad \text{such that} \quad T_{\sharp} \mu = \nu \Big \}.
$$
Further, this optimal map is related to the optimal transport plan $\gamma$ in \eqref{eq:wasserstein} by
$
\gamma = (\mathrm{Id}, T)_{\sharp}\mu,
$
see \citep{villani2009optimal}. The same holds true for empirical measures with the same number of points, see \citep[Proposition 2.1]{MAL-073}.
In this paper, we ask for relations between joint and posterior probabilities:
for random variables $X,Z \in B \subseteq \mathbb R^m$ and 
$Y \in A \subseteq \mathbb R^d$, we are interested
in Wasserstein distances between 
$P_{Y,X}, P_{Y,Z} \in \mathcal P_p(A \times B)$
and $P_{X|Y=y}, P_{Z|Y=y} \in \mathcal P_p(B)$.
Since $\pi^1_\sharp P_{Y,X} = P_Y$ as well as $\pi^1_\sharp P_{Y,Z} = P_Y$,
we see that the joint probabilities belong indeed to the subset
\begin{equation} \label{velocity_space}
\P_{p,Y}(A\times B) \coloneqq \{ \gamma\in \P_p(A\times B): \pi^1_{\sharp}\gamma =P_Y\}.
\end{equation}
For $p=2$, this set was considered as set of  velocity plans at $P_Y$ in
\citep[Sect. 12.4]{ambrosio2005gradient} and \citep[Sect. 4]{gig}. It is the basis for defining the so-called geometric tangent space of $\mathcal P_2(\R^ d)$
which  was used, e.g. in  \citep{HGBS2024,altekrueger23a}  for handling
(neural) Wasserstein gradient flows of maximum mean discrepancies.

We will frequently apply the disintegration formula \citep[Theorem 5.3.1]{ambrosio2005gradient} which says 
that
for a measure $\gamma \in \mathcal P(A\times B)$ with $\pi^1_{\sharp} \gamma = \mu \in \mathcal P(A)$, there exists a $\mu$-a.e. uniquely determined Borel family  of probability measures $(\gamma_y)_{y \in A}$ such that
\begin{equation} \label{disint}
\int_{A\times B} f(y,x) \, \d \gamma(y,x) = \int_A \int_B f(y,x) \, \d\gamma_y(x) \d \mu(y)
\end{equation} 
for any Borel measurable map $f:A \times B \to [0,+\infty]$.
In particular, for $\gamma = P_{Y,X} \in \mathcal P(A\times B)$, the disintegration formula reads as 
\begin{equation} \label{disint_1}
\int_{A\times B} f(y,x) \, \d P_{Y,X}(y,x) = \int_A \int_B f(y,x) \, \d P_{X|Y=y}(x) \d P_Y(y).
\end{equation} 

 \section{Conditional Wasserstein Distance} \label{sec:condW}
As demonstrated in Appendix \ref{sec:ex} we can only expect inequality in \eqref{w1_coupling}.
Towards equality, we introduce a conditional Wasserstein distance 
which allows only couplings which leave the $Y$-component invariant. 
To this end, we introduce the set of special 4-plans
\begin{equation}\label{y-coupling}
  \Gamma_Y^4 =  \Gamma_Y^4(P_{Y,X},P_{Y,Z}) \coloneqq \Big\{ \alpha \in \Gamma(P_{Y,X},P_{Y,Z}): \pi^{1,3}_{\sharp}\alpha = \Delta_{\sharp}P_{Y} \Big\},
\end{equation}
 where $\Delta:A\to A^2$, $y \mapsto (y,y)$ is the diagonal map. 
 Note that $\Delta^{-1} (y_1,y_2) = \emptyset$ if $y_1 \not = y_2$
 and $\Delta^{-1} (y_1,y_2) = y$ if $y_1 = y_2 = y$.
 Then, we define the \emph{conditional Wasserstein-$p$ distance}, $p \in [1,\infty)$ by
        \begin{align}   \label{definition}
        W_{p,Y}(P_{Y,X},P_{Y,Z}) \coloneqq  \Big( 
        \inf_{\alpha \in \Gamma_Y^4} \int_{(A\times B)^2}\|(y_1,x_1)-(y_2,x_2)\|^p \, \d \alpha 
        \Big)^{\frac 1 p}.
        \end{align}    
Indeed we will see in Corollary \ref{cor:metric} that this is a metric on   $\P_{p,Y}(A\times B)$.  

In terms of Monge maps, this means that we are considering functions $(\mathrm{Id}, T(y,\cdot)): (y,x)\mapsto (y,T(y,x))$, where $T:\R^d\times \R^m\to\R^m$ and $(\mathrm{Id}, T(y,\cdot))_{\#}P_{Y,X} = P_{Y,Z}$. The following proposition gives the desired equivalence of the form \eqref{w1_coupling}.
The proof is given in Appendix \ref{app:condw}.

\begin{proposition}\label{cond:plan}
The following relations holds true.
\begin{itemize}
    \item[i)]  The conditional Wasserstein-$p$ distance \eqref{definition} fulfills
        \begin{align}\label{relation}
               W_{p,Y}^p(P_{Y,X},P_{Y,Z})= \mathbb E_{y\sim P_Y} \big[ W_p^p(P_{X|Y=y},P_{Z|Y=y})\big].
        \end{align}
\item[ii)] Let $\alpha \in \Gamma_{Y}^4$ be an optimal plan in \eqref{definition} with disintegration $\alpha_{y_1,y_2}$ with respect to $\pi^{1,3}_\sharp \alpha$. Then
  $\alpha_{y,y} \in \mathcal P(B^2)$ is
  an optimal plan for $W_p(P_{X|Y=y}, P_{Z|Y=y})$ for $P_Y$-a.e. $y \in A$.       
\item[iii)] There exists a collection of optimal plans 
$\alpha_y\in \Gamma(P_{X|Y=y},P_{Z|Y=y})$, $y \in A$ for 

$W_p(P_{X|Y=y},P_{Z|Y=y})$ such that 
 \begin{equation} \label{def_alpha}
 \alpha \coloneqq \int_{A} \d \delta_{y_1}(y_2) \, \d \alpha_{y_1}(x_1,x_2) \d P_Y(y_1)
 \end{equation}
is a well-defined  coupling in $\Gamma_Y^4$ which is   optimal  in \eqref{definition}.
    \end{itemize}
  \end{proposition}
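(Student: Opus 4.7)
The strategy is to establish matching inequalities on $W_{p,Y}^p$. The lower bound proof simultaneously yields (ii), while the upper bound proof is the content of (iii).

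For the lower bound $W_{p,Y}^p(P_{Y,X},P_{Y,Z}) \ge \mathbb{E}_Y\bigl[W_p^p(P_{X|Y=y},P_{Z|Y=y})\bigr]$, I would take an arbitrary $\alpha\in\Gamma_Y^4$. Because $\pi^{1,3}_\sharp\alpha=\Delta_\sharp P_Y$ is concentrated on the diagonal $\{y_1=y_2\}$, the disintegration formula \eqref{disint} applied to $\alpha$ with respect to $\pi^{1,3}_\sharp\alpha$ produces a $P_Y$-a.e.\ uniquely determined family $(\alpha_{y,y})_{y\in A}$ of probability measures on $B\times B$ satisfying
$$\int f\,\d\alpha \;=\; \int_A\int_{B^2} f(y,x_1,y,x_2)\,\d\alpha_{y,y}(x_1,x_2)\,\d P_Y(y).$$
Testing this identity against $f$ depending only on $(y_1,x_1)$ and comparing with \eqref{disint_1} for $P_{Y,X}$, uniqueness of the disintegration forces $\pi^1_\sharp\alpha_{y,y}=P_{X|Y=y}$ for $P_Y$-a.e.\ $y$; by symmetry $\pi^2_\sharp\alpha_{y,y}=P_{Z|Y=y}$. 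Thus each $\alpha_{y,y}\in\Gamma(P_{X|Y=y},P_{Z|Y=y})$. Using $y_1=y_2$ $\alpha$-a.e.\ the cost collapses to
$$\int\|(y_1,x_1)-(y_2,x_2)\|^p\,\d\alpha \;=\; \int_A\int_{B^2}\|x_1-x_2\|^p\,\d\alpha_{y,y}\,\d P_Y(y)\;\ge\; \int_A W_p^p(P_{X|Y=y},P_{Z|Y=y})\,\d P_Y(y),$$
and taking the infimum over $\alpha$ gives the lower bound. If $\alpha$ attains the infimum, equality must hold $P_Y$-a.e.\ in the inner integrand, which is precisely claim (ii).

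For the upper bound and (iii), the task is to \emph{glue} pointwise optimal plans into a single admissible $\alpha$. I would invoke a measurable selection theorem (e.g.\ Kuratowski--Ryll-Nardzewski) applied to the set-valued map $y\mapsto \argmin\,W_p(P_{X|Y=y},P_{Z|Y=y})$, using the Borel measurability of the disintegrations $y\mapsto P_{X|Y=y}$, $y\mapsto P_{Z|Y=y}$ and narrow compactness of transport plans with fixed marginals, to obtain a Borel family $(\alpha_y)_{y\in A}$ with $\alpha_y$ optimal for $W_p(P_{X|Y=y},P_{Z|Y=y})$. Define $\alpha$ as in \eqref{def_alpha} by
$$\int f\,\d\alpha \;\coloneqq\; \int_A\int_{B^2} f(y,x_1,y,x_2)\,\d\alpha_y(x_1,x_2)\,\d P_Y(y).$$
Three routine verifications then show $\alpha\in\Gamma_Y^4$: testing against a function of $(y_1,y_2)$ yields $\pi^{1,3}_\sharp\alpha=\Delta_\sharp P_Y$; testing against a function of $(y_1,x_1)$ and using $\pi^1_\sharp\alpha_y=P_{X|Y=y}$ together with \eqref{disint_1} yields $\pi^{1,2}_\sharp\alpha=P_{Y,X}$; symmetrically $\pi^{3,4}_\sharp\alpha=P_{Y,Z}$. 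The cost of $\alpha$ evaluates to $\int_A W_p^p(P_{X|Y=y},P_{Z|Y=y})\,\d P_Y(y)$, matching the lower bound.

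The main obstacle is the measurable selection step in (iii): one must ensure that pointwise $W_p$-optimal couplings $\alpha_y$ can be chosen jointly measurably in $y$ so that the integral defining $\alpha$ is a well-defined Borel probability measure on $(A\times B)^2$. Once this is secured, everything else is disintegration bookkeeping via \eqref{disint} and \eqref{disint_1}.
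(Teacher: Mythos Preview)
Your proposal is correct and follows essentially the same route as the paper: disintegrate an admissible $\alpha$ along $\pi^{1,3}_\sharp\alpha=\Delta_\sharp P_Y$ to get the lower bound and (ii), then glue pointwise optimal $\alpha_y$ to get the upper bound and (iii). The only notable difference is that for the measurable selection step you flag as the main obstacle, the paper does not invoke Kuratowski--Ryll-Nardzewski directly but instead cites \cite[Lemma~12.4.7]{ambrosio2005gradient}, which packages exactly the required statement (existence of a Borel family of optimal plans $y\mapsto\alpha_y$ when the marginals depend measurably on $y$).
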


Note that the definition of $\alpha$ in $iii)$ already appears in the proof of \citep[Theorem 2]{CondWasGen}. For $p=2$, it was shown in \citep[Sect. 12.4]{ambrosio2005gradient} and \citep[Sect. 4]{gig} that the square root of the right-hand side in \eqref{relation}
is a metric on $\mathcal P_{2,Y}(A \times B)$.
The proof can be generalized in a straightforward way for $p\in[1,\infty)$. 
Thus, by Proposition \ref{cond:plan}, we have the following corollary.

\begin{corollary} \label{cor:metric}
The conditional Wasserstein distance $W_{p,Y}$ is a metric on $\P_{p,Y}(A\times B)$.
\end{corollary}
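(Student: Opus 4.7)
The plan is to reduce the four metric axioms for $W_{p,Y}$ on $\mathcal{P}_{p,Y}(A\times B)$ to the known metric properties of the classical Wasserstein distance $W_p$ on $\mathcal{P}_p(B)$, using the identity
\[
W_{p,Y}^p(P_{Y,X},P_{Y,Z}) = \mathbb{E}_{y \sim P_Y}\bigl[W_p^p(P_{X|Y=y},P_{Z|Y=y})\bigr]
\]
from Proposition \ref{cond:plan}(i). Non-negativity and symmetry then follow immediately from the corresponding properties of $W_p$ applied pointwise in $y$.

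For the identity of indiscernibles, note that $\mu = \nu$ trivially gives $W_{p,Y}(\mu,\nu)=0$. Conversely, if $W_{p,Y}(P_{Y,X},P_{Y,Z}) = 0$, then the integrand $W_p^p(P_{X|Y=y}, P_{Z|Y=y})$ vanishes for $P_Y$-a.e.\ $y \in A$. Since $W_p$ is a metric on $\mathcal{P}_p(B)$, this forces $P_{X|Y=y} = P_{Z|Y=y}$ for $P_Y$-a.e.\ $y$. The disintegration formula \eqref{disint_1} then yields $P_{Y,X} = P_{Y,Z}$, which is the required equality in $\mathcal{P}_{p,Y}(A\times B)$.

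The main step is the triangle inequality. Fix three measures $\mu_1,\mu_2,\mu_3 \in \mathcal{P}_{p,Y}(A\times B)$ with common $Y$-marginal $P_Y$, and let $(\mu_{i,y})_{y\in A}$ denote their disintegrations. Applying the triangle inequality for $W_p$ on $\mathcal{P}_p(B)$ pointwise in $y$ gives
\[
W_p(\mu_{1,y}, \mu_{3,y}) \le W_p(\mu_{1,y}, \mu_{2,y}) + W_p(\mu_{2,y}, \mu_{3,y})
\]
for $P_Y$-a.e.\ $y$. Raising to the $p$-th power, integrating against $P_Y$, and applying Minkowski's inequality in $L^p(P_Y)$ to the right-hand side yields
\[
\Bigl(\int_A W_p^p(\mu_{1,y},\mu_{3,y}) \, \d P_Y\Bigr)^{1/p} \le \sum_{i=1}^{2} \Bigl(\int_A W_p^p(\mu_{i,y},\mu_{i+1,y}) \, \d P_Y\Bigr)^{1/p}.
\]
By Proposition \ref{cond:plan}(i) this is exactly $W_{p,Y}(\mu_1,\mu_3) \le W_{p,Y}(\mu_1,\mu_2) + W_{p,Y}(\mu_2,\mu_3)$.

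The only subtlety I anticipate is a measurability check: the map $y \mapsto W_p(\mu_{1,y}, \mu_{2,y})$ must be $P_Y$-measurable for the $L^p(P_Y)$ norms to make sense. This is standard once one picks Borel families of disintegrations as supplied by \cite[Theorem 5.3.1]{ambrosio2005gradient}, but it is the one spot where care is required; the remainder of the argument is routine given Proposition \ref{cond:plan}. Finiteness of all quantities follows because all $\mu_i$ have finite $p$-th moments, so that $W_{p,Y}$ is finite on $\mathcal{P}_{p,Y}(A\times B)$.
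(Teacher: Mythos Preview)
Your proposal is correct and follows essentially the same approach as the paper: both invoke Proposition \ref{cond:plan}(i) to identify $W_{p,Y}$ with the $L^p(P_Y)$-norm of $y\mapsto W_p(\mu_{\cdot,y},\nu_{\cdot,y})$ and then use the metric properties of $W_p$ fiberwise. The paper simply cites \cite[Sect.\ 12.4]{ambrosio2005gradient} and \cite[Sect.\ 4]{gig} for the fact that this expectation defines a metric, whereas you write out the routine verification (pointwise triangle inequality plus Minkowski in $L^p(P_Y)$) explicitly.
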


Interestingly, for $p=2$, there was also given an equivalent definition by \citep{gig} of $\mathcal W_{p,Y}$,
namely
\begin{equation} \label{ew_1}
\mathcal W_{p,Y} (P_{Y,X}, P_{Y,Z}) \coloneqq
\underset{\beta\in \Gamma^3_{Y}(P_{Y,X},P_{Y,Z})} {\inf} \ 
\Big(\int_{A\times B^2}\|x_1 -x_2\|^p \, \d \beta(y,x_1,x_3) \Big)^{\frac 1 p}
\end{equation}
with the set of 3-plans
$$
\Gamma^3_{Y}(P_{Y,X},P_{Y,Z})
\coloneqq
\{\beta\in\mathcal{P}_p(A \times B^2 ): \pi^{1,2}_{\sharp}\beta = P_{Y,X},\pi^{1,3}_{\sharp}\beta=P_{Y,Z}\}.
$$
The relation between the admissible 3-plans and 4-plans is given in the following proposition, for which a proof can be found in the appendix. 

\begin{proposition} \label{prop:3-4}
The map
$
    \pi^{1,2,4}_{\sharp}: \Gamma_Y^4(P_{Y,X},P_{Y,Z})\to \Gamma_Y^3 (P_{Y,X},P_{Y,Z})
$
is a bijection and for every $\alpha\in \Gamma_Y^4(P_{Y,X},P_{Y,Z})$ it holds 
    \begin{align*}
         &\int_{(A\times B)^2}\|(y_1,x_1)-(y_2,x_2)\|^p\, \d \alpha=\int_{A\times B^2}\|x_1-x_2\|^p \, \d \pi^{1,2,4}_{\sharp}\alpha.    
    \end{align*}
\end{proposition}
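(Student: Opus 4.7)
The plan is to exploit the structural constraint $\pi^{1,3}_\sharp \alpha = \Delta_\sharp P_Y$: this forces any $\alpha \in \Gamma_Y^4$ to be concentrated on the diagonal set $\{(y_1,x_1,y_2,x_2) \in (A\times B)^2 : y_1 = y_2\}$, since $\Delta_\sharp P_Y$ is supported on $\{(y,y) : y \in A\}$. With this observation, the 4-plan carries no more information than its $(y_1,x_1,x_2)$-marginal, which suggests defining an explicit inverse via the embedding map $\iota : A \times B^2 \to (A\times B)^2$, $\iota(y,x_1,x_2) = (y,x_1,y,x_2)$.

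First I would verify that $\pi^{1,2,4}_\sharp$ actually maps into $\Gamma_Y^3$. Given $\alpha \in \Gamma_Y^4$ and $\beta \coloneqq \pi^{1,2,4}_\sharp \alpha$, the identity $\pi^{1,2}_\sharp \beta = \pi^{1,2}_\sharp \alpha = P_{Y,X}$ is immediate. For $\pi^{1,3}_\sharp \beta = P_{Y,Z}$, I would use the diagonal concentration: since $y_1 = y_2$ holds $\alpha$-a.s., one has $\pi^{1,4}_\sharp \alpha = \pi^{3,4}_\sharp \alpha = P_{Y,Z}$, and $\pi^{1,3}_\sharp \beta$ equals precisely this. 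Conversely, for $\beta \in \Gamma_Y^3$, I would define $\alpha \coloneqq \iota_\sharp \beta$ and check the three constraints: $\pi^{1,2}_\sharp \alpha = \pi^{1,2}_\sharp \beta = P_{Y,X}$, $\pi^{3,4}_\sharp \alpha = \pi^{1,3}_\sharp \beta = P_{Y,Z}$, and $\pi^{1,3}_\sharp \alpha = \Delta_\sharp \pi^1_\sharp \beta = \Delta_\sharp P_Y$.

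To establish the bijection, I would show that $\iota_\sharp \circ \pi^{1,2,4}_\sharp = \mathrm{Id}$ on $\Gamma_Y^4$ and $\pi^{1,2,4}_\sharp \circ \iota_\sharp = \mathrm{Id}$ on $\Gamma_Y^3$. The second is trivial by composition of push-forwards, since $\pi^{1,2,4} \circ \iota$ is the identity on $A \times B^2$. The first is the substantive direction: for $\alpha \in \Gamma_Y^4$, both $\alpha$ and $\iota_\sharp \pi^{1,2,4}_\sharp \alpha$ are supported on $\{y_1 = y_2\}$ and have the same image under $\pi^{1,2,4}$, so they must agree; this can be made rigorous by testing against continuous bounded functions $f(y_1,x_1,y_2,x_2)$ and using that $f(y_1,x_1,y_2,x_2) = f(y_1,x_1,y_1,x_2)$ $\alpha$-a.e.

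Finally, for the cost identity, I would again invoke the diagonal support: for $\alpha$-a.e.\ $(y_1,x_1,y_2,x_2)$ one has $\|(y_1,x_1)-(y_2,x_2)\|^p = (\|y_1 - y_2\|^2 + \|x_1 - x_2\|^2)^{p/2} = \|x_1 - x_2\|^p$. Integrating and applying the change-of-variables formula for push-forwards yields the asserted equality. The main obstacle, though mild, is the measure-theoretic justification that the diagonal constraint $\pi^{1,3}_\sharp \alpha = \Delta_\sharp P_Y$ indeed implies $\alpha(\{y_1 \neq y_2\}) = 0$; this follows since $\{y_1 \neq y_2\}$ is the preimage under $\pi^{1,3}$ of the complement of the diagonal in $A^2$, which is a $\Delta_\sharp P_Y$-null set.
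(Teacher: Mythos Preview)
Your proposal is correct and follows essentially the same route as the paper: both construct the explicit inverse via the embedding $\iota(y,x_1,x_2)=(y,x_1,y,x_2)$ (the paper calls it $\kappa$), observe that $\pi^{1,2,4}\circ\iota=\mathrm{Id}$ is trivial, and then verify $\iota_\sharp\circ\pi^{1,2,4}_\sharp=\mathrm{Id}$ on $\Gamma_Y^4$ using the diagonal constraint. The only cosmetic difference is that the paper phrases the last step and the cost identity via disintegration of $\alpha$ with respect to $\pi^{1,3}_\sharp\alpha=\Delta_\sharp P_Y$, whereas you argue directly from the support condition $\alpha(\{y_1\neq y_2\})=0$; these are equivalent encodings of the same fact, and your version is arguably more elementary.
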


\section{Dual Formulation of $W_{1,Y}$ and Relation to GAN Loss} \label{sec:gan}
Interestingly, the conditional Wasserstein-1 distance recovers loss functions
in the conditional Wasserstein GAN literature \citep{adler2018deep, CondWasGen, martin2021exchanging}. 
Wasserstein GANs \citep{arjovsky2017wasserstein} aim to sample from a target distribution $P_X$ based on a simpler distribution $P_Z$. A generator $G = G_\theta$ is learned such that the Wasserstein-1 distance
in its dual formulation
\begin{align*}
W_1(P_X, G_\# P_Z) = \max_{f\in \text{Lip}_1} \left\{\mathbb{E}_{X}[f]- \mathbb{E}_{G_\# P_Z}\big[f] \right\}=
\max_{f\in \text{Lip}_1} \left\{\mathbb{E}_{X}[f]- \mathbb{E}_{Z}\big[f\circ G \big] \right\}
\end{align*}
is minimized, where $\text{Lip}_1$ denotes the set of 1-Lipschitz continuous functions. 
At the same time, a discriminator $f = f_\omega$ is learned such that
the final Wasserstein GAN learning problem becomes
\begin{align*}
 \min_\theta \max_{\omega} \left\{\mathbb{E}_{X} [f] - \mathbb{E}_{Z} [f \circ G] \right\}
 \quad \text{subject to} \quad f \in \text{Lip}_1 .
\end{align*}
Usually, the 1-Lipschitz condition is enforced via so-called weight-clipping \citep{arjovsky2017wasserstein}.

In \citep{adler2018deep}, this approach was generalized to inverse problems. 
Assume that $A \subset \R^d$ and $B \subset \R^m$ are compact sets throughout this section.
For given $y \in A$, let $h(y,\cdot) \in \text{Lip}_1$ be a minimizer in
\begin{align*}
W_1(P_{X|Y=y}, G(y, \cdot)_\# P_Z) = 
\max_{h(y,\cdot) \in \text{Lip}_1} \left\{\mathbb{E}_{X|Y=y}[h(y,x)]- \mathbb{E}_{Z }\big[h(y,G(y,\cdot)) \big] \right\}.
\end{align*}
Now the authors take the expectation value 
on both sides and exchange expectation and maximum to get, together with \eqref{disint_1}, the relation
\begin{align} \label{wish}
\mathbb{E}_{Y}[W_1(P_{X|Y=y}, G(y, \cdot) _\# P_Z] 
= \max_{h}  \left\{
\mathbb{E}_{Y,X } [h] 
-
\mathbb{E}_{Y,Z} [h (y, G(y,\cdot)]
 \right\},
\end{align} 
where the maximum is taken over functions $h$ which are Lipschitz-1 continuous in the second variable.
However, exchanging maximum and expectation value requires that 
$(y,x) \mapsto h(y,x)$ is measurable which is not immediate.
This ,,gap'' was fixed under stronger assumptions, e.g. on the posterior, in \citep{martin2021exchanging}.

In this section, we show that the dual formulation of the conditional Wasserstein distance $W_{1,Y}$ leads naturally to the desired loss on the right-hand side of \eqref{wish} for an appropriate regular function set for $h$.
More precisely, we have the following theorem which is proved in Appendix \ref{sec:dual}. Note that we can give the precise space $\mathcal{F}$, where the functions we take the supremum over belong to.

\begin{theorem}\label{prop:gan}
    Let $A \subset \R^d$ and $B \subset \R^m$ be compact sets. Then it holds
     \begin{align*}
         W_{1,Y}(P_{Y,X},P_{Y,Z}) =  \sup_{h \in \mathcal F}\left\{ \mathbb{E}_{Y,X}[h] - \mathbb{E}_{Y,Z}[h]\right\},
     \end{align*}
     where $\mathcal F$ denotes the set of bounded, upper semi-continuous functions $h:A\times B\to \R$ satisfying 
     $|h(y,x_1)-h(y,x_2)|\leq \|x_1-x_2\|$
     for all $y \in A$ and all $x_1,x_2\in B$.
\end{theorem}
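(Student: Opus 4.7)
The plan is to reduce the claim to a parametric Kantorovich--Rubinstein duality via Proposition~\ref{cond:plan}(i), and then to exhibit an admissible $h\in\mathcal F$ by combining a measurable selection with a Lipschitz-preserving regularization.

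For the easy direction ``$\ge$'', I use that by Proposition~\ref{cond:plan}(i) we have $W_{1,Y}(P_{Y,X},P_{Y,Z})=\mathbb{E}_Y[W_1(P_{X|Y=y},P_{Z|Y=y})]$. For any $h\in\mathcal F$ the slice $h(y,\cdot)$ lies in $\mathrm{Lip}_1(B)$, so the classical one-sided Kantorovich--Rubinstein inequality on the compact $B$ gives $\int h(y,x)\,\d(P_{X|Y=y}-P_{Z|Y=y})(x)\le W_1(P_{X|Y=y},P_{Z|Y=y})$. Integrating against $P_Y$ and applying the disintegration formula \eqref{disint_1} to both $P_{Y,X}$ and $P_{Y,Z}$ yields the inequality. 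Note that the integrand is Borel in $y$, since countably many operations on Borel disintegrations preserve measurability, so the integrations are well defined.

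For the reverse inequality ``$\le$'', I would fix $\varepsilon>0$ and construct $\bar h_\varepsilon$ by measurable selection. Fix $x_0\in B$ and consider $\mathcal L:=\{f\in\mathrm{Lip}_1(B):f(x_0)=0\}$, which is compact in $C(B)$ by Arzel\`a--Ascoli (using $\|f\|_\infty\le\mathrm{diam}(B)$ on the compact $B$) and therefore separable. Pick a countable dense sequence $\{f_n\}\subset\mathcal L$ and set $\varphi_n(y):=\int f_n\,\d(P_{X|Y=y}-P_{Z|Y=y})$, which is Borel in $y$ by measurability of the disintegration. Since any candidate in $\mathrm{Lip}_1(B)$ differs from an element of $\mathcal L$ by a constant, which cancels in $\varphi_n$, we have $W_1(P_{X|Y=y},P_{Z|Y=y})=\sup_n\varphi_n(y)$. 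Letting $n(y)$ be the least $n$ with $\varphi_n(y)>\sup_k\varphi_k(y)-\varepsilon$ yields a Borel map $n:A\to\mathbb{N}$, so $\bar h_\varepsilon(y,x):=f_{n(y)}(x)$ is jointly Borel, bounded by $\mathrm{diam}(B)$, uniformly $1$-Lipschitz in $x$, and by disintegration satisfies $\mathbb{E}_{Y,X}[\bar h_\varepsilon]-\mathbb{E}_{Y,Z}[\bar h_\varepsilon]\ge W_{1,Y}(P_{Y,X},P_{Y,Z})-\varepsilon$.

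The main obstacle is upgrading $\bar h_\varepsilon$ from Borel-measurable to upper semi-continuous while preserving the uniform $1$-Lipschitz constraint in $x$; this must be done jointly, since we may not simply mollify without care. I would do so by Lusin's theorem applied to the Radon measure $P_Y$ on the compact $A$: for any $\delta>0$ there is a compact $K_\delta\subset A$ with $P_Y(A\setminus K_\delta)<\delta$ on which $y\mapsto\bar h_\varepsilon(y,\cdot)\in C(B)$ is uniformly continuous. I then take the McShane inf-convolution in $y$,
\[h_\varepsilon(y,x):=\inf_{y'\in K_\delta}\bigl(\bar h_\varepsilon(y',x)+L\|y-y'\|\bigr).\]
A direct argument with the defining infimum shows $h_\varepsilon$ is $L$-Lipschitz in $y$ and still $1$-Lipschitz in $x$ (as an infimum over $y'$ of functions each $1$-Lipschitz in $x$), hence jointly continuous, bounded, and in $\mathcal F$. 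Uniform continuity on $K_\delta$ gives $h_\varepsilon\to\bar h_\varepsilon$ uniformly on $K_\delta\times B$ as $L\to\infty$, and both functions are bounded by $\mathrm{diam}(B)$ off $K_\delta$; thus $|\mathbb{E}_{Y,X}[h_\varepsilon-\bar h_\varepsilon]|$ and $|\mathbb{E}_{Y,Z}[h_\varepsilon-\bar h_\varepsilon]|$ can be made arbitrarily small by first choosing $\delta$ small and then $L$ large. Letting $\varepsilon\downarrow 0$ completes the proof.
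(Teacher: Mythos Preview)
Your approach is genuinely different from the paper's. The paper proceeds via abstract Lagrangian duality: it writes $W_{1,Y}$ as $\inf_{\alpha}\sup_{f,g}L(\alpha,f,g)$ over measures supported on the $y$-diagonal, invokes a minimax theorem to swap $\inf$ and $\sup$, and then obtains a function in $\mathcal F$ by the $c$-transform $\tilde f(y,x)=\inf_u\{\|x-u\|-g(y,u)\}$, which is automatically upper semi-continuous as a pointwise infimum of continuous functions. Your route through Proposition~\ref{cond:plan}(i), Arzel\`a--Ascoli, countable measurable selection, and Lusin/McShane regularization is more elementary (no abstract minimax) and in fact produces a \emph{continuous} (even Lipschitz) near-optimizer, which is stronger than the upper semi-continuity asserted in the statement. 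Both directions of your argument for ``$\ge$'' and the measurable-selection step are correct.

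There is, however, a small slip in the regularization step. Your claim that ``both functions are bounded by $\mathrm{diam}(B)$ off $K_\delta$'' fails for $h_\varepsilon$: for $y\notin K_\delta$ one only has $h_\varepsilon(y,x)\le \mathrm{diam}(B)+L\cdot\mathrm{dist}(y,K_\delta)$, so the off-$K_\delta$ error is of order $L\delta$, and the stated order ``first $\delta$ small, then $L$ large'' does not control it. The fix is immediate: replace $h_\varepsilon$ by its truncation $\min(h_\varepsilon,\mathrm{diam}(B))$, which remains continuous, bounded in $[-\mathrm{diam}(B),\mathrm{diam}(B)]$, and $1$-Lipschitz in $x$ (as the minimum of two $1$-Lipschitz-in-$x$ functions). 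Since $\bar h_\varepsilon$ already takes values in $[-\mathrm{diam}(B),\mathrm{diam}(B)]$, truncation can only decrease $|h_\varepsilon-\bar h_\varepsilon|$ on $K_\delta\times B$, while off $K_\delta$ the error is now at most $2\,\mathrm{diam}(B)\,\delta$, uniformly in $L$. With this correction your argument goes through.
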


\section{Geodesics and Velocity Fields} \label{sec:velocity}
In this section, we deal with geodesics and velocity fields
in $\left( \mathcal P_{Y,2}(\mathbb R^d \times \R^m),W_{2,Y} \right)$.
We restrict our attention to   $p=2$ and $A = \mathbb R^d$, $B = \R^m$.
Coming from inverse problems, we have considered probability measures $P_{Y,X}$ related to random variables $(Y,X) \in \R^d \times \R^m$. When switching to flows, it is more convenient to address equivalently just probability measures on $\R^d \times \R^m$.

Let us recall some results, which can be found, e.g. in \citep{ambrosio2005gradient} for our setting.
A curve $\mu \colon [0,1] \to \P_2(\R^d \times \R^m)$ is called a \emph{geodesic} if 
\begin{equation}    \label{eq:geodesic}
    W_2(\mu_s, \mu_t) = |s - t| W_2(\mu_0,\mu_1) \quad \text{for all} \quad s, t \in [0,1].
\end{equation}
The Wasserstein space is geodesic, 
i.e.\ any two measures $\mu_0, \mu_1 \in \P_2(\R^d \times \R^m)$ 
can be connected by a geodesic. 
Let $e_t:(\R^d\times \R^m)^2 \to \R^n\times \R^m$, $t \in [0,1]$ be defined by
\begin{equation} \label{e_t}
e_t(y_1,x_1,y_2,x_2) \coloneqq 
\left( (1-t)\pi^{1,2} + t \pi^{3,4} \right)(y_1,x_1,y_2,x_2) 
= (1-t)(y_1,x_1) + t(y_2,x_2) .
\end{equation}
Any geodesic $\mu \colon [0,1] \to \P_2(\R^d \times \R^m)$ connecting $\mu_0, \mu_1 \in \P_2(\R^d \times \R^m)$  is determined by an optimal plan $\alpha \in \Gamma(\mu_0, \mu_1)$ in \eqref{eq:wasserstein} via
    \begin{equation} \label{eq:geodesic_plan}
      \mu_t \coloneqq (e_t)_{\sharp} \alpha, \quad t \in [0,1].
    \end{equation}
Conversely, any optimal plan $\alpha \in \Gamma(\mu_0, \mu_1)$ gives by \eqref{eq:geodesic_plan} rise to a geodesic connecting $\mu_0$ and $\mu_1$. The following lemma considers curves defined by  \eqref{eq:geodesic_plan} 
which connect measures $\mu_0, \mu_1 \in \P_{2,Y}(\R^d \times \R^m)$. 
Its proof is given in the appendix and is similar to \citep[Theorem 7.2.2]{ambrosio2005gradient}.

\begin{lemma}\label{plan:geo_dis}
Let $\mu_0,\mu_1\in \P_{2,Y}(\R^d\times \R^m)$ and 
let $\alpha\in \Gamma_{Y}^4(\mu_0,\mu_1)$ be an optimal plan in \eqref{definition}. 
Then the following holds true.
\begin{itemize}
\item[i)]
The curve
$
\mu_t \coloneqq (e_t)_{\sharp}\alpha
$
is a geodesic in $(\P_{2,Y}(\R^d \times \R^m), W_{2,Y})$.
\item[ii)] The curve  
$(\mu_t)_{y}:=(1-t)\pi^1+t\pi^2)_\sharp\alpha_{y,y}$ is a disintegration of $\mu_t$ with respect to $P_Y$. Further,  
$(\mu_t)_{y}$ is a geodesic in $(\P_2(\R^m), W_2)$ for $P_Y$-a.e. $y\in\R^d$.
\item[iii)] $\mu_t$ is weakly continuous.
\end{itemize}
\end{lemma}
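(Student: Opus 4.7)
The plan is to prove the three parts in sequence, leveraging the optimality of $\alpha$ and Proposition~\ref{cond:plan}, and mimicking the classical argument for $W_2$-geodesics adapted to the restricted set $\Gamma_Y^4$. Throughout, I exploit that the constraint $\pi^{1,3}_\sharp\alpha=\Delta_\sharp P_Y$ forces $y_1=y_2$ for $\alpha$-a.e.\ $(y_1,x_1,y_2,x_2)$.

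For (i), I first verify that $\mu_t\in\P_{2,Y}$: the $\R^d$-component of $e_t(y_1,x_1,y_2,x_2)=(1-t)(y_1,x_1)+t(y_2,x_2)$ equals $y_1$ $\alpha$-a.e., hence $\pi^1_\sharp\mu_t=\pi^1_\sharp\alpha=P_Y$. Next, I construct a competitor for $W_{2,Y}(\mu_s,\mu_t)$ by pushing $\alpha$ forward under $(e_s,e_t)\colon(\R^d\times\R^m)^2\to(\R^d\times\R^m)^2$. Since both $\R^d$-components of $(e_s,e_t)$ coincide with $y_1$ on the support of $\alpha$, we have $(e_s,e_t)_\sharp\alpha\in\Gamma_Y^4(\mu_s,\mu_t)$, and a direct computation gives
$$
W_{2,Y}^2(\mu_s,\mu_t)\le\int\|e_s-e_t\|^2\,\d\alpha=(t-s)^2\int\|(y_1,x_1)-(y_2,x_2)\|^2\,\d\alpha=(t-s)^2 W_{2,Y}^2(\mu_0,\mu_1).
$$
The reverse inequality follows from the standard triangle-inequality argument: bounding $W_{2,Y}(\mu_0,\mu_1)$ by $W_{2,Y}(\mu_0,\mu_s)+W_{2,Y}(\mu_s,\mu_t)+W_{2,Y}(\mu_t,\mu_1)$ and inserting the upper bounds just obtained forces equality throughout, yielding \eqref{eq:geodesic} in the conditional metric.

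For (ii), I compute the disintegration of $\mu_t$ directly. Writing $\d\alpha=\d\alpha_{y,y}(x_1,x_2)\,\d P_Y(y)$ via the disintegration of $\alpha$ with respect to $\pi^{1,3}_\sharp\alpha=\Delta_\sharp P_Y$, for any bounded Borel $f\colon\R^d\times\R^m\to\R$,
$$
\int f\,\d\mu_t=\int\int f\bigl(y,(1-t)x_1+t x_2\bigr)\,\d\alpha_{y,y}(x_1,x_2)\,\d P_Y(y),
$$
so by uniqueness of disintegration $(\mu_t)_y=((1-t)\pi^1+t\pi^2)_\sharp\alpha_{y,y}$ for $P_Y$-a.e.\ $y$. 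Proposition~\ref{cond:plan}(ii) identifies $\alpha_{y,y}$ as an optimal $W_2$-plan between $(\mu_0)_y$ and $(\mu_1)_y$, and the classical characterization of $W_2$-geodesics via optimal plans then shows that $(\mu_t)_y$ is a $W_2$-geodesic in $\P_2(\R^m)$. Part (iii) follows immediately: for any bounded continuous $f$, the map $t\mapsto e_t(y_1,x_1,y_2,x_2)$ is continuous pointwise, so dominated convergence applied to $\int f\circ e_t\,\d\alpha$ yields weak continuity of $t\mapsto\mu_t$.

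The only delicate step is verifying that $(e_s,e_t)_\sharp\alpha$ actually belongs to $\Gamma_Y^4(\mu_s,\mu_t)$; this hinges on the $\Gamma_Y^4$-constraint collapsing the two $y$-coordinates of $\alpha$ onto a common value, after which the cost computation and disintegration manipulations are bookkeeping. I anticipate no substantial obstacle beyond this observation.
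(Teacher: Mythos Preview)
Your proposal is correct and follows essentially the same argument as the paper: you construct the competitor $(e_s,e_t)_\sharp\alpha\in\Gamma_Y^4(\mu_s,\mu_t)$ to obtain the upper bound, close with the triangle inequality, identify the disintegration of $\mu_t$ via Proposition~\ref{cond:plan}(ii), and finish weak continuity by dominated convergence. The only cosmetic difference is that the paper verifies $\pi^{1,3}_\sharp(e_s,e_t)_\sharp\alpha=\Delta_\sharp P_Y$ by an explicit pushforward computation rather than invoking the $\alpha$-a.e.\ equality $y_1=y_2$, but the content is identical.
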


By the following proposition, the above geodesic $\mu_t$ has an associated vector field $v_t$ such that $(\mu_t,v_t)$ satisfy a continuity equation. Moreover, informally speaking, 
the associated vector field $v_t$
does not transport any mass in the $y$-component. This is related to the observation in \citep[Section 4.2]{du2023nonparametric} and \cite[Proposition 3.21]{peszek2023heterogeneous}.

\begin{proposition}\label{prop:continuity}
Let $\mu_0,\mu_1\in \P_{2,Y}(\R^d\times \R^m)$. 
Let $\alpha\in \Gamma_{Y}^4(\mu_0,\mu_1)$ be an optimal plan in \eqref{definition}
and $\mu_t = (e_t)_\sharp \alpha$, $t \in [0,1]$.
Then there exists a Borel measurable vector field $v:[0,1]\times \R^{d}\times \R^m\to \R^d\times \R^m,\, v(t,y,x)=v_t(y,x)$ with $v_t\in L^2_{\mu_t}(\R^{d} \times \R^m,\R^{d} \times \R^m)$ for a.e. $t\in[0,1]$ such that 
the following relations are fulfilled:
\begin{itemize}
\item[i)] $(e_t)_\sharp\left((y_2,x_2)-(y_1,x_1)\right)\alpha)=v_t\mu_t$ for a.e. $t\in[0,1]$,
\item[ii)] $\|v_t\|_{L^2_{\mu_t}} \leq W_{2,Y}(\mu_0,\mu_1)$ for a.e. $t\in[0,1]$,
\item[iii)] for $j \leq d$ we have that  $v_j=0$ for all $(t,y,x)\in[0,1]\times\R^d\times \R^m
$,
\item[iv)] $\mu_t,v_t$ fulfill the continuity equation 
$$\partial_t \mu_t + \nabla\cdot(v_t\mu_t)=0$$
in a distributional sense, i.e. we have for all $\varphi\in C^\infty_c((0,1)\times\R^d \times \R^m)$ that
\begin{align}
\int_0^1\int_{\R^d \times \R^m} \frac{\partial}{\partial t}\varphi  + \langle \nabla_{y,x} \phi,v_t\rangle \, \d \mu_t\d t=0.
\end{align}
\end{itemize}
Here $C_c^{\infty}$ denotes the smooth functions with compact support.
\end{proposition}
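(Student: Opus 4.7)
The plan is to construct $v_t$ by a disintegration argument analogous to the standard existence of velocity fields along Wasserstein geodesics (compare \cite[Proposition 8.4.6]{ambrosio2005gradient}), with the $Y$-coupling restriction giving property (iii) for free. First I would introduce the displacement map $D(y_1,x_1,y_2,x_2) := (y_2-y_1,\, x_2-x_1)$ and form the joint measure $\eta_t := (e_t, D)_\sharp \alpha$ on $(\R^d\times\R^m)\times(\R^d\times\R^m)$. Its first marginal is $\mu_t$, so the disintegration theorem \cite[Theorem 5.3.1]{ambrosio2005gradient} yields a Borel family $\{\eta_{t,(y,x)}\}$, and I would define $v_t(y,x) := \int w\, \d\eta_{t,(y,x)}(w)$, the conditional expectation of $D$ given $e_t = (y,x)$. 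Since $\alpha$ is optimal, $D \in L^2(\alpha)$, so a straightforward application of Jensen's inequality shows $v_t \in L^2_{\mu_t}$.

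With this construction, (iii) is essentially automatic: the defining constraint $\pi^{1,3}_\sharp\alpha = \Delta_\sharp P_Y$ forces $y_1 = y_2$ for $\alpha$-a.e.\ $(y_1,x_1,y_2,x_2)$, so the first $d$ components of $D$ vanish $\alpha$-a.s., and consequently so do those of $v_t$. Property (i) is exactly the identity $\int \phi \cdot v_t\, \d\mu_t = \int \phi(e_t) \cdot D\, \d\alpha$ for bounded Borel vector-valued test functions $\phi$, which follows from the definition of $v_t$ via Fubini applied to the disintegration of $\eta_t$. Property (ii) follows from Jensen applied to the conditional expectation, $|v_t(y,x)|^2 \leq \int |w|^2\, \d\eta_{t,(y,x)}(w)$; integrating against $\mu_t$ and pushing back to $\alpha$ gives $\|v_t\|_{L^2_{\mu_t}}^2 \leq \int \|D\|^2\, \d\alpha = W_{2,Y}^2(\mu_0,\mu_1)$ by the definition of $W_{2,Y}$ and optimality of $\alpha$. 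For (iv), I would fix $\varphi \in C_c^\infty(\R^d\times\R^m)$ and differentiate the identity $\int\varphi\, \d\mu_t = \int \varphi(e_t)\, \d\alpha$ in $t$; since $\tfrac{\d}{\d t}e_t = D$ and $|\nabla\varphi|$ is bounded, dominated convergence gives $\tfrac{\d}{\d t}\int\varphi\, \d\mu_t = \int \nabla\varphi(e_t)\cdot D\, \d\alpha$, and applying (i) rewrites the right-hand side as $\int\langle\nabla\varphi,v_t\rangle\, \d\mu_t$, which is the distributional continuity equation.

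The main obstacle is not any individual step but rather verifying the Borel measurability and well-posedness of the disintegration used to define $v_t$; this is routine and handled by the disintegration theorem. The genuinely new ingredient compared to the classical unconditional case is (iii), which uses only the defining property of $\Gamma_Y^4$; all other steps follow the standard template. The only inputs from earlier in the paper are the existence of an optimal $\alpha \in \Gamma_Y^4(\mu_0,\mu_1)$ and the identity $W_{2,Y}^2(\mu_0,\mu_1) = \int \|(y_1,x_1)-(y_2,x_2)\|^2\, \d\alpha$ from \eqref{definition}.
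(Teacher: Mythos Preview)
Your proposal is correct and follows essentially the same route as the paper: the paper defers parts i), ii), iv) to \cite[Lemma 17.3, Theorem 17.2]{ambrosio2021lectures}, which is precisely the disintegration/conditional-expectation construction you spell out, and the paper's proof of iii) likewise reduces to the observation that $(y_2)_j-(y_1)_j=0$ $\alpha$-a.e.\ because $\pi^{1,3}_\sharp\alpha=\Delta_\sharp P_Y$. One minor remark: $D\in L^2(\alpha)$ follows already from $\mu_0,\mu_1\in\P_2$ (finite second moments of the marginals), not from optimality of $\alpha$; optimality is only needed to identify $\int\|D\|^2\,\d\alpha$ with $W_{2,Y}^2(\mu_0,\mu_1)$ in part ii).
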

The proof is given in Appendix \ref{space:properties} and parts $i),ii),iv)$ are adapted from the proofs of \citep[Theorem 17.2, Lemma 17.3.]{ambrosio2021lectures}

Furthermore, since by Lemma \ref{plan:geo_dis} iii), a geodesic induced by an optimal $W_{2,Y}$ plan is weakly continuous, we obtain the following proposition 
from \citep[Proposition 8.1.8]{ambrosio2005gradient} 
which gives a connection to a flow equation and is needed for flow matching.

\begin{proposition}
Let $\mu_0,\mu_1\in \P_{2,Y}(\R^d\times \R^m)$. 
Let $\alpha\in \Gamma_{Y}^4(\mu_0,\mu_1)$ be an optimal plan in \eqref{definition}
and $\mu_t = (e_t)_\sharp \alpha$, $t \in [0,1]$.
Assume that  the corresponding Borel vector field $v_t$ from Proposition \ref{prop:continuity} fulfills 
\begin{align} \label{star}
    \int_0^1\left(  \sup_B(\|v_t\|_{ L^2_{\mu_t} })+\mathrm{Lip}(v_t,B) \right)\d t<\infty
\end{align}
for all compact subsets $B\subset \R^d \times \R^m$, where $\mathrm{Lip}(v_t,B)$ denotes the Lipschitz constant of $v_t$ on $B$. Then, for $\mu_0$-a.e. $(y,x)\in\R^d \times \R^m$, the ODE 
\begin{align}
&\frac{\d}{\d t}\phi_t=v_t(\phi_t),\\
& \phi_0(y,x)= (y,x),
\end{align}
admits a unique global solution and $\mu_t= ( \phi_t )_\sharp \mu_0$, $t \in [0,1]$.
\end{proposition}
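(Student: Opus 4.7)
My plan is simply to verify the hypotheses of \cite[Proposition 8.1.8]{ambrosio2005gradient}, the classical flow-representation result in Wasserstein geometry. That result asserts that whenever $(\mu_t)_{t\in[0,1]}$ is absolutely continuous in $(\P_2(\R^d\times\R^m), W_2)$, a Borel vector field $v_t$ satisfies the continuity equation together with $\mu_t$, and the integrability/Lipschitz bound \eqref{star} holds, then the ODE $\dot\phi_t = v_t(\phi_t)$ is well-posed for $\mu_0$-a.e.\ initial datum and satisfies $\mu_t = (\phi_t)_\sharp \mu_0$. Two of these ingredients are immediate from what we already have: \eqref{star} is the very assumption of the proposition, and Proposition \ref{prop:continuity}(iv) asserts that the continuity equation is satisfied in the distributional sense. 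The only real task is therefore to confirm that $\mu_t$ is absolutely continuous as a curve in the \emph{unrestricted} space $(\P_2, W_2)$, not merely in $(\P_{2,Y}, W_{2,Y})$.

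This is the point where a small amount of care is needed, but the definition of $W_{2,Y}$ does the work for us. Because $\Gamma_Y^4(\mu_0,\mu_1) \subseteq \Gamma(\mu_0,\mu_1)$ by \eqref{y-coupling}, restricting the admissible couplings can only increase the transport cost, which gives the general comparison
$$W_2(\mu_s,\mu_t) \le W_{2,Y}(\mu_s,\mu_t) \quad \text{for all } s,t \in [0,1].$$
Combining this with the geodesic identity from Lemma \ref{plan:geo_dis}(i),
$$W_{2,Y}(\mu_s,\mu_t) = |s-t|\, W_{2,Y}(\mu_0,\mu_1),$$
shows that $\mu_t$ is in fact Lipschitz, and in particular absolutely continuous, as a curve in $(\P_2(\R^d\times\R^m), W_2)$, with Lipschitz constant bounded by $W_{2,Y}(\mu_0,\mu_1)$.

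With all three ingredients now verified, \cite[Proposition 8.1.8]{ambrosio2005gradient} applies verbatim and yields both the $\mu_0$-a.e.\ existence and uniqueness of a global flow $\phi_t$ of $v_t$ and the push-forward identity $\mu_t = (\phi_t)_\sharp \mu_0$ on $[0,1]$. The only conceptual subtlety is the ``change of metric'' step that moves from $W_{2,Y}$ to $W_2$; everything else is a direct invocation of the AGS machinery. I do not anticipate any genuine obstacle, since the properties collected in Proposition \ref{prop:continuity} (continuity equation, $L^2_{\mu_t}$-bound on $v_t$) and the standing hypothesis \eqref{star} were engineered precisely to match the input required by the classical flow-representation theorem.
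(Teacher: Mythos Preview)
Your proposal is correct and follows essentially the same route as the paper: both reduce the statement to a direct application of \cite[Proposition 8.1.8]{ambrosio2005gradient}, using Proposition \ref{prop:continuity}(iv) for the continuity equation and the standing hypothesis \eqref{star} for the regularity of $v_t$. The only minor difference lies in how the required continuity of the curve $t\mapsto\mu_t$ is checked: the paper simply cites the weak continuity established in Lemma \ref{plan:geo_dis}(iii), whereas you derive the stronger Lipschitz property in $(\P_2,W_2)$ from Lemma \ref{plan:geo_dis}(i) together with the comparison $W_2\le W_{2,Y}$. Both arguments suffice for \cite[Proposition 8.1.8]{ambrosio2005gradient}, so this is a cosmetic variation rather than a different approach.
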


For special cases we can drop the requirement \eqref{star} on the Borel vector field as the following proposition, which is proved in the Appendix \ref{space:properties} , shows.

\begin{proposition}\label{prop:flow_exists}
For $y_i \in \R^d$, $i=1,\ldots,n$, let $P_Y \coloneqq \frac 1 n\sum_{i=1}^n\delta_{y_i}$.
Let $\mu_0,\mu_1\in \P_{2,Y}(\R^d\times \R^m)$ fulfill one of the following conditions:
\begin{itemize}
    \item[i)] $\mu_{0,y},\mu_{1,y}$ are empirical measures with the same number of particles $N\in \N$ for $P_Y$ a.e. $y\in \R^d$. Let $T_{y_i}$ be a choice of optimal transport maps between $\mu_{0,y_i}$ and $\mu_{1,y_i}$ and let $\alpha$ be the corresponding optimal plan $\alpha\in \Gamma^4_{Y}(\mu_0,\mu_1)$, or
    \item[ii)] $\mu_{0,y},\mu_{1,y}$ for $P_Y$-a.e. $y\in\R^d$ 
    are absolutely continuous with densities $\rho_{0,y},\rho_{1,y}$ which are supported on open, convex, bounded, connected, subsets $\Omega_{0,y},\Omega_{1,y}$ on which they are bounded away from $0$ and $\infty$. 
    Assume further that $\rho_{0,y}\in C^2(\Omega_{0,y}),\rho_{1,y}\in C^2(\Omega_{1,y})$ and let $T_{y}$ be the associated optimal transport maps and 
    $\alpha\in \Gamma^4_{Y}(\mu_0,\mu_1)$ be the associated optimal transport plan.
    
\end{itemize}
Let $\mu_t = (e_t)_\sharp \alpha$ with associated vector field $v_t\in L^2_{\mu_t}$, where $(v_t)_j=0$ for all $j\leq d$. Then there is a representative of $v_t$ such that the flow equation
\begin{align}
&\frac{\d}{\d t}\phi_t=v_t(\phi_t)\\
&\phi_0(y,x)=(y,x)
\end{align}
admits a global solution and $\mu_t= (\phi_t)_\sharp \mu_0$. 
Furthermore, for $T \in L^2_{\mu_0}$ defined by $T(y_i,x) \coloneqq (y_i,T_{y_i}(x)) $, we have 
\begin{align}
v_t(\phi_t(y,x))=T(y,x)-(y,x)=\left(0,\pi^2\circ T(y,x)-x\right)
\end{align}
for $\mu_0$-a.e. $(y,x)\in \R^{d} \times \R^m$.
\end{proposition}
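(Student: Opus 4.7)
The plan is to exploit the atomic structure of $P_Y$ and reduce to a slice-by-slice analysis. Because $P_Y=\tfrac1n\sum_i\delta_{y_i}$, the measures $\mu_j\in\P_{2,Y}$ decompose as $\mu_j = \tfrac1n\sum_{i=1}^{n}\delta_{y_i}\otimes\mu_{j,y_i}$ for $j=0,1$, and Proposition \ref{cond:plan}~iii) lets me build the optimal plan $\alpha\in\Gamma_Y^4$ by stacking the slice-wise Monge couplings $\alpha_{y_i}=(\mathrm{Id},T_{y_i})_\sharp\mu_{0,y_i}$. Lemma \ref{plan:geo_dis} then gives $\mu_t = \tfrac1n\sum_i\delta_{y_i}\otimes(\mu_t)_{y_i}$ with the McCann interpolation $(\mu_t)_{y_i}=\bigl(\mathrm{Id}+t(T_{y_i}-\mathrm{Id})\bigr)_\sharp\mu_{0,y_i}$. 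I would define the candidate flow
\[
\phi_t(y_i,x):=\bigl(y_i,\,x+t(T_{y_i}(x)-x)\bigr),
\]
so that $\phi_0=\mathrm{Id}$ and $(\phi_t)_\sharp\mu_0=\mu_t$ by construction. What remains is to exhibit a representative $\tilde v_t$ of the velocity field from Proposition \ref{prop:continuity} such that $\phi_t$ solves $\dot\phi_t=\tilde v_t(\phi_t)$.

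For case (i), each $\mu_{0,y_i}$ has finitely many atoms $x_1,\ldots,x_k$, and the segments $t\mapsto x_j+t(T_{y_i}(x_j)-x_j)$ are pairwise disjoint on $(0,1)$ by cyclical monotonicity of $\mathrm{supp}\bigl((\mathrm{Id},T_{y_i})_\sharp\mu_{0,y_i}\bigr)$: a crossing at some $t^\ast\in(0,1)$ would permit a cost-decreasing swap between the two endpoint pairs, since $(1-t^\ast)(x_j-x_k)=t^\ast(T_{y_i}(x_k)-T_{y_i}(x_j))$ implies $(x_j-x_k)\cdot(T_{y_i}(x_k)-T_{y_i}(x_j))>0$. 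Hence the prescription $\tilde v_t\bigl(\phi_t(y_i,x_j)\bigr):=\bigl(0,T_{y_i}(x_j)-x_j\bigr)$ is unambiguous on $\mathrm{supp}(\mu_t)$, and it extends to a locally Lipschitz field on $\R^d\times\R^m$ by any standard Lipschitz-extension argument; the ODE is linear along each trajectory, with the straight line as its unique global solution. For case (ii), Caffarelli's regularity theorem applies under the stated hypotheses (bounded, $C^2$, bounded-away-from-zero densities on convex bounded open sets), giving $T_{y_i}\in C^{1}(\Omega_{0,y_i})$; since $T_{y_i}=\nabla u_{y_i}$ for a strictly convex Brenier potential, the Jacobian $(1-t)I+t\,DT_{y_i}$ is positive definite for every $t\in[0,1]$, so $x\mapsto x+t(T_{y_i}(x)-x)$ is a $C^{1}$-diffeomorphism from $\Omega_{0,y_i}$ onto its image. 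Setting $\tilde v_t(y_i,z):=(0,\,T_{y_i}(S_t^i(z))-S_t^i(z))$ with $S_t^i$ the inverse of the displacement yields a locally Lipschitz field, so the ODE has unique global solutions coinciding with $\phi_t$.

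Finally I identify $\tilde v_t$ with $v_t$ by unpacking Proposition \ref{prop:continuity}~i): $v_t\mu_t=(e_t)_\sharp\bigl(((y_2,x_2)-(y_1,x_1))\alpha\bigr)$, and since $\alpha\in\Gamma_Y^4$ forces $y_1=y_2$ on $\mathrm{supp}(\alpha)$, this reduces on $\mathrm{supp}(\mu_t)$ to $v_t(y_i,\,x+t(T_{y_i}(x)-x))=(0,T_{y_i}(x)-x)$, which is exactly $\tilde v_t$. The claimed formula $v_t(\phi_t(y,x))=T(y,x)-(y,x)=(0,\pi^2\circ T(y,x)-x)$ follows at once. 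I expect the main obstacle to be case (ii): ensuring that the displacement map is a genuine diffeomorphism on each slice relies on the full strength of Caffarelli regularity, which in turn requires the bounded-density, convex, bounded, connected support hypotheses; without them the pullback through $S_t^i$ would be ill-defined and the global existence argument for the ODE would break down. Case (i) is comparatively elementary, since the non-crossing lemma reduces everything to a finite family of linear flows.
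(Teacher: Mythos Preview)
Your proposal is correct and follows essentially the same route as the paper: decompose $\mu_0,\mu_1$ and the optimal plan $\alpha$ along the finitely many atoms $y_i$, build the flow slice-by-slice as the displacement interpolation $\phi_t(y_i,x)=(y_i,(1-t)x+tT_{y_i}(x))$, and in case (ii) invoke Caffarelli regularity to make $T_t$ a $C^1$-diffeomorphism so that $v_t=(T-\mathrm{Id})\circ T_t^{-1}$ is well defined. The paper organises this by first isolating the single-slice statement as a separate auxiliary proposition and then stacking over $y_i$ (noting that measurability of the stacked objects is automatic since $P_Y$ is a finite sum of Diracs); your non-crossing argument via cyclical monotonicity for case (i) is a detail the paper simply elides.
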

 The following proposition is a consequence of \cite[Corollary 1.2]{gonzalez2024linearization}. We only give an informal formulation here and refer for the details to Proposition \ref{prop:app_existence_ode} in the appendix. Notably, this helps us to overcome measurability issues when working with continuous $P_Y$ and therefore is applicable to a broader class of inverse problems.
\begin{proposition}\label{prop:cont_Monge}
Let $P_Y\in \P_2(\R^d)$, $\mu_0=P_Y\times \mu^Z_0$ and let $\mu_1=\mu_1^y\times_yP_Y$ with density $p_1^y$ of $\mu_1^y$. Assume further that the map $y\mapsto p_1^y$ is a $C^1$ map. Under  relatively mild assumptions, see Proposition \ref{prop:app_existence_ode}, the following statements hold true. There exists a $W_{2,Y}$-optimal transport map $T:(y,x)\mapsto (y,T_y(x))$ i.e. $\alpha=(\Id, T)_\sharp \mu_0\in \Gamma_{o,Y}(\mu_0,\mu_1)$, where $T_y$ is the optimal transport map for $\mu_0^Z$ and $\mu_1^y$.
Let $\mu_t \coloneqq (e_t)_\sharp \alpha$ with associated vector field $v_t\in L^2_{\mu_t}$, where $(v_t)_j=0$ for all $j\leq d$. Then there is a representative of $v_t$ such that the flow equation
\begin{align}
\frac{\d}{\d t}\phi_t=v_t(\phi_t); \quad\phi_0(y,x)=(y,x)
\end{align}
admits a global solution and $\mu_t= (\phi_t)_\sharp \mu_0$. 
Furthermore, we have 
\begin{align}
v_t(\phi_t(y,x))=T(y,x)-(y,x)=\left(0,T_y(x)-x\right)
\end{align}
for $\mu_0$-a.e. $(y,x)\in \R^{d} \times \R^m$.
\end{proposition}

\section{Relaxation of the Conditional Wasserstein Distance} \label{sec:relax}
When working with conditional Wasserstein distances, we are facing the following problems:

\begin{enumerate}
    \item We cannot use standard optimal transport algorithms like \citep{flamary2021pot} out of the box.
    \item Assume that $P_Y$ is not empirical and let $\mu\in \P_{2,Y}(\R^d \times \R^m)$. Then it is impossible to approximate $\mu$ by empirical measures in the $W_{2,Y}$ topology, since $\P_{2,Y}(\R^d \times \R^m)$ does not contain any empirical measures.
    \item Assume that we are interested in the optimal plan $\alpha\in\Gamma_Y^4(\mu_0,\mu_1)$, but we are only given empirical measures $\mu_{n,0},\mu_{n,1}$, which are $W_2$ approximations of $\mu_0,\mu_1$. Let $Y_n$ be a random variable distributed as $\pi^1_\sharp\mu_{n,0}$. Even if we assume that $\pi^1_\sharp\mu_{n,0}=\pi^1_\sharp\mu_{n,1}$,
    Example \ref{ex:emp_lim} shows that we cannot guarantee that there exists a sequence of the optimal plans $\alpha_n\in\Gamma_{Y_n}^4(\mu_{n,0},\mu_{n,1})$ that converges in any meaningful sense to $\alpha$.
\end{enumerate}

\begin{example}\label{ex:emp_lim}
We consider independent, standard normally distributed random variables $Y,X,Z \in \R$.
Let $\mu=\nu:=P_{Y,X}$. Now we sample $(y_i,x_i,z_i)\sim (Y,X,Z)$ 
and define 
$$\mu_n:= \frac{1}{n}\sum_{i=1}^n\delta_{y_i,x_i}, \quad \nu_n:=\frac{1}{n}\sum_{i=1}^n\delta_{y_i,z_i},$$ i.e. $\mu_n \to \mu$ and $\nu_n \to\nu$ as $n \to \infty$ in the $W_2$-topology. Note that we cannot compare $\mu_n$ and $\mu$ in the $W_{2,Y}$ topology since $\mu_n\notin \P_{2,Y}(\R\times\R)$.
Let $Y_n$ be a random variable distributed like $\frac 1 n\sum_{i=1}^n\delta_{y_i}$. 
Then with probability one $\Gamma_{Y_n}^4(\mu_{n},\nu_{n})$ 
contains exactly one plan 
$\alpha_n= \frac{1}{n}\sum_{i=1}^n\delta_{y_i,x_i,y_i,z_i} $
which is clearly optimal. 
Let $\Delta:\R^{3}\to \R^{4}$ be defined by $\Delta(y,x,z)=(y,x,y,z)$. Then $\hat{\alpha}:=\lim_{n}\alpha_n=\Delta_\sharp \left(P_{Y}\otimes P_X\otimes P_Z\right)$ in the $W_2$-topology.  Moreover, $\hat{\alpha}\in \Gamma^4_{Y}(\mu,\nu)$ and
\begin{align}
\int_{R^{4}}\|(y_1,x_1)-(y_2,x_2)\|^2\d \hat{\alpha}
&= \int_{\R^{3}}\|(y_1,x_1)-(y_1,x_2)\|^2 \, \d(P_{Y}\otimes P_X\otimes P_Z)\\
&=\int_{\R^{2}}\|x_1-x_2\|^2\, \d(P_{X}\otimes P_Z)>0 
= W_{2,Y}(\mu,\nu).
\end{align}

Hence $\hat{\alpha}$ is not an optimal coupling, although it is a limit of optimal couplings in the $W_2$ sense.
\end{example}

In order to overcome the above drawbacks, we define a cost function for 
which the OT plan $\alpha \in \Gamma(\mu_0,\mu_1)$ only approximately fulfills $\pi^{1,3}_{\sharp}\alpha=\Delta_{\sharp}P_Y$: 
$$
d_{\beta}^p ((y_1,x_1),(y_2,x_2)) = \Vert x_1 - x_2 \Vert^p + \beta \Vert y_1-y_2 \Vert^p, \quad
p \in [1,\infty), \; \beta >0.
$$ 
For large values of $\beta$, it  is very costly to move mass in $y$-direction.
Then we  denote by $W_{p,\beta}$ the OT distance with respect to the cost $d_\beta^p$, i.e. for $\mu_0,\mu_1\in \P_p(A\times B)$ we set
\[
W_{p,\beta}(\mu_0,\mu_1)^p
\coloneqq
\inf_{\alpha\in\Gamma(\mu_0,\mu_1)}\int_{(A\times B)^2} \d_\beta^p ((y_1,x_1),(y_2,x_2)) \, \d\alpha.
\]
The same idea has been pursued in \citep{alfonso2023generative}, where the authors rescaled the $y$-costs to obtain a block-triangular map in the Knothe-Rosenblatt setting \citep{knothe, rosenblatt} and similarly in \citep{hosseini2024conditional}. Note that \citep{hosseini2024conditional} was published on ArXiv after our first version \citep{CHW2023} of the present paper.  The distance $W_{2,\beta}$ was also discussed in \cite[Proposition 3.10]{peszek2023heterogeneous} and Proposition \ref{prop_beta} can be found in the proof thereof.

\begin{proposition}\label{prop_beta}
Let $\mu_0,\mu_1\in \P_{p,Y}(\R^{d} \times \R^m)$ and let $\alpha^\beta$ be a sequence of optimal transport plans for $\mu_0,\mu_1$ with respect to $W_{p,\beta}$.
Then, for $\beta\to\infty$, we have 
\[\int_{\R^{2d}} \|y_1-y_2\|^p \, \d{\pi^{1,3}}_{\#}(\alpha^{\beta}) \rightarrow 0.\]
\end{proposition}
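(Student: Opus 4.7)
The strategy is to exploit the gap between the restricted class $\Gamma_Y^4$ and the full class $\Gamma$ as a quantitative penalty driven by $\beta$. The key observation is that admissible plans in $\Gamma_Y^4(\mu_0,\mu_1)$ already achieve zero transport cost in the $y$-direction, so they can be used as cheap competitors against the $W_{p,\beta}$-optimizers $\alpha^\beta$, forcing the $y$-mass motion of the latter to vanish.

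First, I would invoke Proposition \ref{cond:plan} (iii) to obtain an optimal plan $\alpha\in\Gamma_Y^4(\mu_0,\mu_1)$ for $W_{p,Y}$, noting that since $\mu_0,\mu_1\in\P_{p,Y}(\R^d\times\R^m)\subset\P_p$, the quantity $W_{p,Y}^p(\mu_0,\mu_1)$ is finite. By definition of $\Gamma_Y^4$, we have $\pi^{1,3}_\sharp\alpha=\Delta_\sharp P_Y$, so $y_1=y_2$ holds $\alpha$-almost everywhere. Consequently
\begin{align*}
\int_{(\R^d\times\R^m)^2} d_\beta^p\bigl((y_1,x_1),(y_2,x_2)\bigr)\,\d\alpha = \int \|x_1-x_2\|^p\,\d\alpha = W_{p,Y}^p(\mu_0,\mu_1),
\end{align*}
independently of $\beta$.

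Next, since $\alpha\in\Gamma(\mu_0,\mu_1)$ is a valid (though generally suboptimal) competitor for the $W_{p,\beta}$ problem and $\alpha^\beta$ is optimal, I would use the inequality
\begin{align*}
\int \|x_1-x_2\|^p\,\d\alpha^\beta + \beta\int \|y_1-y_2\|^p\,\d\alpha^\beta \;\le\; W_{p,Y}^p(\mu_0,\mu_1).
\end{align*}
Dropping the nonnegative $x$-term and dividing by $\beta$ yields
\begin{align*}
\int \|y_1-y_2\|^p\,\d\alpha^\beta \;\le\; \frac{W_{p,Y}^p(\mu_0,\mu_1)}{\beta}.
\end{align*}
Since the integrand depends only on the $(y_1,y_2)$-coordinates, the left-hand side equals $\int_{\R^{2d}}\|y_1-y_2\|^p\,\d\pi^{1,3}_\sharp\alpha^\beta$, and letting $\beta\to\infty$ concludes the proof.

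There is no serious obstacle here: the argument is a one-line comparison once Proposition \ref{cond:plan} (iii) supplies an admissible $\Gamma_Y^4$-plan with zero $y$-transport cost and finite $x$-transport cost. The only point that deserves a sentence in the write-up is the finiteness of $W_{p,Y}^p(\mu_0,\mu_1)$, which follows from $\mu_0,\mu_1\in\P_p$ together with Proposition \ref{cond:plan} (i) and the fact that the conditional Wasserstein distance is bounded above by $\bigl(\int(\|x_1\|+\|x_2\|)^p\,\d(\mu_0\otimes_Y\mu_1)\bigr)^{1/p}$ using the $y$-diagonal coupling of independent conditionals.
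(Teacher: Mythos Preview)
Your proposal is correct and follows essentially the same approach as the paper: use the $W_{p,Y}$-optimal plan $\alpha\in\Gamma_Y^4$ as a competitor whose $d_\beta^p$-cost equals $W_{p,Y}^p(\mu_0,\mu_1)$ independently of $\beta$, compare with the optimal $\alpha^\beta$, drop the nonnegative $x$-term, and divide by $\beta$. Your write-up is in fact slightly cleaner than the paper's (which has a minor notational slip between $\alpha$ and $\alpha_{opt}$), and your remark on finiteness of $W_{p,Y}^p$ is a welcome clarification the paper leaves implicit.
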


\begin{remark} \label{y-scaling}
Alternatively, instead of rescaling the costs $d_{\beta}$ we would also rescale the inputs, which was done for instance in \citep{du2023nonparametric,hagemann2023posterior}. Take for instance (as we do numerically) the cost function $d_{\beta}^2 =  \Vert x_1 - x_2 \Vert^2 + \beta \Vert y_1-y_2 \Vert^2$. Then this is equivalent to rescaling the $Y$-component by $\sqrt{\beta}$.
\end{remark}

The following proposition shows that the issue raised in Example \ref{ex:emp_lim} is addressed by $W_{2,d_\beta}$.
\begin{proposition}
\label{prop:ex_fix}
Assume that $\mu,\nu\in P_{2,Y}(\R^d\times \R^m)$ and let $\mu_n,\nu_n$ be empirical measures that converge weakly to $\mu,\nu$. Then for a sequence $\beta_k\to\infty$ there exists an increasing subsequence $n_k$ and optimal plans $\alpha_{n_k}\in \Gamma(\mu_{n_k},\nu_{n_k})$ for $W_{2,d_{\beta_k}}(\mu_{n_k},\nu_{n_k})$ such that $\alpha_{n_k}$ converges weakly to an optimal plan $\alpha\in\Gamma^4_Y(\mu,\nu)$ for $W_{2,Y}(\mu, \nu)$.
\end{proposition}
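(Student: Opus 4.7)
The plan is a two-layer compactness argument. First, for each fixed rescaling parameter $\beta_k$, I extract an empirical index $n_k$ so that the empirical optimum $\alpha_{n_k}^{\beta_k}$ approximates some optimum $\tilde\alpha^{\beta_k}$ for the limit marginals $\mu,\nu$. Then I let $\beta_k\to\infty$ and extract a further subsequence so that $\tilde\alpha^{\beta_k}$, and hence $\alpha_{n_k}^{\beta_k}$, converges in $W_2$ to a candidate limit $\alpha$, which I identify as an optimal plan for $W_{2,Y}(\mu,\nu)$.

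For the inner layer, fix $k$. The cost $d_{\beta_k}^2$ is bounded continuous on the compact set $(A\times B)^2$, and the marginals satisfy $\mu_n\to\mu$, $\nu_n\to\nu$ in $W_2$. Classical stability of optimal transport (e.g.\ \cite[Theorem~5.20]{villani2009optimal}) then says that every weak accumulation point of the optimal plans $\alpha_n^{\beta_k}\in\Gamma(\mu_n,\nu_n)$ is an optimal plan for $W_{2,d_{\beta_k}}(\mu,\nu)$, and the set of such optimal plans is a non-empty, closed, hence compact subset of $\Gamma(\mu,\nu)$. A standard contradiction argument then gives $\inf_{\tilde\alpha} W_2(\alpha_n^{\beta_k},\tilde\alpha)\to 0$ as $n\to\infty$, where the infimum is over the set of optimal plans. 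So I can pick $n_k>n_{k-1}$ and some optimal $\tilde\alpha^{\beta_k}$ for $W_{2,d_{\beta_k}}(\mu,\nu)$ with $W_2(\alpha_{n_k}^{\beta_k},\tilde\alpha^{\beta_k})<1/k$.

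For the outer layer, compactness of $(A\times B)^2$ and Prokhorov give tightness of $\{\tilde\alpha^{\beta_k}\}$. Passing to a subsequence (relabelled by $k$), $\tilde\alpha^{\beta_k}\to\alpha$ weakly, and equivalently in $W_2$ because the support is uniformly compact, so that also $\alpha_{n_k}^{\beta_k}\to\alpha$ in $W_2$ by the triangle inequality. Continuity of the marginal projections gives $\alpha\in\Gamma(\mu,\nu)$. To upgrade to $\Gamma_Y^4(\mu,\nu)$, I apply Proposition~\ref{prop_beta} to $\mu,\nu\in\P_{2,Y}$, which yields $\int\|y_1-y_2\|^2\,\d\tilde\alpha^{\beta_k}\to 0$. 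Since the integrand is bounded continuous and supports are uniformly compact, $\gamma\mapsto\int\|y_1-y_2\|^2\,\d\gamma$ is continuous under $W_2$-convergence, so $\int\|y_1-y_2\|^2\,\d\alpha=0$. Combined with $\pi^1_\sharp\alpha=\mu$ (whose $y$-projection is $P_Y$), this forces $\pi^{1,3}_\sharp\alpha=\Delta_\sharp P_Y$, i.e.\ $\alpha\in\Gamma_Y^4(\mu,\nu)$.

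Optimality follows from a sandwich. Every competitor $\alpha'\in\Gamma_Y^4(\mu,\nu)$ satisfies $\int d_{\beta_k}^2\,\d\alpha'=\int\|x_1-x_2\|^2\,\d\alpha'$ because $y_1=y_2$ holds $\alpha'$-a.s., so $W_{2,d_{\beta_k}}(\mu,\nu)^2\le W_{2,Y}(\mu,\nu)^2$. Dropping the nonnegative $\beta_k$-term yields $\int\|x_1-x_2\|^2\,\d\tilde\alpha^{\beta_k}\le W_{2,Y}(\mu,\nu)^2$; passing to the limit by the same continuity argument, $\int\|x_1-x_2\|^2\,\d\alpha\le W_{2,Y}(\mu,\nu)^2$, and the reverse inequality is automatic from $\alpha\in\Gamma_Y^4(\mu,\nu)$, so $\alpha$ is optimal for $W_{2,Y}(\mu,\nu)$. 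The main obstacle is the interchange of the two limits $n\to\infty$ and $\beta\to\infty$: a naive $\beta$-first argument fails because the admissible set of $Y$-preserving plans for the empirical measures can collapse to a single, non-representative plan, as Example~\ref{ex:emp_lim} shows. The diagonal extraction, powered by OT stability at each fixed $\beta_k$ and Proposition~\ref{prop_beta} at the outer scale, is what reconciles the two limits.
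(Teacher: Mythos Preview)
Your argument is correct and follows the same two-layer diagonal scheme as the paper: first, for each fixed $\beta_k$, use stability of optimal transport to approximate an optimal $W_{2,d_{\beta_k}}(\mu,\nu)$-plan $\tilde\alpha^{\beta_k}$ by an empirical optimum $\alpha_{n_k}$ with $W_2(\alpha_{n_k},\tilde\alpha^{\beta_k})<1/k$; second, let $k\to\infty$ and show $\tilde\alpha^{\beta_k}$ (hence $\alpha_{n_k}$) converges to an optimal $W_{2,Y}$-plan.

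The execution differs in two minor respects. For the inner layer, the paper packages the stability step as Lemma~\ref{lemma:conv_sub_ex}, reducing $W_{2,d_\beta}$ to standard $W_2$ via the rescaling $(y,x)\mapsto(\sqrt\beta\,y,x)$ and then quoting \cite[Proposition~7.1.3]{ambrosio2005gradient}; you instead invoke Villani's stability theorem directly for the cost $d_{\beta_k}^2$, which is equally valid on a compact domain. For the outer layer, the paper outsources the statement ``$\tilde\alpha^{\beta_k}\to\alpha$ optimal for $W_{2,Y}$'' to \cite[Proposition~3.11]{hosseini2024conditional}, whereas you prove it from scratch using Proposition~\ref{prop_beta} (to force $\pi^{1,3}_\sharp\alpha=\Delta_\sharp P_Y$) together with the sandwich $\int\|x_1-x_2\|^2\,\d\tilde\alpha^{\beta_k}\le W_{2,d_{\beta_k}}(\mu,\nu)^2\le W_{2,Y}(\mu,\nu)^2$. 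Your route is more self-contained and makes the role of Proposition~\ref{prop_beta} explicit; the paper's route is shorter but leans on an external reference.
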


\section{Bayesian Flow Matching} \label{sec:fm}
In this section, we combine the conditional Wasserstein distance with Bayesian flow matching. 
We start by briefly recalling flow matching and its OT variant. Then we turn to the
conditional setting, where we describe a method from the literature, which we call "random" Bayesian flow matching and introduce our new OT Bayesian flow matching.

\begin{remark}
Usually, in conditional generative modeling, the word "conditional" appears  the context of inverse problems (or solving class conditional problems). 
However, in the vanilla flow matching \citep{lipman2023flow} the word "conditional" is used for paths fixing a target sample and the whole procedure is referred to as "conditional flow matching". Therefore, we will call the flow matching procedure for inverse problems "Bayesian flow matching". 
\end{remark}

Flow matching and OT flow matching aim to sample from a target distribution $P_X$
by learning the velocity field $v_t$ of a flow ODE \citep{chen2018neural}
\begin{align} \label{velocity}
\frac{d}{dt} \phi_t(x) &= v_{t}(\phi_t(x)), \quad t \in [0,1],\\
\phi_0(x) &= x, 
\end{align}
which transports samples from an initial distribution $P_Z$
to those from $P_X$.  Once an approximate  velocity field $v^{\theta}_t$ is learned,
it can be replaced in \eqref{velocity} to get a
neural ODE \citep{chen2018neural}.

\textbf{Flow Matching}
 The flow matching framework \citep{lipman2023flow, liu2023flow}  learns 
  $v^{\theta}_{t}$
based on \emph{linear interpolation} 
between independent $Z$ and $X$, i.e. 
$$X_t = (1-t)\ Z + t\ X,$$ 
and in \citep[Theorem D.3]{liu2023flow} it was shown that a suitable vector field is
$$
v_t(x)=\E\left[\frac{d}{d t}X_t|X_t=x\right]=\E[X-Z|X_t=x]. 
$$
Consequently, an approximating velocity field $v^\theta_t$ can be learned by minimizing the loss function 
$$
L_{\text{FM}}(\theta) \coloneqq \mathbb{E}_{(z,x) \sim P_Z \otimes P_X, t \sim U([0,1])} 
\big[\Vert v^{\theta}_t(x_t) - (x - z) \Vert^2 \Big].
$$
The objective $L_{FM}$ can be also derived differently, with ideas inspired by the score matching framework \citep{hyvarinen2005estimation, vincent11}. Then, instead of regressing to the true velocity field at $x_t$, they show that regressing to it has the same gradients when one conditions at $x$ ("conditional" flow matching \citep[Theorem 2]{lipman2023flow}), which leads to the same loss formulation.

\textbf{OT Flow Matching} 
In contrast to the above linear interpolation, the authors in  \citep{tong2023improving, pooladian2023multisample}  suggested to use the 
$W_2(P_Z,P_X)$  coupling $\gamma$, respectively its Monge map $T$ and the corresponding 
\emph{McCann interpolation} \citep{mccann} 
$$X_t \coloneqq T_t(Z) = (1-t)Z + tT(Z)$$
which leads to
$$
T(Z)-Z=\frac{d}{d t} X_t=v_t(X_t).
$$
By Proposition \ref{prop:empi_smooth_flow},
the associated Borel vector field of the geodesic is given by 
$v_t=(T-\text{Id})\circ T_t^{-1}$. 
In a minibatch setting, this corresponds to sampling $(\zb z, \zb x)$ from $P_Z \otimes P_X$ and calculating the optimal plan $\gamma$ between 
$\frac{1}{I} \sum_{i=1}^I \delta_{z_i}$ 
and 
$\frac{1}{I} \sum_{i=1}^I \delta_{x_i}$, see 
\eqref{eq:assign},
which is supported on $(z_i,T(z_i))_{i=1}^I$. 
Consequently, the loss function becomes
 $$
 L_{\text{OT}}(\theta) \coloneqq \mathbb{E}_{(z,x) \sim \gamma, t \sim U([0,1])} \big[\Vert v^{\theta}_t(x_t) - (x-z) \Vert^2 \big],$$
where $x_t:= T_t(z,x)$.
\\[1ex]

Let us turn to the conditional setting.
In inverse problems, samples from the posterior measure are usually not available. In the conditional setting the corresponding flow ODE reads

\begin{align} \label{velocity-cond}
\frac{d}{dt} \phi_t(y,x) &= v_{t}(\phi_t(y,x)), \quad t \in [0,1],\\
\phi_0(y,x) &= (y,x). 
\end{align}

To this end, we pick the target measure as the joint distribution $P_{Y,X}$ and start from $P_{Y,Z}$. We do not want mass movement in $Y$-direction, as this would mean the measurement would change and we would not sample the posterior, which amounts to the $Y$-component of $v_t$ being (close to) zero, cf. Proposition \ref{prop:continuity}. 

\textbf{Random Bayesian Flow Matching}
In \citep{wildberger2023flow,zheng2023guided} flow matching is extended to the conditional setting. Given independent $Z$ and $(Y,X)$ we again consider the linear interpolation between $Z$ and $X$ given by 
$$X_t = (1-t)\ Z + t\ X.$$
Then $(Y,X_t)$ interpolates between $(Y,Z)$ and $(Y,X)$. Consequently 
$$(0,X-Z) = \frac{d}{d t}(Y,X_t). $$
 This yields the random Bayesian flow matching loss 
\begin{align}\label{eq:lyfm}
L_{Y, \text{FM}}(\theta) = \mathbb{E}_{(z,y,x) \sim  P_Z \otimes P_{Y,X}, t \sim U([0,1])}[\Vert v_t^{\theta}(y,x_t) - (x-z)\Vert^2].
\end{align}
Under the assumption $y_i \neq y_j$ for $i \neq j$ this matching coincides with the only admissible plan in the conditional Wasserstein distance.
In general however, according to Example \ref{ex:emp_lim}, this approach does not approximate OT plans as $X$ and $Z$ are essentially independent. Furthermore drawing a minibatch $\left( (z_i,y_i,x_i) \right)_{i=1}^I$ corresponds to a random coupling of the $z$ and $x$ for each class which explains the name random Bayesian flow matching.

\textbf{OT Bayesian Flow Matching}
  For $P_{Y,Z},P_{Y,X}$ as in Proposition \ref{prop:flow_exists} or Proposition \ref{prop:cont_Monge} there exists an optimal plan $\alpha \in \Gamma_Y^4(P_{Y,Z},P_{Y,X})$ and corresponding map $T$. Furthermore there exists a vector field $v_t\in L^2_{\mu_t}$ such that there exists a solution $\phi_t$ to the flow equation which satisfies
\begin{align*}
    v_t(\phi_t(y,x))=T(y,x)-(y,x) = \left(0, (\pi^2\circ T) (y,x)-x\right).
\end{align*}
Setting 
$$X_t \coloneqq T_t(Y,Z) = (1-t)Z + t(\pi^2\circ T)(Y,Z)$$
we have that $(Y,X_t)$ interpolates between $(Y,Z)$ and $(Y,X)$ and 
$$(0,(\pi^2\circ T)(Y,Z) - Z) = \frac{d}{d t}(Y,X_t)=v_t(Y,X_t).$$

Given a minibatch $(\zb z,\zb y,\zb x) = \left( (z_i,y_i,x_i) \right)_{i=1}^I$
sampled from the product distribution $P_Z \otimes P_{Y,X}$, we can calculate the optimal map $T$ and corresponding conditional coupling $\alpha$ between $(\zb y, \zb z)$ and $(\zb y, \zb x)$. The plan $\alpha$ by construction is only supported on $(y_i,z_i,y_i,(\pi^2\circ T)(y_i,z_i)))$. 
Now let $(y,z,y,x) \sim \alpha$, then we have
\begin{align*}
    v_t(y,x_t) = T(y,z) - (y,z)  = (y,x) -(y,z) =  (0,x-z)
\end{align*}
where $x_t:=T_t(y,z)$.
This gives rise to the following loss
\begin{align}\label{eq:lyot}
L_{Y,\text{OT}}(\theta) = \mathbb{E}_{((y,z,y,x) \sim \alpha, t \sim U([0,1])} [\Vert v^{\theta}_t(y,x_t) - (x-z) \Vert^2].
\end{align}

In practice, we use Proposition \ref{prop_beta} to approximate the optimal coupling $\alpha$. Therefore we allow small errors in the $Y$-component, in order to move more optimally in the $X$-direction, which is more in the spirit of Proposition \ref{prop:ex_fix}. Numerically, instead of taking the optimal transport plan with respect to the modified cost function, we rescale the $Y$-part, see Remark \ref{y-scaling}.

\section{Numerical Experiments} \label{sec:numerics}
In this section, we want to show cases in which it is beneficial to use the conditional Wasserstein distance. 
First, we verify that the convergence result for an increasing parameter $\beta$ given in Proposition \ref{prop_beta} for particle flows to MNIST \citep{deng2012mnist}. 
Then we  show the advantages of our Bayesian OT flow matching procedure 
on a Gausian mixture model (GMM) toy example and on CIFAR10 \citep{krizhevsky2009learning} class-conditional image generation.

\subsection{Particle Flow Convergence}
 In this example, we minimize 
 $W_{Y,d_\beta} (P_{Y,X},P_{Y,Z})$ for the empirical measures. 
 We consider the particle flow, i.e., the flow from $(Y,Z)$ to $(Y,X)$ for empirical distributions which minimizes the objective $\mathcal{D}\left(\frac{1}{n}\sum_{i=1}^n \delta_{y_i,x_i}, \frac{1}{n}\sum_{i=1}^n \delta_{y_i,z(t)_i}\right)$ for an appropriate distance $\mathcal D$, see e.g. \citep{altekrueger23a}. More concretely we follow a particle flow path, i.e., a curve starting with $z(0)_i \sim \mathcal{N}(0,I)$ which fulfills 
 $$\dot z(t) = - \eta \nabla_{z(t)} \mathcal{D}\left(\frac{1}{n}\sum_{i=1}^n \delta_{y_i,x_i}, \frac{1}{n}\sum_{i=1}^n \delta_{y_i,z(t)_i}\right), $$
 for an appropriate scaling $\eta$ and given joint samples $(y_i,x_i)_{i=1}^n$.
 We choose $\mathcal{D}$ as an approximation of $W_{2,d_\beta}$ by rescaling $Y$
 and using the Sinkorn divergence as the sample based distance measure  \citep{genevay18, feydy2019interpolating}, where the "blur" parameter is chosen so small that it is close to the Wasserstein distance. This way we can numerically verify the convergence in Proposition \ref{prop_beta}. Note that there are no neural networks involved in this example.
 
 We see in Fig. \ref{fig:four_images} that increasing $\beta$ indeed yields plans which transport no mass in $Y$-direction anymore, which has the consequence that the generated images fit the corresponding label. 
 It can be  seen that already for $\beta = 5$ each row only has one type of digit. 

\begin{figure}
    \centering
    \begin{minipage}{0.245\textwidth}
        \includegraphics[width= \linewidth]{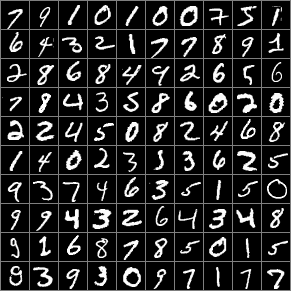} 
        \center{$\beta=1$}
    \end{minipage}%
    \hfill
    \begin{minipage}{0.245\textwidth}
        \includegraphics[width= \linewidth]{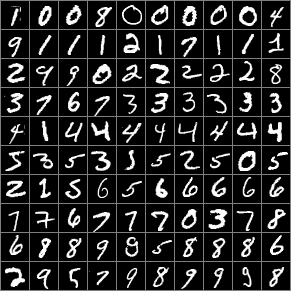} 
        \center{$\beta=3$}
    \end{minipage}%
    \hfill
    \begin{minipage}{0.245\textwidth}
        \includegraphics[width= \linewidth]{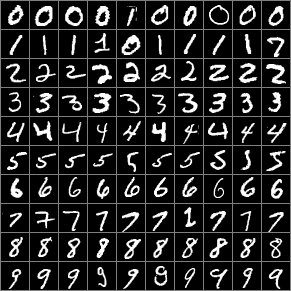}
        \center{$\beta=4$}
    \end{minipage}%
    \hfill
    \begin{minipage}{0.245\textwidth}
        \includegraphics[width= \linewidth]{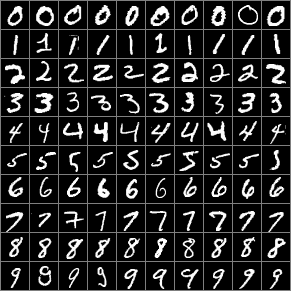} 
        \center{$\beta=5$}
    \end{minipage}%
    \caption{Class conditional MNIST particle flow  for different choices of $\beta$.
    With increasing $\beta$ the labels are better fitted.}
    \label{fig:four_images}
\end{figure}

\subsection{GMM Example}
Here we use an experimental setup from \citep{HHS22}:  Recall the Bayesian inverse problem setting in \eqref{inverse}, i.e. $Y = f(X) + \Xi$
where we choose $P_X$ to be a GMM in $\mathbb R^5$ with 10 mixture components, uniformly chosen means in $[-1,1]$ and standard deviation 0.1.
We apply a linear diagonal forward model $f = (f_{i,j})_{i,j=1}^5\in\R^{5\times 5}$  with $f_{i,i} = \frac{0.1}{i+1}$ and zero components otherwise
and choose $\Xi=\mathcal{N}(0,0.1)$ as a standard Gaussian distribution with standard deviation $0.1$. This yields a posterior measure $P_{X|Y=y}$ which is a also a GMM  \citep[Lemma 11]{HHS22}. Therefore we can sample and evaluate the true posterior as groundtruth. 
 We train a random Bayesian flow matching model according to $L_{Y,\text{FM}}$\eqref{eq:lyfm} and our OT Bayesian flow matching according to $L_{Y,\text{OT}}$\eqref{eq:lyot} with the python package POT \citep{flamary2021pot} on a fixed dataset of size $10000$, where we choose the best model according to a validation set of size $2000$ until the validation loss converges. For both approaches we use the same feed-forward neural network which contains around 140k parameters. We evaluate them using the Sinkhorn distance with blur 0.05 \citep{genevay18} with the package GeomLoss \citep{feydy2019interpolating} averaged with 100 posteriors and over 10 training and test runs with randomly sampled mixtures. The sampling is done via an explicit Euler discretization of 10 steps.
Our proposed OT Bayesian flow matching model trained according to $L_{Y,\text{OT}}$ with $\beta = 20$ obtains an average Sinkhorn distance of \textbf{0.0225}, whereas the random Bayesian flow matching model obtains a value of \textbf{0.0247}. In Figure \ref{fig_mix} one can see that both models approximate the posterior very well. We repeated the same experiments but used only $3$ Euler steps for sampling. In this case we obtained an average Sinkhorn distance of \textbf{0.0760} for the OT Bayesian flow matching and a value of \textbf{0.1044} for the random Bayesian flow matching. This indicates that the paths learned by our method are more straight.

\begin{figure}[p]
\centering
    \includegraphics[width=0.48\textwidth]{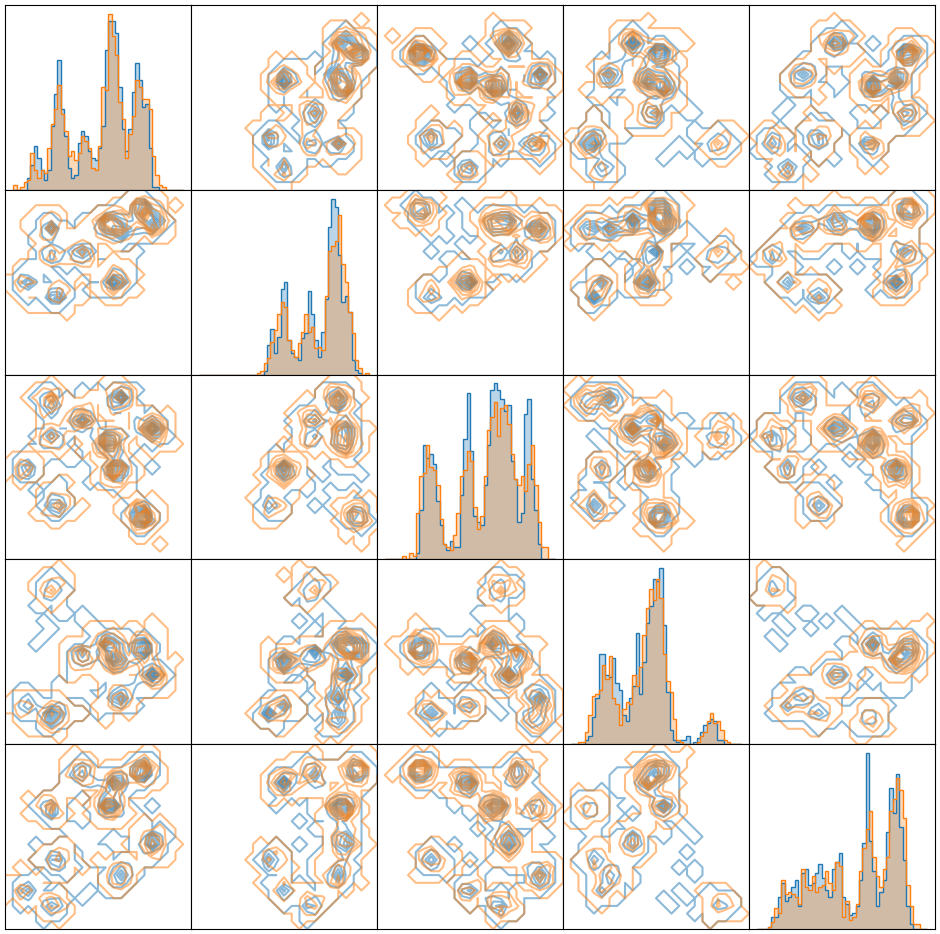}
    \includegraphics[width=0.48\textwidth]{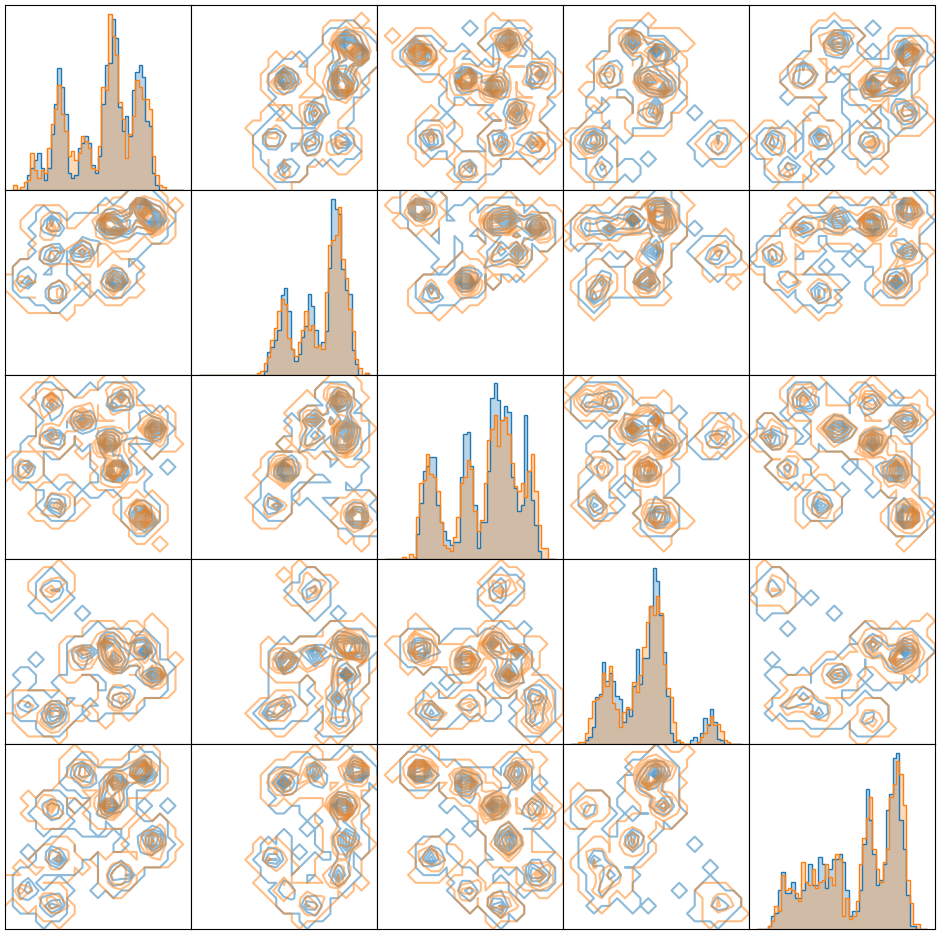}
    \includegraphics[width=0.48\textwidth]{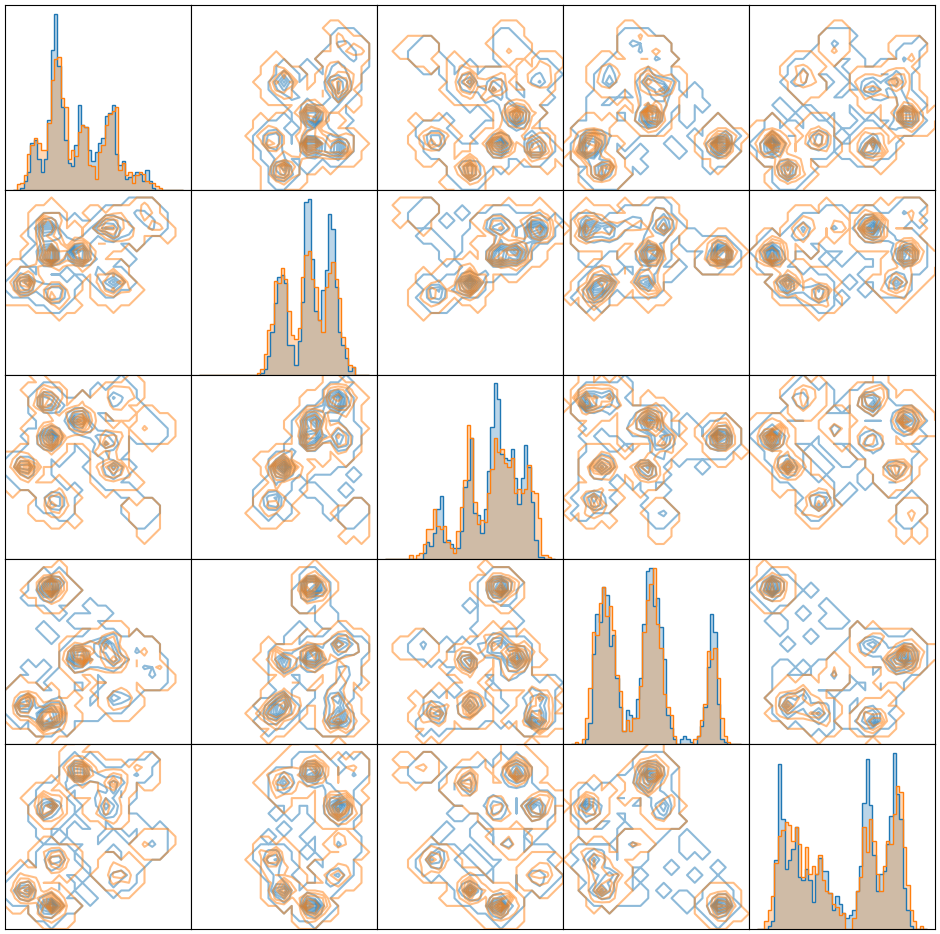}
    \includegraphics[width=0.48\textwidth]{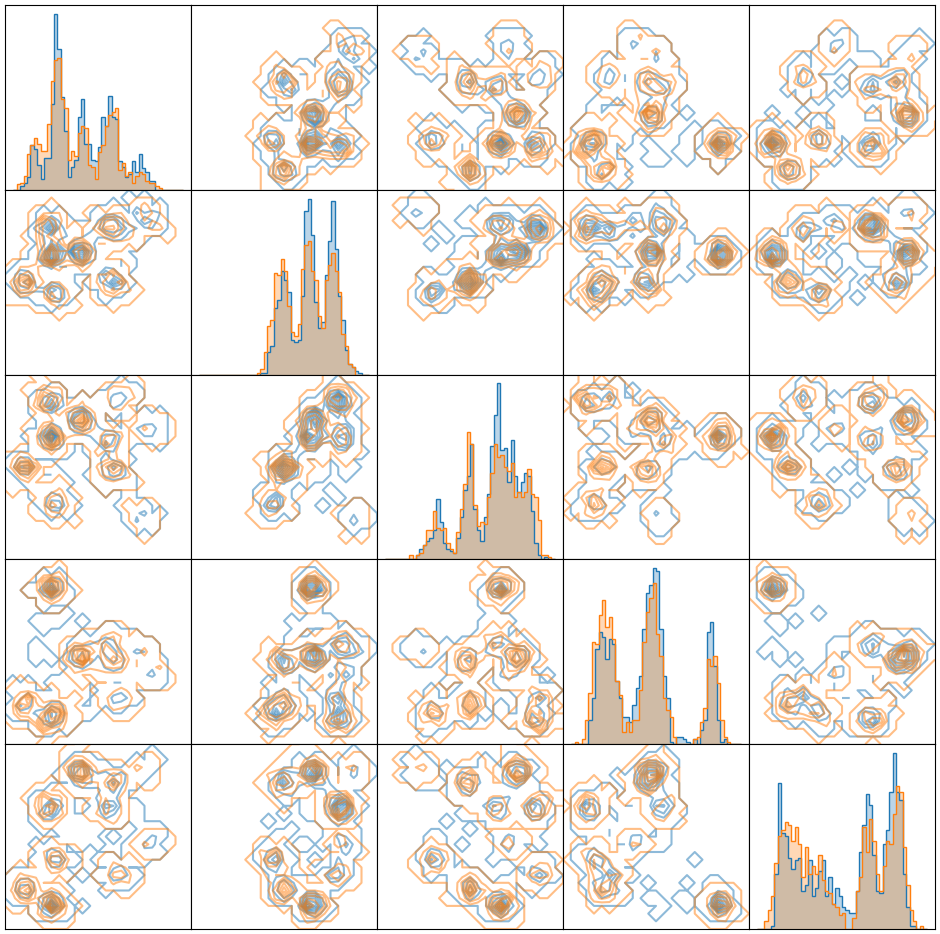}
    \caption{Posterior histograms for different methods with random Bayesian flow matching on the left and our OT Bayesian flow matching on the right for 10 Euler steps. Ground truth posterior is in orange and model prediction in blue. }
    \label{fig_mix}
\end{figure}

\subsection{Class Conditional Image Generation}
\begin{figure}[p]
    \centering
    \begin{minipage}{0.33\textwidth}
        \centering
        \includegraphics[width=\linewidth]{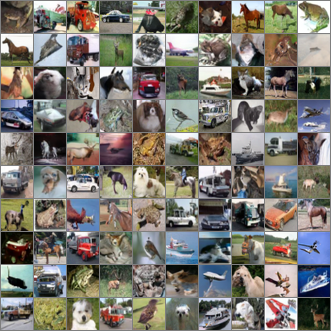} \\
        $\beta=1$
    \end{minipage}%
    \hfill
    \begin{minipage}{0.33\textwidth}
        \centering
        \includegraphics[width=\linewidth]{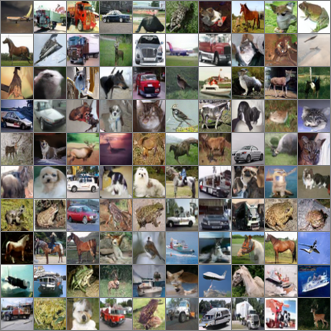} \\
        $\beta=3$
    \end{minipage}%
    \hfill
    \begin{minipage}{0.33\textwidth}
        \centering
        \includegraphics[width=\linewidth]{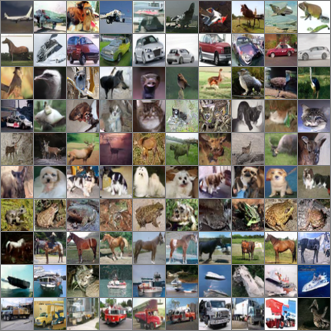} \\
        $\beta=5$
    \end{minipage}%
    \hfill
    \begin{minipage}{0.33\textwidth}
        \centering
        \includegraphics[width=\linewidth]{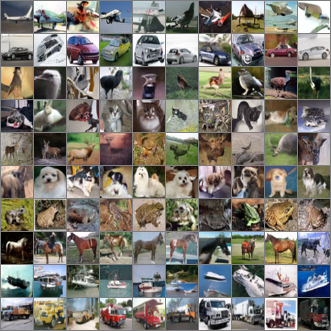} \\        
        $\beta=20$
    \end{minipage}%
    \hfill
    \begin{minipage}{0.33\textwidth}
        \centering
        \includegraphics[width=\linewidth]{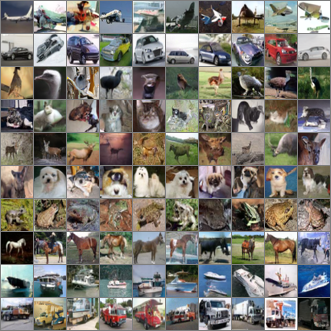} \\
        $\beta=100$
    \end{minipage}%
    \hfill
    \begin{minipage}{0.33\textwidth}
        \centering
        \includegraphics[width=\linewidth]{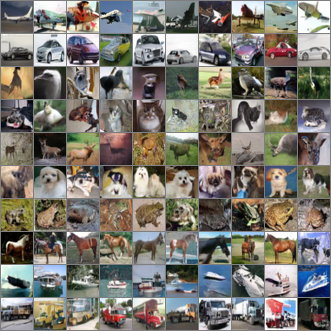} \\
        diagonal
    \end{minipage}%
    \hfill 
    \vspace{15pt}
   
    \begin{minipage}{1\textwidth}
    \centering
    \scalebox{.9}{
    
    \begin{tabular}[t]{c|cccccc}
    & 1 & 3 & 5 & 20 & 100 & diagonal \\
    \hline
    FID & 4.97 & 4.94 & 4.38 & 4.33 & \textbf{4.10} & 4.92 \\
    Epochs & 380 & 360 & 400 & 340 & 380 & 420 \\
    \end{tabular}}
    \end{minipage} 
    \caption{Class Conditional CIFAR results for different choices of $\beta$ and for the diagonal couplings. Additionally FID results are reported using an adaptive step size solver.}
    \label{fig:CIFAR}
\end{figure}

We apply our Bayesian OT flow matching for conditional image generation. We choose the condition $Y$ to be the class labels in order to generate samples of CIFAR10 for a given class. 
We simulate the flows for different values of $\beta$, by which we mean that we rescale the $Y$-component by $\beta$ as mentioned in Remark \ref{y-scaling}. We also simulate flows using the "diagonal" plans which coincide with the diagonal Bayesian flow matching objective \cite {wildberger2023flow}. For inference we simulate the flow ODE \cite{torchdiffeq} using an adaptive step size solver (Runge-Kutta of order 5) . The samples in Fig. \ref{fig:CIFAR} are generated using the adaptive step size solver and sorted by class labels. 
For low values of $\beta$ we see that the resulting samples do not match their class labels, increasing $\beta$ leads to accurate class representations. The samples are generated given the labels of the training samples, therefore we see improved FID results as $\beta$ increases. The diagonal flow matching objective has the correct class representations, however since the associated couplings are not optimal our experiments suggest that this leads to higher variance during training and therefore slightly lower image quality, see \cite{tong2023improving} for more details on the advantages of OT based flow matching. We run each method for 500 epochs and compute an FID over the validation set every 20 epochs. Then for each method we choose the best checkpoint and report the results. The code is written in PyTorch \cite{PyTorch2019} and is available online\footnote{\url{https://github.com/JChemseddine/Conditional_Wasserstein_Distances}}.

\section{Conclusions} \label{sec:conclusions}
Inspired from applications in Bayesian inverse problems, we introduced  conditional Wasserstein distances. We managed to rewrite these distances as  expectations of the Wasserstein distances with respect to the observation. Therefore we are able to directly infer posterior guarantees in expectation when trained with the corresponding losses. Furthermore, we calculated the dual of the conditional  Wasserstein-1 distance, when the probability measures are compactly supported and recovered well-known conditional Wasserstein GAN losses. 
We established  corresponding velocity fields for geodesics and used our results to design a new Bayesian flow matching algorithm. Moreover we use an approximation for the conditional Wasserstein distance depending on a parameter $\beta$ and show numerically that, when using it for class conditional  flow matching, the result does respect the classes for sufficiently large $\beta$. We achieve better FID than random Bayesian flow matching on Cifar10. Future work includes conditional domain translation, i.e., when the latent distribution is not a standard Gaussian, but given by some data distribution. There, finding a good OT matching and making use of our proposed framework could improve existing algorithms. 

\acks{Many thanks to F. Vialard for pointing to reference
\cite{peszek2023heterogeneous} after reading our arXiv version.
P. Hagemann acknowledges funding from from the DFG within the project SPP 2298 "Theoretical Foundations of Deep Learning",
C. Wald and G. Steidl gratefully acknowledge funding by the DFG within the SFB “Tomography Across the Scales” (STE 571/19-1, project number: 495365311) and by the DFG within the Exellencecluster MATH+.}

 \appendix
 
\section{Counterexample for Equality in Equation \ref{w1_coupling}} \label{sec:ex}

\begin{figure}
\centering
\scalebox{0.6}{
\begin{tikzpicture}
    \fill [blue] (-0.1,-0.1) rectangle (0.1,0.1);
    \node[below left=14pt of {(0,0)}, outer sep=2pt,fill=white] {$\delta_{y_1,x_1}$};
    \fill [red] (0,5) circle [radius=4pt];
    \node[above left=14pt of {(0,5)}, outer sep=2pt,fill=white] {$\delta_{y_1,z_1}$};
    \fill [red] (2,0) circle [radius=4pt];
    \node[below right=14pt of {(2,0)}, outer sep=2pt,fill=white] {$\delta_{y_2,z_2}$};
    \fill [blue] (2-0.1,5-0.1) rectangle (2+0.1,5+0.1);
    \node[above right=14pt of {(2,5)}, outer sep=2pt,fill=white] {$\delta_{y_2,x_2}$};
    \draw [black] (0,0) -- (2,0);
    \draw [black] (0,5) -- (2,5);

    \draw [dotted, black] (0,0) -- (0,5);
    \draw [dotted, black] (2,0) -- (2,5);
\end{tikzpicture}
} \caption{Visualization of the example. 
The blue dots belong to $P_{Y,X}$ and the red dots to $P_{Y,Z}$.
The optimal coupling is the solid line, while
the optimal conditional coupling is the dotted one.}
    \label{fig:enter-label}
\end{figure}
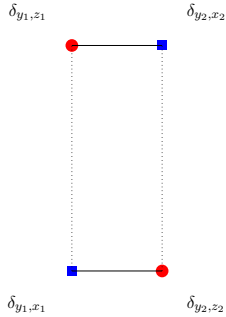

We provide a simple example showing that we cannot expect equality 
in \eqref{w1_coupling}.
Recall that for two empirical measures 
$\mu=\frac 1 n\sum_{i=1}^n\delta_{a_i}$ and 
$\nu=\frac 1 n \sum_{i=1}^n\delta_{b_i}$, $a_i,b_i \in \mathbb R^d$,
the Wasserstein-$p$ distance, $p \in [1,\infty)$ can  be written as
\begin{align}\label{eq:assign}
W_p^p(\mu,\nu) = \inf_{\sigma\in \mathcal S_n}\frac 1 n \sum_{i=1}^n\Vert a_i-b_{\sigma(i)}\Vert^p,
\end{align}
where $\mathcal S_n$ is the set of permutations on $\{1,\ldots,n\}$, see \citep[Proposition 2.1]{MAL-073}.

On the probability space $(\Omega, \mathcal{A}, \mathbb P)$ 
with $\Omega = \{\omega_1, \omega_2\}$, $\mathcal{A} = 2^{\Omega}$ and  $\mathbb P(\omega_1) = \mathbb P(\omega_2) = \frac{1}{2}$, we define the random variables 
$X,Y: \Omega \to \mathbb R$ by
\begin{center}
\begin{tabular}{c| c c c}
               &$X$&$Y$&$Z$\\
               \hline
    $\omega_1$&0&0&$n$\\
    $\omega_2$&$n$&1&0 
\end{tabular}\, , \hspace{0.5cm} $n >1$.
\end{center}
Then we have 
\begin{align*}
P_{Y,X} = \frac 1 2\delta_{0,0} + \frac 1 2 \delta_{1,n}, \quad  P_{Y,Z}= \frac 1 2 \delta_{1,0} + \frac 1 2 \delta_{0,n}
\end{align*}
which implies by \eqref{eq:assign} that
\begin{align*}
W_1(P_{Y,X},P_{Y,Z})  
&= \frac{1}{2}\min\big\{\|(0,0)-(1,0)\| + \|(1,n)-(0,n)\|,
\\ &\qquad\qquad 
\|(0,0)-(0,n)\|+\|(1,n)-(1,0)\|\big\} = 1.
\end{align*}
On the other hand, we get
$$
P_{X|Y=0}= \delta_{0}, \quad
P_{X|Y=1}=\delta_{n}, \quad
P_{Z|Y=0} = \delta_{n}, \quad
P_{Z|Y=1} = \delta_0, \quad
P_{Y} = \frac 1 2 \delta_0 + \frac 1 2 \delta_1,
$$
so that
\begin{align*}
\mathbb{E}_y[W_1(P_{X|Y=y},P_{Z|Y=y})] =  n = n W_1(P_{Y,X},P_{Y,Z}).
\end{align*}
Note that if we forbid the coupling to move mass across the $y$-direction, we actually would obtain equality, which motivates our definition of conditional Wasserstein distance, for an illustration see Fig. \ref{fig:enter-label}.

Note that in \citep{CondWasGen}, the summation metric is considered, i.e. 
$\Vert (x_1,y_1)- (x_2,y_2) \Vert_{sum} = \Vert x_1-x_2 \Vert + \Vert y_1-y_2 \Vert$ for which our counterexample is still valid.

\section{Proofs of Section \ref{sec:condW}} \label{app:condw}
\textit{Proof of Proposition \ref{cond:plan}.}
 iii)          
         Let $\alpha$ be defined by \eqref{def_alpha} which was already used in the proof of \citep[Theorem 2]{CondWasGen} , i.e.,
         \begin{align*}
         &\int\limits_{(A\times B)^2} f(y_1,x_1,y_2,x_2) \, \d \alpha(y_1,x_1,y_2,x_2) \\
 &= 
 \int\limits_A\int\limits_{A\times B^2}f(y_1,x_1,y_2,x_2) \, \d (\delta_{y_1}\times \alpha_{y_1})(y_2,x_1,x_2)\d P_Y(y_1)
 \end{align*}
 for all Borel measurable functions $f: (A \times B)^2 \to [0,+\infty]$.
Indeed $\alpha$ is a well defined probability measure on $(A\times B)^2$ by the following reasons:     
         by \citep[Lemma 12.4.7]{ambrosio2005gradient}, we can choose a Borel family $(\alpha_y)_y$.
         For any Borel set $\mathcal S \subseteq A\times B\times B$, we have 
         \begin{align}
         (\delta_{y}\times\alpha_y)(\mathcal S )=
         \int_{A\times B^2} 1_{\mathcal{S}}(\tilde{y},x_1,x_2) \, \d (\delta_y\times\alpha_y)(\tilde{y},x_1,x_2)
         =\int_{B^2}1_{\mathcal{S}}(y,x_1,x_2)\, \d \alpha_y.
         \end{align}
         By \citep[Equation 5.3.1]{ambrosio2005gradient} the function $y\mapsto \int_{B^2}1_{\mathcal{S}}(y,x_1,x_2)\d \alpha_y$ is Borel measurable. Consequently also $y\mapsto \delta_y\times\alpha_y(\mathcal S)$ is Borel measurable and thus $\alpha$ is well defined.
                
        It remains to show that $\alpha \in\Gamma_Y(P_{Y,X},P_{Y,Z})$ 
        which means $\pi^{1,3}_{\sharp}\alpha= \Delta_{\sharp}P_Y$, 
        $\pi^{1,2}_{\sharp}\alpha=P_{Y,X}$ and 
        $\pi^{3,4}_{\sharp}\alpha = P_{Y,Z}$. 
        The first equality follows from
        \begin{align*}
        \int_{A^2}f(y_1,y_2)\d \pi^{1,3}_{\sharp}\alpha 
        &= \int_{(A\times B)^2} (f\circ \pi^{1,3})(y_1,x_1,y_2,x_2) \, \d\alpha(y_1,x_1,y_2,x_2) \\
                &= \int_{(A\times B)^2}f(y_1,y_2) \, \d \delta_{y_1}(y_2)\, \d \alpha_{y_1}(x_1,x_2)\d P_Y(y_1)\\
                &= \int_{A}f(y,y) \, \d P_Y(y) 
               = \int_{A^2}f(y_1,y_2) \, \d(\Delta_{\sharp}P_Y)(y_1,y_2)
        \end{align*}
       for all Borel functions $f: A^2 \to [0,+\infty]$, and the second one from
        \begin{align}
            \int_{A\times B}f(y,x)\d\pi^{1,2}_{\sharp}\alpha(y,x) 
            &= \int_{(A\times B)^2}f(y_1,x_1) \, \d \delta_{y_1}(y_2)  \d \alpha_{y_1}(x_1,x_2)\d P_Y(y_1)\\
            &=\int_{A\times B}f(y,x)\d\pi^1_{\sharp}\alpha_y(x)\, \d P_Y(y)\\
            &=\int_{A\times B}f(y,x)\d P_{X|Y=y}(x)\, \d P_Y(y)\\
            &=\int_{A \times B} f(y,x) \, \d P_{Y,X}(y,x)
        \end{align}
        for all Borel functions $f: A\times B \to [0,+\infty]$. The third equality follows analogously. The optimality of $\alpha$ for $W_{p,Y}(P_{Y,X},P_{Y,Z})$ follows from \eqref{eq} which we show below.\\[1ex]
i) First we show $\geq$. 
        Let $\alpha_{y_1,y_2}$ be the disintegration of some 
        $\alpha\in\Gamma_Y^4(P_{Y,X},P_{Y,Z})$ with respect to $\pi^{1,3}_{\sharp}\alpha$.
        Then we obtain
        \begin{align}
            I(\alpha)  &:= \int_{(A\times B)^2}\|(y_1,x_1)-(y_2,x_2)\|^p \, \d \alpha(y_1,x_1,y_2,x_2)  \\
                &= \int_{A^2} \int_{B^2} \|(y_1,x_1)-(y_2,x_2)\|^p \, \d \alpha_{y_1,y_2}(x_1,x_2) \d \pi^{1,3}_{\sharp}\alpha(y_1,y_2)\\
                &= \int_{A^2} \int_{B^2} \|(y_1,x_1)-(y_2,x_2)\|^p \, \d \alpha_{y_1,y_2}(x_1,x_2)\, \d \Delta_{\sharp}P_Y (y_1,y_2)\\
                &= \int_{A} \int_{B^2}\|(y,x_1)-(y,x_2)\|^p \, \d\alpha_{y,y}(x_1,x_2)\d P_Y(y)\\
                &= \int_{A} \int_{B^2}\|x_1-x_2\|^p \, \d\alpha_{y,y}(x_1,x_2)\d P_Y(y).\label{mod:was}
        \end{align}
     Next, we show that $\alpha_{y,y} \in \Gamma(P_{X|Y=y},P_{Z|Y=y})$ a.e., which means 
     $\pi^1_{\sharp}\alpha_{y,y}=P_{X|Y=y}$ and $\pi^2_{\sharp}\alpha_{y,y}=P_{Z|Y=y}$ a.e.. 
     Using \eqref{disint_1}, we obtain indeed
            for all Borel measurable functions $f:A\times B\to [0,\infty]$ that
        \begin{align*}
                \int_{A} \int_{B} f(y,x_1) \, \d \pi^1_{\sharp}(\alpha_{y,y})(x_1)  \d P_Y(y) 
                &=
                \int_{A^2} \int_{B} f(y_1,x_1) \, \d\pi^1_{\sharp} \alpha_{y_1,y_2}(x_1)\, \d (\Delta)_{\sharp} P_Y(y_1,y_2)
                \\
                &=\int_{A^2} \int_{B} f(y_1,x_1) \, \d \pi^1_{\sharp}(\alpha_{y_1,y_2})(x_1) \, \d \pi^{1,3}_{\sharp}\alpha(y_1,y_2)\\
                &=\int_{A^2\times B^2}f(y_1,x_1) \, \d\alpha_{y_1,y_2}(x_1,x_2) \, \d \pi^{1,3}_{\sharp}\alpha(y_1,y_2)
                \\
                &= \int_{A^2 \times B^2}f(y_1,x_1) \, \d\alpha
                = \int_{A\times B} f(y_1,x_1) \, \d\pi^{1,2}_{\sharp}\alpha(y_1,x_1)\\
                &= \int_{A\times B} f(y_1,x_1) \, \d P_{Y,X}(y_1,x_1).
        \end{align*}    
     Consequently, we have $I(\alpha) \ge \mathbb E_{y\sim P_Y}\big[ W_p^p(P_{X|Y=y},P_{Z|Y=y})\big]$ 
     and since $ W_{p,Y}^p(P_{Y,X},P_{Y,Z}) = \inf_\alpha I(\alpha)$, this gives the assertion.
     
    Now we prove the opposite direction $\leq$. Let $\alpha \coloneqq \int_{A} \d \delta_{y_1}(y_2) \, \d \alpha_{y_1}(x_1,x_2) \d P_Y(y_1)$ be as in $iii)$ i.e. $W_p^p(P_{X|Y=y},P_{Z|Y=y})=\int_{B^2}\|x_1-x_2\|^p\d \alpha_y$. Then
    \begin{align}
    I(\alpha)&=\int_{(A\times B)^2}\|(y_1,x_1)-(y_2,x_2)\|^p\d\delta_{y_1}(y_1)\d\alpha_{y_1}(x_1,x_2)\\
    &=\int_A\int_{B^2}\|x_1-x_2\|^p\d \alpha_y\d P_Y=\mathbb{E}_{y\sim P_Y}\big[W_p^p(P_{X|Y=y},P_{Z|Y=y})\big]
    \end{align}
    which gives the assertion and we conclude $\mathbb{E}_{y\sim P_Y}\big[W_p^p(P_{X|Y=y},P_{Z|Y=y})\big]=W_{p,Y}(P_{Y,X},P_{Y,Z})$. We also obtain 
    \begin{equation} \label{eq}
         I(\alpha)\leq \mathbb{E}_{y\sim P_Y}\big[W_p^p(P_{X|Y=y},P_{Z|Y=y})\big]=W_{p,Y}(P_{Y,X},P_{Y,Z})
        \end{equation}
    which shows that the coupling $\alpha$ from $iii)$ is optimal for $W_{p,Y}(P_{Y,X},P_{Y,Z})$.\\[1ex]
ii) For an optimal $\alpha \in \Gamma_Y^4(P_{Y,X}, P_{Y,Z})$, we have by Part i) and \eqref{mod:was} that 
\begin{align}
W_{p,Y}^p (P_{Y,X}, P_{Y,Z} ) &= \int_A W_p^p(P_{X|Y=y}, P_{Z|Y=y}) \, \d P_y(y)\\
&= \int_A \int_{B^2} \|x_1-x_2\|^p \, \d \alpha_{y,y} (x_1,x_2) \d P_Y(y). 
\end{align}
Hence we get 
$$
0= \int_A \Big( \int_{B^2} \|x_1-x_2\|^p \, \d \alpha_{y,y} (x_1,x_2) -  W_p^p(P_{X|Y=y}, (P_{Z|Y=y}) \Big) \, \d P_Y(y).
$$
The inner integrand is nonnegative which finally implies
that it is zero $P_Y$-a.e. and therefore $\alpha_{y,y}$
is an optimal plan in $W_p(P_{X|Y=y}, P_{Z|Y=y})$.
\\[1ex]
 
\hfill $\blacksquare$
\\[2ex]
\textit{Proof of Proposition \ref{ew_1}.}
Let
$\kappa: A\times B^2 \to (A\times B)^2$ be defined by $(y,x_1,x_2) \mapsto (y,x_1,y,x_2)$.
 We  show that 
 $\kappa_{\sharp}: \Gamma_Y^3(P_{Y,X}, P_{Y,Z}) \to \Gamma^4_Y(P_{Y,X}, P_{Y,Z})$ 
 is the inverse of $\pi^{1,2,4}_{\sharp}$. 
 Since $\text{Id}_{(A\times B^2)} = \pi^{1,2,4}\circ(\Delta\circ\pi^1,\pi^2,\pi^3)$, it remains to show that
 $\kappa_{\sharp}\circ\pi^{1,2,4}_{\sharp}=\text{Id}_{\Gamma_Y^4(P_{Y,X},P_{Y,Z})}$.  For $\alpha \in \Gamma^4(P_{Y,X},P_{Y,Z})$ and Borel measurable function 
 $f: (A\times B)^2 \to [0,+\infty]$, we have
    \begin{align*}
    &\int_{(A\times B)^2}f(y_1,x_1,y_2,    x_2) \, \d \kappa_{\sharp}\pi^{1,2,4}_{\sharp}\alpha 
    = 
    \int_{(A\times B)^2}f(y_2,x_1,y_2,x_2)\, \d \alpha(y_1,x_1,y_2,x_2)\\
    &= 
    \int_{A^2} \int_{B^2}f(y_2,x_1,y_2,x_2)\, \d \alpha_{y_1,y_2}(x_1,x_2) \d \pi^{1,3}_{\sharp}\alpha(y_1,y_2)\\
    &= 
        \int_{(A \times B)^2}f(y_1,x_1,y_2,x_2) \, \d \alpha_{y_1,y_2}(x_1,x_2)\d \Delta_{\sharp}P_Y(y_1,y_2)\\
    &= 
    \int_{(A     \times B)^2} f(y_1,x_1,y_2,x_2) \, \d\alpha.
    \end{align*}
    
   The second claim follows
   by
   \begin{align*}
        &\int_{(A\times B)^2}\|(y_1,x_1)-(y_2,x_2)\|^p\, \d \alpha 
        =
        \int_{A^2} \int_{ B^2 }\|(y_1,x_1)-(y_2,x_2)\|^p\, \d \alpha_{y_1,y_2} \d \pi^{1,3}_{\sharp}\alpha(y_1,y_2)
        \\
        &=
        \int_{A^2} \int_{ B^2 }\|(y_1,x_1)-(y_2,x_2)\|^p\, \d \alpha_{y_1,y_2} \d \Delta_\sharp P_Y(y_1,y_2)\\
        &= \int_A\int_{B^2} \|(y,x_1)-(y,x_2)\|^p \d \alpha_{y,y} \d  P_Y (y)\\
        &= \int_A\int_{B^2} \|x_1-x_2\|^p \, \d \pi^{2,3,4}_{\sharp}\alpha .
        \qquad \blacksquare
        \end{align*}

\section{Proofs of Section \ref{sec:gan} 
 }\label{sec:dual}
The proof uses similar arguments as the short notes \citep{thickstun} and \citep{Basso2015AH}, which are derivations for the dual for the "usual" Wasserstein distance. We adapt these ideas for our conditional Wasserstein distance.
\\[2ex] 
\textit{ Proof of Proposition \ref{prop:gan}.}
Let $C_b = C_b(A\times B)$ be the space of continuous bounded functions on $A \times B$ and  $\mathcal S$ the set of nonnegative finite Borel measures $\alpha$ on $(A\times B)^2$ which are supported  at most on the y-diagonal.
By \citep[Section 1.2]{santambrogio2015optimal}, we know that
\begin{align*}
\sup_{f,g\in C_b(A\times B)}\E_{Y,X}[f] + \E_{Y,Z}[g] - \int_{(A \times B)^2} (f+g)\, \d\alpha
=\begin{cases}
    0 \quad\text{ if } \alpha\in\Gamma(P_{Y,X},P_{Y,Z}),\\
    \infty \quad\text{otherwise}. 
\end{cases}
\end{align*}
Using this relation, we obtain
\begin{align*}
 W_{1,Y}(P_{Y,X},P_{Y,Z}) &= \inf_{\alpha\in \Gamma_Y^4}\int_{}\|(y_1,x_1)-(y_2,x_2)\| \, \d\alpha \\
 &=\inf_{\alpha \in \mathcal S} \sup_{f,g \in C_b} L(\alpha, f,g)
 \end{align*}
with the Lagrangian
\begin{align} \label{lagrangian}
    L(\alpha,f,g)&\coloneqq\mathbb{E}_{Y,X}[f] + \mathbb{E}_{Y,Z}[g]  \\
    &\quad + \underset{(A\times B)^2}{\int} \Vert(y_1,x_1)-(y_2,x_2)\Vert -f(y_1,x_1) -g(y_2,x_2)  \, \d \alpha .
\end{align}
By Corollary \ref{thm:change} below, strong duality holds true, so that
we can exchange infimum and supremum to get  
$$
W_{1,Y}(P_{Y,X},P_{Y,Z}) = \sup_{f,g \in C_b} \inf_{\alpha \in \mathcal S}  L(\alpha, f,g).
$$
From this, we see that the optimal $f,g$ have to fulfill
\begin{align}\label{eq:leq}
f(y,x_1) + g(y,x_2) \leq \Vert x_1 - x_2 \Vert
\end{align}
for all $y\in A$, since otherwise the attained infimum is $-\infty$. 
Therefore we have for the optimal $f,g$ that 
$L(\alpha,f,g)\geq \mathbb{E}_{Y,X}[f] + \mathbb{E}_{Y,Z}[g] $ and choosing the plan $\alpha = 0 \in \mathcal S$, we obtain 
$$\inf_{\alpha\in \mathcal S} L(\alpha,f,g) = \mathbb{E}_{P_{Y,X}}[f] + \mathbb{E}_{P_{Y,Z}}[g]$$ for all $(f,g)\in \tilde{\mathcal F}$, where 
\[
\tilde{\mathcal F}:= \{(f,g) \in (C_b(A\times B))^2: f(y,x_1) + g(y,x_2) \leq \Vert x_2 - x_2 \Vert\}.
\]
Consequently, we get
\begin{align}\label{eq:wsup}
W_{1,Y}(P_{Y,X},P_{Y,Z}) = \sup_{(f,g) \in  \tilde{\mathcal F}} \mathbb{E}_{Y,X}[f] + \mathbb{E}_{Y,Z}[g].
\end{align}
For $(f,g)\in  \tilde{\mathcal F}$, we define 
$\tilde{f}(y,x) \coloneqq \inf_{u \in B} \Vert x-u \Vert - g(y,u)$. Then 
\begin{align}
    \tilde{f}(y,x) &= \inf_{u \in B} \{\|x-u\| - g(y,u)\} \\
    &\leq \inf_{u \in B} \{\|x-z\| + \|z-u\| - g(y,u)\} \\
    &= \tilde{f}(y,z) + \|x-z\|
\end{align}
    shows the 1-Lipschitz continuity of $\tilde{f}$ with respect to the second component. 
    Using \eqref{eq:leq} we obtain that $\tilde{f}(y,x) \geq f(y,x)$. 
    Since $\tilde{f}(y,x) \leq \|x-x\| - g(y,x)$, we conclude 
    \begin{align}\label{eq:ftild}
    f(y,x) \leq \tilde{f}(y,x) \leq -g(y,x).
    \end{align}
Thus, $\tilde{f}$ is bounded. As pointwise infimum over continuous functions, $\tilde{f}$ is upper semicontinuous in $(y,x)$. In summary, we have that $\tilde{f} \in \mathcal F$. 
By \eqref{eq:wsup} and  \eqref{eq:ftild}, we conclude 
$$
W_{1,Y}(P_{Y,X},P_{Y,Z})
= \sup_{(f,g) \in \tilde {\mathcal F}} \left\{ \mathbb{E}_{Y,X}[f] + \mathbb{E}_{Y,Z}[g] \right\}\\
\leq 
 \sup_{h \in \mathcal F} \left\{ \mathbb{E}_{Y,X}[h] - \mathbb{E}_{Y,Z}[h] \right\}
 $$
 and further for $\alpha \in \Gamma^4_Y(P_{Y,X}, P_{Y,Z}) \subset \Gamma(P_{Y,X}, P_{Y,Z})$ that
\begin{align*}
\sup_{h \in \mathcal F} \left\{ \mathbb{E}_{Y,X}[h] - \mathbb{E}_{Y,Z}[h] \right\}
&\leq \sup_{h \in \mathcal F} \inf_{\alpha \in \Gamma_Y^4}\int_{(A\times B)^2}h(y_1,x_1)-h(y_2,x_2) \d \alpha\\
&= \sup_{h\in \mathcal F}\inf_{\alpha \in \Gamma_Y^4}\int_{(A\times B)^2}h(y_1,x_1)-h(y_1,x_2)\d \alpha \\
&\leq \inf_{\alpha \in \Gamma_Y^4} \int_{(A \times B)^2} \Vert x_1 - x_2 \Vert \d\alpha \\
& = \inf_{\alpha \in \Gamma_Y^4}\int_{(A \times B)^2} \Vert (y_1,x_1)-(y_2,x_2)\Vert \d\alpha \\
&= W_{1,Y}(P_{Y,X},P_{Y,Z}).
\end{align*}
Thus, 
$W_{1,Y}(P_{Y,X},P_{Y,Z}) = \sup_{h\in \mathcal F} \{\mathbb{E}_{Y,X}[h] - \mathbb{E}_{Y,Z}[h]\}$, which finishes the proof.
\hfill $\blacksquare$
\\[1ex]

The proof of strong duality relies on the following minimax principle 
from \citep[Theorem 7 Chapter 6]{aubin2006applied}.

\begin{theorem} \label{minmax}
Let $X$ be a convex subset of a topological vector space, and $Y$ be a convex subset of a vector space. Assume $f: X \times Y \to \R$ satisfies the following conditions:
\begin{itemize}
    \item[i)] For every $y\in Y$, the map $x\mapsto f(x,y)$ is lower semi continuous and convex.
    \item[ii)] There exists $y_0$ such that $x\mapsto f(x,y_0)$ is inf-compact, i.e the set $\{x\in X: f(x,y_0) \leq a\}$ 
    is relatively compact for each $a \in \R$.
    \item[iii)] For every $x\in X$, the map $y \to f(x,y)$ is convex.
\end{itemize}
Then it holds
\begin{align*}
    \inf_{x\in X} \sup_{y\in Y} f(x,y) = \sup_{y\in Y} \inf_{x\in X} f(x,y).
\end{align*}
\end{theorem}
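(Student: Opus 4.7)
The trivial inequality $\sup_y \inf_x f \le \inf_x \sup_y f$ always holds, so the content of the theorem is in the reverse direction. The plan is to argue by contradiction: fix any real $c$ with $\sup_y \inf_x f(x,y) < c < \inf_x \sup_y f(x,y)$, and exhibit some $\bar y \in Y$ (a convex combination of finitely many $y_i$'s) with $\inf_x f(x, \bar y) \ge c$, which contradicts the first inequality.

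Form the sublevel sets $C_y := \{x \in X : f(x,y) \le c\}$, which are closed and convex by (i), and observe that $c < \inf_x \sup_y f(x,y)$ says exactly $\bigcap_{y \in Y} C_y = \emptyset$. Inf-compactness (ii) gives that $\overline{C_{y_0}}$ is compact, so the family $\{C_y \cap \overline{C_{y_0}}\}_{y \in Y}$ consists of closed subsets of a compact set with empty intersection; the finite intersection property yields indices $y_1, \dots, y_n$ with $\bigcap_{i=0}^n C_{y_i} = \emptyset$. Now move to finite dimensions: define $\phi \colon X \to \R^{n+1}$ by $\phi(x) := (f(x, y_i) - c)_{i=0}^n$ so that $\phi(X) \cap (-\infty, 0]^{n+1} = \emptyset$. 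Convexity of each $f(\cdot, y_i)$ makes $K := \phi(X) + [0,\infty)^{n+1}$ convex (if $x_j$ witnesses $\phi(x_j) \le t_j$ for $j = 1, 2$, then $\theta x_1 + (1-\theta) x_2$ witnesses $\phi \le \theta t_1 + (1-\theta) t_2$), and $K$ is disjoint from the open orthant $(-\infty, 0)^{n+1}$. Hahn-Banach separation in $\R^{n+1}$ then produces nonnegative weights $\lambda_0, \dots, \lambda_n$, not all zero, with $\sum_i \lambda_i (f(x, y_i) - c) \ge 0$ for all $x \in X$. Normalising to $\mu_i := \lambda_i / \sum_j \lambda_j$ and setting $\bar y := \sum_i \mu_i y_i \in Y$, concavity of $y \mapsto f(x,y)$ yields $f(x, \bar y) \ge \sum_i \mu_i f(x, y_i) \ge c$ for every $x$, so $\inf_x f(x, \bar y) \ge c$, producing the desired contradiction.

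The main delicate point is the interplay between the topological hypotheses and the geometric separation: inf-compactness lets one pass from an uncountable intersection to a finite one via the finite intersection property, and convexity of the $f(\cdot, y_i)$ keeps the resulting finite-dimensional image amenable to Hahn-Banach. One subtlety worth flagging is that the argument really needs $y \mapsto f(x, y)$ to be \emph{concave} (so that $f(x, \bar y) \ge \sum_i \mu_i f(x, y_i)$) rather than convex as written in (iii); under full convexity in $y$ the identity can fail, and Aubin's hypothesis is usually phrased so that $y \mapsto -f(x, y)$ is convex, which amounts to concavity of $f$ in $y$.
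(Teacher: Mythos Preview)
The paper does not supply its own proof of this statement: it is quoted as a black box from Aubin's textbook and then invoked in the proof of Corollary~\ref{thm:change}. Your argument---finite intersection property via inf-compactness to reduce to finitely many $y_i$, finite-dimensional Hahn--Banach separation of $\phi(X)+[0,\infty)^{n+1}$ from the negative orthant, and passage to the convex combination $\bar y$---is a correct and standard route to the minimax identity, so there is no competing strategy to compare against.

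Your closing remark is the substantive point and is correct: as transcribed, hypothesis~(iii) asks for \emph{convexity} of $y\mapsto f(x,y)$, whereas the argument (indeed any argument for $\inf_x\sup_y=\sup_y\inf_x$) needs \emph{concavity} in $y$; with genuine convexity in both variables the identity fails in general, e.g.\ $f(x,y)=(x-y)^2$ on $[0,1]^2$ gives $\inf_x\sup_y=1/4$ but $\sup_y\inf_x=0$. This is a transcription slip---Aubin's hypothesis is concavity---and it is harmless in the paper's only application, since the Lagrangian $L(\alpha,f,g)$ in \eqref{lagrangian} is linear in $(f,g)$, hence simultaneously convex and concave. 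The paper's verification of (iii) in the proof of Corollary~\ref{thm:change} (``linear in $(f,g)$ and therefore convex'') would accordingly read more naturally as ``and therefore concave''.
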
 

Based on the theorem we can prove the desired strong duality relation.

\begin{corollary}\label{thm:change}
For the Lagrangian in \eqref{lagrangian}
it holds 
\begin{align*}
      \underset{\alpha\in \mathcal S}{\mathsf{inf}} \sup_{f,g \in C_b} L(\alpha,f,g) = \sup_{f,g \in C_b}\underset{\alpha\in \mathcal S}{\mathsf{inf}} L(\alpha,f,g) .
\end{align*}
\end{corollary}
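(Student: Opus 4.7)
My plan is to apply the Aubin minimax theorem (stated just before the corollary) with $X=\mathcal S$ viewed as a convex subset of the locally convex topological vector space $\mathcal M((A\times B)^2)$ of finite signed Borel measures on the compact set $(A\times B)^2$, equipped with the weak-$\ast$ topology induced by pairing with $C((A\times B)^2)$, and $Y = C_b(A\times B)\times C_b(A\times B)$ with its natural vector space structure. Convexity of $\mathcal S$ is immediate: nonnegativity and support in the closed set $\{y_1=y_2\}$ are both preserved under convex combinations. Convexity of $Y$ is trivial.

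Conditions (i) and (iii) of the minimax theorem are almost free because $L$ is affine in each of its arguments separately; thus $\alpha \mapsto L(\alpha,f,g)$ and $(f,g)\mapsto L(\alpha,f,g)$ are both convex. For the lower semicontinuity part of (i), I would note that, for fixed $(f,g)\in C_b^2$, the integrand
\[
c_{f,g}(y_1,x_1,y_2,x_2) \;=\; \|(y_1,x_1)-(y_2,x_2)\|-f(y_1,x_1)-g(y_2,x_2)
\]
is continuous and bounded on the compact space $(A\times B)^2$. Hence $\alpha \mapsto \int c_{f,g}\,\d\alpha$ is weak-$\ast$ continuous, and the remaining terms $\E_{Y,X}[f]+\E_{Y,Z}[g]$ do not depend on $\alpha$, so $\alpha\mapsto L(\alpha,f,g)$ is actually continuous, in particular lower semicontinuous.

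The hard part will be the inf-compactness condition (ii), because $\mathcal S$ contains measures of arbitrary total mass and so cannot be weak-$\ast$ relatively compact without an a priori mass bound. My idea is to exploit a clever choice of $(f_0,g_0)$ that forces the integrand in $L$ to be bounded below by a strictly positive constant, thereby controlling $\alpha((A\times B)^2)$. Concretely, I would pick $f_0 \equiv g_0 \equiv -K$ for any fixed $K>0$; then
\[
L(\alpha,f_0,g_0)\;=\;-2K+\int \bigl(\|(y_1,x_1)-(y_2,x_2)\|+2K\bigr)\,\d\alpha\;\ge\;-2K+2K\,\alpha\bigl((A\times B)^2\bigr),
\]
so the sublevel set $\{L(\alpha,f_0,g_0)\le a\}$ lies inside $\{\alpha\in\mathcal S:\alpha((A\times B)^2)\le (a+2K)/(2K)\}$. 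Since $(A\times B)^2$ is compact, the Banach--Alaoglu theorem (applied to the dual of $C((A\times B)^2)$) shows that mass-bounded sets of nonnegative Borel measures are weak-$\ast$ relatively compact. Finally, I would observe that the condition $(\pi^{1,3})_{\sharp}\alpha$ being concentrated on the closed diagonal $\{y_1=y_2\}$ is preserved under weak-$\ast$ limits (testing against continuous functions vanishing near the diagonal), so $\mathcal S$ is itself weak-$\ast$ closed. Intersecting gives the required relative compactness, the minimax theorem applies, and strong duality follows.
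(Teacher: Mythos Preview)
Your proposal is correct and follows essentially the same route as the paper: both apply the Aubin minimax theorem with $X=\mathcal S$, verify conditions (i) and (iii) via affinity of $L$ in each argument, and establish the inf-compactness condition (ii) by choosing constant negative functions $f_0=g_0\equiv -K$ (the paper takes $K=1$) to force a uniform total-mass bound, then invoke compactness of $(A\times B)^2$ to conclude relative compactness of the sublevel sets. Your additional remark that $\mathcal S$ is weak-$\ast$ closed is not strictly needed for relative compactness as stated in the theorem, but it does no harm.
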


\begin{proof} 
We will verify the conditions in Theorem \ref{minmax}.
Recall that $\mathcal S$ is the set of finite nonnegative Borel measures $\alpha$ on $(A\times B)^2$ such that there exists a finite nonegative finite measure $\beta$ on $B$ with $\pi^{1,3}_{\sharp}\alpha = \Delta_{\sharp} \beta$. Let $\mathcal{M}$ be the topological vector space of finite signed Borel measures on $(A\times B)^2$ with weak convergence topology. Thus, since the pushforward is linear on $\mathcal{S}$, 
we conclude that $\mathcal S$ is a convex subset. Now we use Theorem \ref{minmax} with $X \coloneqq \mathcal S$, $Y \coloneqq C_b\times C_b$
and $f \coloneqq L$.

\textit{Verifying i)} The map $\alpha \mapsto L(\alpha,f,g)$ is linear and continuous on $\mathcal S$  under the weak convergence of measures. This follows from the fact that the integrand of $\alpha$ in $L(\alpha, f,g)$ is in $C_b((A\times B)^2)$. 

\textit{Verifying iii)} Note that for any $\alpha \in \mathcal S$ the map $(f,g) \mapsto L(\alpha,f,g)$ 
is linear in $(f,g)$ and therefore convex.  

\textit{Verifying ii)} Setting $f(y,x) \coloneqq -1, g(y,x) \coloneqq -1$ for all $(y,x)$, we will show that for any fixed $a \in R$, the set 
\[\mathcal S_a:= \{\alpha \in  \mathcal S: L(\alpha,-1,-1) \leq a\}\]
is  relatively compact.
Since the integrand is bounded from below by $2$ and $\mathcal S$ only contains nonnegative measures, the measures in $\mathcal S_a$ are uniformly bounded in the total variation norm, since otherwise 
\begin{align*}
L(\alpha,-1,-1) = 2 + \int_{(A\times B)^2} \|(y_1,x_1)-(y_2,x_2)\| +2  \,  \d \alpha
    \end{align*}
can become arbitrary large  which contradicts  $L(\alpha,-1,-1) \leq a$.  
    Therefore the compactness of $A,B$ implies that  $\mathcal S_a$ is a family of tight measures. By \citep[Theorem 8.6.7]{Bogachev2007}, the set $\mathcal S_a$ is relatively compact in the weak topology.
\end{proof}

\section{Proofs of Section \ref{sec:velocity}}\label{space:properties}
\textit{Proof of Proposition \ref{plan:geo_dis}.}
i) We have $\mu_t\in\P_{p,Y}(\R^d\times \R^m)$ for every $t \in [0,1]$ by
\begin{align}
(\pi^1)_\sharp\mu_t
=\pi^1_\sharp (e_t)_\sharp\alpha
=
((1-t)\pi^{1} + t \pi^{3})_\sharp(\pi^{1, 3})_\sharp \alpha
= ((1-t)\pi^{1} + t \pi^{2})_\sharp \Delta_\sharp P_Y = P_Y.
\end{align}
For $s,t \in [0.1]$, let
$\alpha_{s,t}\coloneqq (e_s,e_t)_\sharp\alpha$.
By definition we see that $\alpha_{s,t} \in \Gamma(\mu_s,\mu_t)$.
Further $\pi^{1,3}_{\sharp}\alpha_{s,t}=\Delta_{\sharp}P_Y$ 
follows from 
\[
\pi^{1,3}\circ\left(e_s,e_t\right)
=
\left((1-s)\pi^{1}+s\pi^{2},(1-t)\pi^{1}+t\pi^{2}\right)\circ \pi^{1,3}
\]
and consequently
\begin{align}
    \pi^{1,3}_{\sharp}\alpha_{s,t}
    &=\left((1-s)\pi^{1}+s\pi^{2},(1-t)\pi^{1}+t\pi^{2}\right)_\sharp\pi^{1,3}_\sharp\alpha\\
    &=\left(\left((1-s)\pi^{1}+s\pi^{2},(1-t)\pi^{1}+t\pi^{2}\right)\circ \Delta\right)_\sharp P_Y
    = \Delta_\sharp P_Y.
\end{align} 
In summary, we see that $\alpha_{s,t} \in \Gamma_Y^4(\mu_s,\mu_t)$.
Thus, we have
\begin{align}
W_{2,Y}^2(\mu_s,\mu_t)
&\leq \int_{(\R^d \times \R^m)^2} \|(y_1,x_1)-(y_2,x_2)\|^2 \, \d\alpha_{s,t}\\
&=\int_{(\R^d \times \R^m)^2} \big \|(t-s)\left((x_1,y_1)-(x_2,y_2) \right) \big \|^2 \, \d\alpha\\
&= |t-s|^2 \, W_{2,Y}^2(\mu_0,\mu_1). \label{aax}
\end{align}
Finally, the desired equality follows like in \citep[Theorem 7.2.2]{ambrosio2005gradient}for $0\le s \le t \le 1$ by
$$
W_{2,Y}(\mu_0,\mu_1) \le W_{2,Y}(\mu_0,\mu_s)+ W_{2,Y}(\mu_s,\mu_t)+W_{2,Y}(\mu_t,\mu_1)
\le W_{2,Y}(\mu_0,\mu_1),
$$
which implies equality in \eqref{aax}.
\\
ii) First, we show $(e_t)_\sharp\alpha = (\pi^1,(1-t)\pi^2+t\pi^4)_\sharp\alpha$. 
For any Borel measurable  function $f: \R^d \times \R^m \to [0,\infty]$, we have indeed
\begin{align}
\int_{\R^d \times \R^m} f \, \d(e_t)_\sharp\alpha
&=\int_{(\R^{d} \times \R^m)^2} f((1-t)y_1+ty_2,(1-t)x_1+tx_2) \, \d\alpha\\
&=\int_{\R^{2d}}\int_{\R^{2m}}f((1-t)y_1+t y_2,(1-t)x_1+t x_2) \, \d\alpha_{y_1,y_2}\d \pi^{1,3}_\sharp\alpha\\
&=\int_{\R^{d}}\int_{\R^{2m}}f((1-t)y+t y,(1-t)x_1+t x_2) \, \d\alpha_{y,y}\d P_Y\\
&=\int_{\R^{d} \times \R^m}f \, \d (\pi^1,(1-t)\pi^2+t\pi^4)_\sharp\alpha.
\end{align}
Using the above relation, we obtain 
\begin{align}
&\int_{\R^d \times \R^m} f \, \d ((\mu_t)_y \otimes P_Y)
=\int_{\R^d} \int_{\R^m}f(y,x)\, \d ((1-t)\pi^1+t\pi^2)_\sharp\alpha_{y,y}(x) \d P_Y(y)
\\
&= \int_{\R^d}\int_{\R^{m\times m}}f(y, (1-t)x_1+t x_2) \, \d \alpha_{y,y}(x_1,x_2)\d P_Y(y)
\\
&= \int_{\R^{2d}} \int_{\R^{2m}} f(y_1, (1-t)x_1+t x_2) \, \d \alpha_{y_1,y_2}(x_1,x_2) \d \Delta_\sharp P_Y(y_1)\\
&= \int_{(\R^{d} \times \R^m)^2}
 f(y_1, (1-t)x_1+t x_2)\d \alpha(y_1,x_1,y_2,x_2)\\
&=\int_{(\R^{d} \times \R^m)^2} f \, \d(e_t)_{\sharp}\alpha
=\int_{\R^d \times \R^m}f \, \d\mu_t,
\end{align}
which proves that $(\mu_t)_y$ is indeed the disintegration of $\mu_t$
with respect to $P_Y$.
By Proposition \ref{cond:plan} ii) we know that
$\alpha_{y,y} \in \mathcal P(\R^{2m})$ is optimal in \eqref{eq:wasserstein}
for $P_y$-a.e. $y \in \R^d$.
By \eqref{eq:geodesic_plan} this implies that $(\mu_t)_y$ is a geodesic in $\mathcal P_2(\R^m)$. 
\\[1ex]
iii) 
Recall (see \citep[Section 5.1]{ambrosio2005gradient}) that a sequence $\mu_k\in \P(\R^n)$ is said to converge weakly to $\mu\in \P(\R^n)$ if 
$\lim_{k\to\infty}\int_{\R^n}f(x) \, \d\mu_k(x) = \int_{\R^n}f(x) \, \d\mu(x)$ 
for all $f \in C_b(\R^n)$.
By the dominated convergence theorem, we have for $\mu_s = (e_s)_\sharp \alpha$ and every 
$f \in C_b(\R^{d} \times \R^m)$ that
\begin{align}
\lim_{s\to t} \int_{\R^{d} \times \R^m} f \, \d \mu_s
&=\lim_{s\to t}\int_{(\R^{d} \times \R^m)^2} f((1-s)(y_1,x_1)-s(y_2,x_2)) \, \d \alpha\\
&=\int_{(\R^{d} \times \R^m)^2} f((1-t)(y_1,x_1)-t(y_2,x_2))\d \alpha
=\int_{\R^{d} \times \R^m} f \, \d\mu_t,
\end{align}
which finishes the proof.
 \hfill$\blacksquare$
\\[2ex]
\textit{Proof of Proposition \ref{prop:continuity}.}
The proof of $i),ii),iv)$ is almost identical to the proofs of \citep[Theorem 17.2, Lemma 17.3]{ambrosio2021lectures}. For the convenience of the reader and because we also need measurability of $v_t$ in $t$ we include a slight adaptation of their proof.
We let $e:[0,1]\times \R^{d}\times \R^m\times \R^d\times \R^m\to [0,1]\times\R^{d}\times \R^m$ be defined by $e(t,y_1,x_1,y_2,x_2)=(t,(1-t)(y_1,x_1)+t(y_2,x_2))$. For $\mathcal{L}$ the Lebesgue measure we have that 
\begin{align}
\int_{[0,1]\times\R^{d+m}}f(t,y,x)\d e_\sharp\left(\mathcal{L}\otimes \alpha\right)&= \int_{[0,1]}\int_{\R^{2d+2m}}f(t,e_{t}(y_1,x_1,y_2,x_2))\d\alpha\d t\\
&=\int_{[0,1]}\int_{\R^{d+m}}f(t,y,x)\d e_{t,\sharp}\alpha \d t=\int_{[0,1]}\int_{\R^{d+m}}f(t,y,x)\d \mu_t \d t
\end{align}
for every bounded $\mathcal{L}\otimes\alpha$ measurable function $f$. In particular using $f=1_{B}$ for a Borel set $B\in\mathcal{B}(\R^d\times \R^m)$ we see that $\mu_t:[0,1]\times \mathcal{B}\to \R$ is a Markov kernel, i.e. the measure $\int_{0}^1\mu_t\d t$ is well defined, and $e_\sharp \left(\mathcal{L}\otimes \alpha\right)=\int_0^1\mu_t\d t$.
Hence \citep[Lemma 17.3]{ambrosio2021lectures}
(with $e \coloneqq e$, $\mu = \mathcal{L}\otimes\alpha$, $v = (y_2,x_2) - (y_1,x_1)$, $w = v$) implies that there exists $v\in L^2_{\int\mu_t\d t}([0,1]\times \R^d\times \R^m)$ such that 
\begin{align}\label{eq:push=int}
e_\sharp\left(((y_2,x_2)-(y_1,x_1))\mathcal{L}\otimes\alpha\right)=v\int_0^1\mu_t\d_t
\end{align}
and we can choose a Borel measurable representative of $v.$ 
Since for test functions $f$ we have the following identities
\begin{align}
\int_{[0,1]\times \R^d\times\R^m}f\d e_\sharp\left(((y_2,x_2)-(y_1,x_1))\mathcal{L}\otimes\alpha\right)&=\int_0^1\int_{\R^d\times\R^m}f\d e_{t,\sharp}\left(((y_2,x_2)-(y_1,x_1))\alpha\right)\d t\\
\int_{[0,1]\times \R^d\times\R^m} f \d \left(v\int_0^1\mu_t\d_t\right)&=\int_0^1\int_{\R^d\times\R^m}f v_t\d \mu_t\d t
\end{align}
we obtain from \eqref{eq:push=int} that $e_{t,\sharp}\left(((y_2,x_2)-(y_1,x_1))\alpha\right)=v(t,y,x)\mu_t$ for $\mathcal{L}$ a.e. $t\in[0,1]$ which shows $i)$. By \citep[Lemma 17.3]{ambrosio2021lectures} we obtain that \[\|v_t(y,x)\|_{L^2_{\mu_t}}\leq \|((y_2,x_2)-(y_1,x_1))\|_{L^2_{\alpha}}=W_{2,Y}(\mu_0,\mu_1)\] for a.e. $t\in[0,1]$ and hence we can conclude $ii)$.

Towards iii), note that it holds for any Borel measurable set $U\subseteq (\R^d \times \R^m)^2$ and $j \leq d$ that 
\begin{align}
\left|\int_U (y_2)_j-(y_1)_j \, \d \alpha\right|
&\leq \int_U|(y_2)_j-(y_1)_j| \, \d \alpha 
\leq \int_{\pi^{1,3}(U)}|(y_2)_j-(y_1)_j| \, \d \pi^{1,3}_\sharp\alpha \\
&= \int_{\pi^{1,3}(U)}|(y_2)_j-(y_1)_j| \, \d \Delta_\sharp P_Y\\
&= \int_{\Delta^{-1}(\pi^{1,3}(U))}|y_j-y_j|\, \d P_Y=0.
\end{align}
Thus, for any Borel measurable set $V\subseteq [0,1]\times\R^d \times \R^m$ and $j \leq d$, we obtain by Part i) that
\begin{align}
\int_V v_j \, \d \mu_t\d t 
&= \int_V \, \d e_\sharp((y_2)_j- (y_1)_j)(\mathcal L\otimes\alpha))
=\int_0^1\int_{\tilde{V}}((y_2)_j- (y_1)_j) \, \d \alpha\d t=0
\end{align}
where $\tilde{V}=(\pi^{t})^{-1}(e^{-1}(V))$.
This implies $(v(t,y,x))_j=0$ for $\int_0^1\mu_t\d t$-a.e. $(t,y,x)\in [0,1]\times\R^d \times \R^m$ and we can choose a Borel measurable representative of $v$ such that $v_j=0$ for all $(t,y,x)\in [0,1]\times\R^d\times\R^m $\\[1ex]
Next we proof $iv)$. Let $\varphi\in C_c^{\infty}((0,1)\times\R^d\times\R^m)$. Then by the chain rule
\[
\frac{\partial}{\partial t}(\varphi(t,e_t))=\left(\frac{\partial}{\partial t}\varphi\right)\circ (t,e_t)+\langle \nabla_{y,x}\varphi(t,e_t),(y_2,x_2)-(y_1,x_1)\rangle.
\]
Consequently
\begin{align}
\int_0^1\int_{\R^{d+m}}\frac{\partial}{\partial t}\varphi\d\mu_t\d t &=\int_0^1\int_{\R^{2d+2m}}\left(\frac{\partial}{\partial t}\varphi\right)\circ (t,e_t)\d\alpha\d t\\
&=\int_0^1\int_{\R^{2d+2m}}\frac{\partial}{\partial t}(\varphi(t,e_t))-\langle \nabla_{y,x}\varphi(t,e_t),(y_2,x_2)-(y_1,x_1)\rangle\d\alpha\d t\\
&=\int_0^1\frac{\partial}{\partial t}\int_{\R^{2d+2m}}\varphi(t,e_t))\d \alpha\d t\\
&-\int_0^1\int_{\R^{2d+2m}}\langle \nabla_{y,x}\varphi(t,e_t),(y_2,x_2)-(y_1,x_1)\rangle\d\alpha\d t\\
&=0-\int_0^1\int_{\R^{d+m}}\langle \nabla_{y,x}\varphi,e_{t,\sharp}(y_2,x_2)-(y_1,x_1)\d\alpha\rangle\d t\\
&=-\int_0^1\int_{\R^{d+m}}\langle \nabla_{y,x}\varphi,v_t\rangle\d\mu_t\d t
\end{align}

where we used 
\[
\int_0^1\frac{\partial}{\partial t}\int_{\R^{2d+2m}}\varphi(t,e_t))\d \alpha\d t=\int_{\R^{2d+2m}}\varphi(1,e_1))\d \alpha-\int_{\R^{2d+2m}}\varphi(0,e_0))\d \alpha=0
\]
since $\varphi(1,\cdot)=\varphi(0,\cdot)=0$ because $\varphi$ is compactly supported on $(0,1)\times \R^{d}\times \R^m$.
\hfill $\blacksquare$
\\[2ex]

For the proof of  Proposition \ref{prop:flow_exists} we need the following proposition. Since we have not found a proof in the literature, we give it for convenience.

\begin{proposition}\label{prop:empi_smooth_flow}
Let $\mu_0, \mu_1\in (\P_2(\R^m), W_2)$ which fulfill one of the following conditions:
\begin{itemize}
    \item[i)] $\mu_0,\mu_1$ are empirical measures with the same number of points and $T$ is an optimal map with associated optimal plan $\alpha\in\Gamma(\mu_0,\mu_1)$, or
    \item[ii)] $\mu_0,\mu_1$ both admit densities $\rho_0,\rho_1$ which are supported on open, convex, bounded, connected subsets $\Omega_0,\Omega_1 \subset \R^m$ on which they are bounded away from $0$ and $\infty$. 
    Assume further that $\rho_0 \in C^2(\Omega_0),\rho_1 \in C^2(\Omega_1)$. 
    Let $T$ be the optimal Monge map with associated optimal plan 
    $\alpha\in\Gamma(\mu_0,\mu_1)$.
\end{itemize}
Let $\mu_t = (e_t)_\sharp \alpha$ and $v_t\in L^2_{\mu_t}(\R^m,\R^m)$ 
with $v_t\mu_t= (e_t)_\sharp(x_2-x_1)\alpha$ which then satisfy the continuity equation.
Then there is a Borel measurable representative $v_t$ such that there exists a solution of the flow equation
\begin{align}
    \frac{\d}{\d t}\phi_t&=v_t(\phi_t),\\
    \phi_0(x)&= x,
\end{align}
and $\mu_t=\phi_{t,\sharp}\mu_0$. 
Furthermore, we have
\[
v_t(\phi_t(x))=T(x)-x
\]
for $\mu_0$-a.e. $x \in \R^m$.
\end{proposition}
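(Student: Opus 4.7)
My plan is to build the flow explicitly via McCann interpolation and read off the vector field from the optimal map $T$. Define $T_t \colon \R^m \to \R^m$ by $T_t(x) := (1-t)x + t\,T(x)$ and set $\phi_t := T_t$. In both cases (i) and (ii) the optimal plan is of Monge form $\alpha = (\mathrm{Id},T)_{\sharp}\mu_0$: for (i) by labelling $T$ as the optimal permutation matching, for (ii) by Brenier's theorem. Since $e_t \circ (\mathrm{Id},T) = T_t$, I immediately obtain
\begin{align}
 \mu_t = (e_t)_{\sharp}\alpha = (T_t)_{\sharp}\mu_0 = (\phi_t)_{\sharp}\mu_0,
\end{align}
and $\tfrac{\d}{\d t}\phi_t(x) = T(x)-x$ holds pointwise.

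Next I choose a representative of the $L^2_{\mu_t}$-class $v_t$ so that $v_t(\phi_t(x)) = T(x)-x$ for $\mu_0$-a.e.\ $x$. For this to be well-defined I need $T_t$ to be injective on $\supp(\mu_0)$ for $t \in [0,1)$, after which I set $v_t(y) := T(T_t^{-1}(y)) - T_t^{-1}(y)$ on $T_t(\supp(\mu_0))$ and extend arbitrarily (e.g.\ by zero) off this set. Injectivity is handled in each case separately. In case (i), if $T_t(x_i) = T_t(x_j)$ then $(1-t)(x_i-x_j) = t(T(x_j) - T(x_i))$; pairing with $x_i - x_j$ and invoking the monotonicity $\langle x_i - x_j, T(x_i)-T(x_j)\rangle \ge 0$ (which follows from cyclical monotonicity of the optimal matching) forces $(1-t)\|x_i-x_j\|^2 \le 0$, hence $x_i = x_j$. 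In case (ii), Brenier gives $T = \nabla\varphi$ for a convex $\varphi$, so $T_t = \nabla\bigl((1-t)\tfrac{|\cdot|^2}{2} + t\varphi\bigr)$ is the gradient of a strongly convex function with modulus $1-t>0$, hence injective; Caffarelli's regularity theory under the stated assumptions on $\rho_0,\rho_1,\Omega_0,\Omega_1$ upgrades $\varphi$ to $C^{2,\alpha}$, so $T_t$ is in fact a $C^1$-diffeomorphism onto its image and the chosen representative is continuous.

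With this representative the ODE $\dot\phi_t = v_t(\phi_t)$, $\phi_0 = \mathrm{Id}$ is satisfied pointwise for $\mu_0$-a.e.\ $x$ by construction, and $(\phi_t)_{\sharp}\mu_0 = \mu_t$ from the first paragraph. It remains to confirm that this representative lies in the original equivalence class prescribed by $v_t\mu_t = (e_t)_{\sharp}((x_2-x_1)\alpha)$. For any $\psi \in C_c(\R^m)$,
\begin{align}
 \int \psi(y)\,v_t(y)\,\d\mu_t(y) = \int \psi(T_t(x))\bigl(T(x)-x\bigr)\,\d\mu_0(x) = \int \psi(e_t(x_1,x_2))(x_2-x_1)\,\d\alpha,
\end{align}
which is exactly the defining identity, so uniqueness of the Radon--Nikodym factor $v_t$ gives agreement $\mu_t$-a.e. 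The main obstacle is the injectivity (and regularity) of $T_t$ in case (ii): the convexity argument shows injectivity in $[0,1)$, but to guarantee that the chosen representative is a bona fide Borel vector field on which the ODE makes classical sense, I need Caffarelli regularity to make $T_t^{-1}$ continuous on the image. Everything else reduces to routine push-forward manipulations.
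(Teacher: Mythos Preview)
Your proof is correct and follows essentially the same route as the paper: construct $\phi_t = T_t = (1-t)\mathrm{Id} + tT$, establish injectivity of $T_t$ for $t\in[0,1)$, define the representative $v_t = (T-\mathrm{Id})\circ T_t^{-1}$ on the image, and verify both the ODE and the push-forward identity. The only differences are cosmetic: in case~(i) the paper simply reads off $v_t(T_t(a_i)) = T(a_i)-a_i$ at the finitely many support points without explicitly arguing injectivity, whereas you supply the monotonicity argument; in case~(ii) the paper deduces that $\nabla T_t$ is positive definite from $\det\nabla T\neq 0$ together with positive semidefiniteness of $\nabla T$, while you use the equivalent observation that $T_t$ is the gradient of a $(1-t)$-strongly convex function.
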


\begin{proof}
i): Let $\mu_0=\frac  1 n \sum_{i=1}^n\delta_{a_i}, \mu_1=\frac 1 n \sum_{i=1}^n\delta_{b_i}$ and let $T$ be a optimal map. The associated optimal plan is then $\alpha=\frac 1 n\sum_{i=1}^n\delta_{a_i,T(a_i)}$. Using $e_{t,\sharp}(x_2-x_1)\alpha=v_t\mu_t$ and $\mu_t=\frac 1 n \sum_{i=1}^n\delta_{T_t(a_i)}$ for $T_t(x)=(1-t)x+tT(x)$ we can conclude 
\[
v_t((1-t)a_i+tT(a_i))=T(a_i)-a_i.
\]
Furthermore, we have 
\begin{align}
\frac{\d}{\d t}T_t(a_i) = T(a_i)-a_i = v_t(T_t(a_i)),
\end{align}
and thus $\phi_t:=T_t$ fulfills the flow equation and $v_t(\phi_t(x))=T(x)-x$ for $\mu_0$-a.e. $x\in \R^m$.
\\[1ex]
ii):
First, note that by \citep[(16.12)]{ambrosio2021lectures} if there exists an invertible Monge map $T$ then the geodesic between $\mu_0,\mu_1$ fulfills the continuity equation with vector field 
\[
v_t= (T-\text{Id})\circ T_t^{-1}
\]
where $T_t=(1-t)\text{Id} + tT$.
By Caffarelli's regularity Theorem \citep[Theorem 12.50, ii)]{villani2009optimal}, we get the existence of a unique Monge map $T\in C^{1}(\Omega_0)$ mapping $\mu_0$ to $\mu_1$ and $U\in C^1(\Omega_1)$ mapping $\mu_1$ to $\mu_0$. By \citep[Theorem 5.2]{ambrosio2021lectures} we know that $T\circ U=\text{Id}$ on $\Omega_1$ and $U\circ T = \text{Id} $ on $\Omega_0$ and thus $T:\Omega_0\to\Omega_1$ is a $C^1$ diffeomorphism and in particular $\det(\nabla T)\neq 0$ on $\Omega_0$. Since we know by \citep[Proposition 6.2.12]{ambrosio2005gradient} that $\nabla T$ is positive definite $\mu_1$ a.e. on $\Omega_0$ we can deduce from $\det(\nabla T)\neq 0$ that $\nabla T$ is positive definite on $\Omega_0$. Consequently for $T_t=(1-t)\text{Id} + tT$ we have that $\nabla T_t=(1-t)\text{Id}+ t\nabla T$ is positive definite on 
$\Omega_0$ and thus the image of $\Omega_0$ under $T_t$ is open. Furthermore, we know by the proof of \citep[Proposition 6.2.12]{ambrosio2005gradient} that $T_t$ as a Monge map from $\mu_0$ to $\mu_t$ is injective on all points where $\nabla T_t$ is positive definite, which is on the whole $\Omega_0$, and thus $T_t $ is a diffeomorphism onto its image. Consequently, it possesses a $C^1$ inverse $T_t^{-1}:T_t(\Omega_0)\to \Omega_0$. Furthermore $(t,x)\mapsto (t,T_t(x))$ is an bijective Borel map from $[0,1]\times\Omega_0$ onto its Borel measurable image which we denote by $\Omega\subset [0,1]\times \R^m$. Thus $T_t^{-1}$ is a Borel measurable map from $\Omega\to \Omega_0$. Then for $v_t:= (T-\text{Id})\circ T_t^{-1}:T_t(\Omega_0)\to \R^m$ we have that $v_t$ is Borel measurable on $\Omega$ and also $\phi_t=T_t:\Omega_0\to \R^m$ is Borel measurable. 
Furthermore, we have
\begin{align}
\frac{\d}{\d t}\phi_t(x)=T(x)-x=(T-\text{Id})\circ T_t^{-1}(T_t(x))=v_t(\phi_t(x)).
\end{align}
Since we can set $\phi_t(x)=x$ on $\R^m\setminus \Omega_0$ and $v_t(x)=0$ for $x\in \R^m\setminus T_t(\Omega_0)$, we obtain the claim.
\end{proof}
\vspace{0.2cm}

\noindent
\textit{Proof of Proposition \ref{prop:flow_exists}.}
We will use the results from Proposition \ref{prop:empi_smooth_flow} and stack them with respect to $y_i$. The main obstruction is the measurability of the resulting objects which we address in the following.\\
For $e_t((y_1,x_1),(y_2,x_2))=(1-t)(y_1,x_1)+ t(y_2,x_2)$ and $\tilde{e}_t(x_1,x_2)=(1-t)x_1+tx_2$, it holds 
\begin{align}
\int_{\R^d \times \R^m}f(y,x) \, \d(e_t)_\sharp((y_2,x_2)-(y_1,x_1)\alpha)&=\int_{(\R^d \times \R^m)^2}f\circ e_t \cdot((y_2,x_2)-(y_1,x_1)) \, \d\alpha\\
&=\frac{1}{n}\sum_{i=1}^n\int_{\R^{2m}}f\circ e_t\cdot((y_i,x_1),(y_i,x_2))\, \d\alpha_{y_i}\\
&=\frac{1}{n}\sum_{i=1}^n\int_{\R^{2m}}f((y_i,\tilde{e}_t(x_1,x_2)))\, (0,x_2-x_1)\d\alpha_{y_i}\\
&=\frac{1}{n}\sum_{i=1}^n\int_{\R^{2m}}f\d(\tilde{e}_t)_\sharp(0,x_2-x_1)\alpha_{y_i}
\end{align}
and thus $(e_t)_\sharp((y_2,x_2)-(y_1,x_1)\alpha)=\frac  1 n\sum_{i=1}^n\delta_{y_i}\otimes \left(0,(\tilde{e}_t)_\sharp((x_2-x_1)\alpha_{y_i})\right)$. Combining with Proposition \ref{prop:continuity}, we conclude
\begin{align}
        v_t\mu_t&= \frac  1 n\sum_{i=1}^n\delta_{y_i}\otimes \left(0,(\tilde{e}_t)_\sharp((x_2-x_1)\alpha_{y_i})\right).
\end{align}
Furthermore, we have
\begin{align}
v_t\mu_t&=\int_{\R^d}v_t\d\mu_{t,y}\d P_Y=
    \frac 1 n\sum_{i=1}^n\delta_{y_i}\otimes v_t(y_i,\cdot)\mu_{t,y_i},
\end{align}
which implies $(\tilde{e}_t)_\sharp\left((x_2-x_1)\alpha_{y_i}\right)= \pi^2\circ(v_t(y_i,\cdot))\mu_{t,y_i}$ for all $i\in\{1,\ldots,n\}$. By Proposition \ref{prop:empi_smooth_flow} we know that there exists $\tilde{v}_{t,y_i}\in L^2(\mu_{t,y_i})$ with $(\tilde{e}_t)_\sharp\left(x_2-x_1))\alpha_{y_i}\right)= \tilde{v}_{t,y_i}(\cdot)\mu_{t,y_i}$ such that 
there exists a $\mu_{0,y_i}$-measurable solution $\phi_{t,y_i}$ of 
\begin{align}
\frac{\d}{\d t} \phi_{t,y_i} &= \tilde{v}_{t,y_i}\left(\phi_{t,y_i}\right)\\
\phi_{0,y_i}(x)&=x
\end{align}
for $\mu_{0,y_i}$ a.e. $x\in\R^m$ and $\mu_{t,y_i}=(\phi_{t,y_i})_\sharp\mu_{0,y_i}$.
Since $P_Y$ is a finite empirical measure also $\phi_t:\R^d \times \R^m\to \R^d \times \R^m$ 
defined on $(y_i,x) $ as $(y_i,\phi_{t,y_i}(x))$ is $\mu_t$ measurable and $\tilde{v}_t:(y_i,x)\mapsto (0,\tilde{v}_{t,y_i}(x))$ is in $L^2_{\mu_t}$ and coincides with $v_t$ as element of $L^2_{\mu_t}$. 
The latter is true since they coincide on $\{y_i\}\times \R^m$ up to a $\mu_{t,y_i}$ null set $\mathcal{N}_i$ because of
\[
\pi^2\circ(v_t(y_i,\cdot))\mu_{t,y_i}=(\tilde{e}_t)_\sharp\left((x_2-x_1)\alpha_{y_i}\right)=  \tilde{v}_{t,y_i}\mu_{t,y_i}.
\]
 Thus they coincide up to the set 
\[
\cup_{i=1}^n\{y_i\}\times \mathcal{N}_i\cup \{(y,x)\in \R^{d+m}:y\notin\{y_1,\ldots,y_n\}\}
\]
which is a $\mu_{t}$ null set.
Hence 
\begin{align}
\frac{\d}{\d t}\phi_t=\tilde{v}_t(\phi_t)
\end{align}
for $\mu_0$-a.e. $(y,x)\in\R^d \times \R^m$. Note that since $\tilde{v}_{t,y_i}$ is Borel measurable on $[0,1]\times \R^m$ we can assume that $\tilde{v}_t$ is Borel measurable on $[0,1]\times \R^d\times\R^m$ and similarly for $\phi_t$. Furthermore
\begin{align}
(\phi_t)_\sharp\mu_0(a\times b) &= \int_{(y,\phi_{t,y}(x))\in a\times b}\d \mu_0=\int_{y\in a}\int_{\phi_{t,y}(x)\in b}\d \mu_{0,y}(x)\d P_Y(y)\\
&= \int_{a}\int_{b}\d \phi_{t,y,\sharp}\mu_{0,y}\d P_Y(y) = \int_a\int_b\d \mu_{t,y}\d P_Y(y)\\
&= \mu_t(a\times b)
\end{align}
shows $\mu_t = (\phi_{t})_\sharp \mu_0$. The last claim follows from 
\begin{align}
    \tilde{v}_t((y_i,\phi_{t,y_i}(x))= (0,T_{y_i}(x)-x)
\end{align}
for $\mu_{0,y_i}$-a.e. $x\in\R^d$.
\hfill $\blacksquare$\\
\\[2ex]
\textit{Proof of Proposition \ref{prop:cont_Monge}}\\
In this paragraph we give a precise statement of Proposition \ref{prop:cont_Monge} as well as its proof. Recall that convex domain $\Omega \subset\R^n$ is called uniformely convex, if for every $\varepsilon > 0$ there exists a $\delta = \delta(\varepsilon) > 0$ such that for any two points $x,y \in \Omega$ satisfying $\| x - y \| \geq \varepsilon$, the midpoint $m = \frac{x + y}{2}$ 
fulfills $\mathrm{dist}\big(m, \partial \Omega\big) \geq \delta$. Here $\partial \Omega$ denotes the boundary of $\Omega$. Furthermore we say that a function $f:\Omega\to B$ for $\Omega\subset \R^n$ open and $B$ a Banach space, is a $C^1$ map if it is continously Frechet differentiable.\\
\textbf{Assumption 1.} We say that a measure $P_Y\in \P_2(\R^d)$ fulfills \emph{Assumption 1}, if there exists a uniformly convex bounded open $C^2$ subdomain $\Omega_Y\subseteq \R^d$ such that $P_Y(\Omega_Y)=1$ and it  admits a density $p_Y\in C^2(\Omega_Y)$ such that there exists $0< \delta< \epsilon$ such that $\delta\leq p_Y(y)\leq \epsilon$ for all $y\in\Omega_Y$.\\
\textbf{Assumption 2.} A measure $\mu\in \P_2(\R^d\times\R^m)$ is said to fulfill \emph{Assumption 2} if $\mu=P_Y\times \mu^Z$ for $\mu^Z\in\P_2(\R^m)$ a measure such that there exists a uniformly convex bounded open $C^2$ subdomain $\Omega_Z\subseteq \R^m$ such that $\mu_Z(\Omega_Z)=1$ and it  admits a density $p_Z\in C^2(\Omega_Z)$ such that there exists $0< \delta< \epsilon$ such that $\delta\leq p_Z(z)\leq \epsilon$ for all $z\in\Omega_Z$.\\
\textbf{Assumption 3} A measure $\mu\in P_{2,Y}(\R^d\times\R^m)$ is said to fulfill \emph{Assumption 3} if there exists a disintegration $\mu=\mu^y\times_y P_Y(y)$ such that there exists a uniformly convex bounded open $C^2$ subdomain $\Omega\subseteq \R^m$ such that $\mu^y(\Omega)=1$ and it  admits a density $p^y\in C^2(\Omega)$ such that there exists $0< \delta< \epsilon$ such that $\delta\leq p^y(x)\leq \epsilon$ for all $x\in\Omega$.\\
\begin{proposition}\label{prop:app_existence_ode}
Let $P_Y\in \P_2(\R^d)$ satisfy Assumption 1, $\mu_0=P_Y\times \mu^Z_0$ satisfy Assumption 2 and let $\mu_1=\mu_1^y\times_yP_Y$ satisfy Assumption 3 with density $p_1^y$ of $\mu_1^y$ on $\Omega$. Assume further that the map $y\mapsto p_1^y:\Omega_Y\to C^2(\Omega)$ be a $C^1$ map. Then there exists a $W_{2,Y}$-optimal transport map $T:(y,x)\mapsto (y,T_y(x))$ i.e. $\alpha=(\Id, T)_\sharp \mu_0\in \Gamma_{o,Y}(\mu_0,\mu_1)$ where $T_y$ is the optimal transport map for $\mu_0^Z$ and $\mu_1^y$.
Let $\mu_t = (e_t)_\sharp \alpha$ with associated vector field $v_t\in L^2_{\mu_t}$, where $(v_t)_j=0$ for all $j\leq d$. Then there is a representative of $v_t$ such that the flow equation
\begin{align}
\frac{\d}{\d t}\phi_t=v_t(\phi_t); \quad\phi_0(y,x)=(y,x)
\end{align}
admits a global solution and $\mu_t= (\phi_t)_\sharp \mu_0$. 
Furthermore, we have 
\begin{align}
v_t(\phi_t(y,x))=T(y,x)-(y,x)=\left(0,T_y(x)-x\right)
\end{align}
for $\mu_0$-a.e. $(y,x)\in \R^{d} \times \R^m$.
\end{proposition}
\begin{proof}
We will first construct a vector field which describes the inverse curve starting in $\mu_1$ and ending in $\mu_0$ first. Let $T^y$ be the $C^1$ Monge map between $\mu_1^y$ and $\mu_0^Z$, which exists and is unique by the Caffarelli's regularity Theorem, see \cite[Theorem 12.50, ii)]{villani2009optimal}. In order to use the latter theorem we need assumptions 2 and 3. Note that we have that $T_t^y(x)\coloneqq (1-t)x+tT^y(x)$ is a invertible $C^1$ map from $\Omega_1$ onto its image by the proof of Proposition \ref{prop:empi_smooth_flow} ii). Using assumptions 1-3 the assumption that $y\mapsto p_1^y$ is $C^1$, by \cite[Corollary 1.2]{gonzalez2024linearization} we have that $T(y,x)\coloneqq (y, T^y(x))$ is continuous. Thus we can conclude that $T$ is measurable and hence it is a Monge map with respect to $W_{2,Y}$ for $\mu_1$ and $\mu_0$. More precisely we have that $(\Id,T)_\sharp \mu_1\in \Gamma_{o,Y}(\mu_1,\mu_0)$ which follows e.g. from \eqref{relation}. \\
Claim: \emph{For $t\in[0,1]$ the map $T_t(y,x)=(y,T_t^y(x))$ as map $T_t:\Omega_1\times \Omega_Y\to \R^{d+m}$ is continuous and injective and its image, denoted by $O_t\subset \R^{d+m}$ is a Borel set.} The continuity follows from the continuity of $T$ and the injectivity from the injectivity of the individual $T_t^y$. The image of $\Omega_1\times \Omega_Y$ is Borel measurable as image of an open set under a injective continuous map.\\
Claim: \emph{Let $\nu_t=T_{t,\sharp}\mu_1$. Then $\nu_t=\mu_{1-t}$ and $O_t\subseteq \supp(\nu_t)$ as well as $\nu_t(O_t)=1$.} These claims follow from straigthforward computations\\
Claim: \emph{Let $O$ be the image of $(\Id,T_t):[0,1]\times\Omega_1\times \Omega_Y\to [0,1]\times\R^{d+m}$. Then $O$ is Borel measurable and we can define a Borel measurable map $T_t^{-1}:O\to \R^{d+m}$ which we can view as element in $L^2(\int \nu_t\d t,\R^{d+m})$}. This follows since $(\Id,T_t)$ is continuous and injective and thus has Borel measurable image. Furthermore it is injective which is why we can invert it on its image.\\
Claim: \emph{The map $u_t(y,x)\coloneqq (T-\Id)\circ T_t^{-1}$ is well defined as function in $L^2(\int \nu_t\d t)$.} This follows from above.\\
Claim: \emph{ For $\phi_t=T_t$ we have that $\frac{\d}{\d t}\phi_t(y,x)=u_t(\phi_t(y,x))$ and $$u_t\nu_t=e_{t,\sharp}\left(((y_2,x_2)-(y_1,x_1))(\Id,T)_\sharp\mu_1)\right).$$} Both claims can be verified from straightforward computations.\\
Claim: \emph{Then $v_t\coloneqq -u_{1-t}$ and $T^{-1}$ fulfill the claim.} Note that one can easily see that $T^{-1}$ is a Monge map for $\mu_0$ and $\mu_1$. Furthermore $T^{-1}(y,x)=(y,(T^y)^{-1})$ and $(T^y)^{-1}$ is a Monge map between $\mu_0^Z$ and $\mu_1^y$. By definition we have $\mu_t=(T^{-1})_{t,\sharp}\mu_0$ for $(T^{-1})_{t}(y,x)=(1-t)(y,x)+tT^{-1}(y,x)=(y,(1-t)x+t(T^y)^{-1}(x))$. It is easy to see that $(T^{-1})_t=T_{1-t}\circ T^{-1}$ and in turn we know that its image is a Borel set with unit mass under $\mu_t=\nu_{1-t}$ making $v_t\in L^2(\mu_t)$ well defined. Furthermore $\frac{\d}{\d t}(T^{-1})_t=T^{-1}-\Id$. Computing
\begin{align}
    v_t\circ(T^{-1})_t=-u_{1-t}\circ (T^{-1})_t=-(T-\Id)\circ T_{1-t}^{-1}\left(\circ T_{1-t}\circ T^{-1}\right)=T^{-1}-\Id
\end{align}
we can conclude the claim.
\end{proof}

\section{Proofs of Section \ref{sec:relax}}

\textit{Proof of Proposition \ref{prop_beta}.}
    Denote by $\alpha_{opt}$ the optimal transport plan associated to the conditional Wasserstein metric $W_{p,Y}$. Since it is only $Y$ diagonally supported, we have that
    \[
    \|(y_1,x_1)-(y_2,x_2)\|^p=d_\beta^p((y_1,x_1),(y_2,x_2))
    \]
    for $\alpha_{opt}$ a.e. $(y_1,x_1,y_2,x_2)\in (A\times B)^2$. Thus,
 for an optimal plan $\alpha$ for $W_{p,\beta}$, we conclude
\begin{align*}
    W_{p,Y}(\mu_0,\mu_1)^p
    &=\int_{(A\times B)^2}\|(y_1,x_1)-(y_2,x_2)\|^p \, \d\alpha_{opt}
    =\int_{(A\times B)^2}d_{\beta}^p ((y_1,x_1),(y_2,x_2)) \, \d\alpha_{opt} \\
                &\geq \int_{(A\times B)^2}d_{\beta}^p((y_1,x_1),(y_2,x_2)) \, \d\alpha\\
                &=\geq\int_{B^2}\|x_1-x_2\|^p \, \d\pi^{2,4}_{\sharp}\alpha
                + \beta\int_{A^2}\|y_1-y_2\|^p \, \d\pi^{1,3}_\sharp\alpha\\
                &\geq \beta\int_{A^2}\|y_1-y_2\|^p \, d\pi^{1,3}_\sharp\alpha
\end{align*}
and thus the claim. \hfill $\blacksquare$\\[1ex]
In order to proof Proposition \ref{prop:ex_fix} we need some auxiliary results, in particular the following lemma which is a variant of \citep[Proposition 7.1.3]{ambrosio2005gradient}.

\begin{lemma}\label{lemma:conv_sub_ex}
Let $\beta > 0$ and let $\mu_n\rightharpoonup\mu$, $\nu_n\rightharpoonup\nu$ with respect to weak convergence for $\mu_n,\nu_n,\mu,\nu\in \P_2(\R^d\times\R^m)$. Then there exists a subsequence of optimal plans $\alpha_{n_k}$ for $W_{2,d_\beta}(\mu_{n_k},\nu_{n_k})$ and an optimal plan $\alpha\in P_2\left((\R^d\times \R^m)^2\right)$ for $W_{2,d_\beta}(\mu,\nu)$ such that $\alpha_{n_k}\rightharpoonup \alpha$ weakly.
\end{lemma}
\begin{proof}
Let $f:\R^d\times \R^m\to \R^d\times \R^m$ be defined by $(y,x)\mapsto (\sqrt{\beta}y,x)$.  Then for $\mu_1,\mu_2\in P_{2}(\R^d\times \R^m)$ we have that $W_{2,d_\beta}(\mu_1,\mu_2)=W_2(f_\sharp\mu_1,f_\sharp\mu_2)$ since there is a bijection of couplings 
\begin{align}\label{eq:bi_couplings}
\alpha \mapsto (f,f)_\sharp \alpha
\end{align}
and we can compute
\begin{align}
\int_{(\R^d\times\R^m)^2}\|(y_2,x_2)-(y_1,x_2)\|^2\d (f,f)_\sharp \alpha=\int_{(\R^d\times \R^m)^2}\beta\|y_2-y_1\|^2+\|x_2-x_1\|\d\alpha
\end{align}
which implies that optimal couplings are mapped to optimal couplings. Since also $f_\sharp\mu_n\rightharpoonup f_\sharp\mu, f_\sharp\nu_n\rightharpoonup f_\sharp\nu$
we can use \citep[Proposition 7.1.3]{ambrosio2005gradient} to guarantee the existence of a subsequence of optimal plans $\tilde{\alpha}_{n_k}$ for $W_{2}(f_\sharp\mu_{n_k},f_\sharp\nu_{n_k})$ such that $\tilde{\alpha}_{n_k}\rightharpoonup \tilde{\alpha}$ for an optimal plan $\tilde{\alpha}$ for $W_{2}(f_\sharp\mu,f_\sharp\nu)$. Thus by \eqref{eq:bi_couplings} there exists a subsequence of optimal plans $\alpha_{n_k}$ for $W_{2,d_\beta}(\mu_n,\nu_n)$ such that $\alpha_{n_k}\rightharpoonup \alpha$ for an optimal plan $\alpha$ for $W_{2,d_\beta}(\mu,\nu)$. 
\end{proof}

\begin{remark}\label{rem:uni_weak}
A useful well known observation is the following. Let $\alpha_n\in \Gamma(\mu,\nu), n\in \N$ where $\mu,\nu\in\P_2(\R^l)$. Assume that we have weak convergence $\alpha_n \rightharpoonup \alpha$ for some $\alpha\in \P(\R^{2l})$. Then $\alpha\in\Gamma(\mu,\nu)$ and for any measurable function $f:\R^{2l}\to \R$ such that $|f(x_1,x_2)|\leq \|x_1\|^2+\|x_2\|^2$ we have that $\int_{\R^{2l}} f\d\alpha_n\to \int_{\R^{2l}} f\d\alpha$. The latter claim follows from \citep[Remark 5.2.3]{ambrosio2005gradient} which implies that $\|x_1\|^2+\|x_2\|^2$ is uniformly integrable w.r. to $\{\alpha_n,n\in\N\}$ since it has fixed marginals $\mu,\nu$ with finite second moments. Then \citep[Lemma 5.1.7]{ambrosio2005gradient}, which states that $\lim_{n\to\infty}\int f\d\alpha_n\to\int f\d\alpha$ for uniformly integrable $f$ and weak converging $\alpha_n\rightharpoonup \alpha$, implies the latter claim. It is immediate that $\alpha\in\Gamma(\mu,\nu)$ since $\Gamma(\mu,\nu)$ is compact and thus closed in the weak topology by \citep[Remark 5.2.3]{ambrosio2005gradient}.
\end{remark}
Note that \citep[Proposition 3.11]{hosseini2024conditional} states the following proposition only under some regularity conditions on $\mu,\nu$ which ensures uniqueness of optimal plans. But this is not needed in their proof if one is only interested in the existence of a optimal limit point. For the convenience of the reader we include a proof adapted to our situation but claim no originality.
\begin{proposition} \label{prop:huss}Let $\mu,\nu\in \P_{2,Y}(\R^d\times \R^m)$ and let $\beta_k\in\R^{\N}$ be a monotonly increasing series with $\beta_k\to \infty$. For a choice of optimal plans $\alpha^{\beta_k}$ for $W_{2,d_{\beta_k}}(\mu,\nu)$ the the closure of the set $\{\alpha^{\beta_k}\}_{k\in\N}$ in $\Gamma(\mu,\nu)$  is compact with respect to the weak convergence topology and every accumulation point $\alpha$ is a optimal plan for $W_{2,Y}(\mu,\nu)$.
\end{proposition}
\begin{proof}
By \citep[Remark 5.2.3]{ambrosio2005gradient} the set $\Gamma(\mu,\nu)$ is compact with respect to the weak convergence topology and thus also the closure of $\{\alpha^{\beta_k}:k\in\N\}$ is compact. Consequently accumulation points exist. Let $\alpha\in\Gamma(\mu,\nu)$ be an accumulation point and by abuse of notation let $\alpha^{\beta_k}\rightharpoonup \alpha\in \Gamma(\mu,\nu)$. Then by Remark \ref{rem:uni_weak} also 
\begin{align}
    \int_{\R^{2d+2m}}\|y_1-y_2\|^2\d\alpha^{\beta_k}\to \int_{\R^{2d+2m}}\|y_1-y_2\|^2\d\alpha=\int_{\R^{2d}}\|y_1-y_2\|^2\d\pi^{1,3}_\sharp\alpha.
\end{align}
Since the limes on the left is $0$ by Proposition \ref{prop_beta} we know that $\pi^{1,3}_\sharp\alpha$ is only supported on the diagonal. Thus for any $\pi^{1,3}_\sharp\alpha$ measurable bounded function $f:\R^{2d}\to \R$ we have that $f=f\circ \Delta\circ\pi^1$ for $\pi^{1,3}_\sharp\alpha$ a.e. $(y_1,y_2)\in\R^{2d}$ and hence
\begin{align}
\int_{\R^{2d}}f\d\pi^{1,3}_\sharp\alpha = \int_{\R^{2d}}f\circ \Delta\circ\pi^1\d\pi^{1,3}_\sharp\alpha = \int_{\R^{2d}}f\d\Delta_\sharp P_Y
\end{align} which implies $\pi^{1,3}_\sharp\alpha=\Delta_\sharp P_Y$ i.e. $\alpha\in\Gamma^4_{Y}(\mu,\nu)$. Furthermore note that $W_{2,\beta}\leq W_{2,Y}$ since every admissible plan for $W_{2,Y}$ is also admissible for $W_{2,d_\beta}$ with equal costs and thus 
\[
\int_{\R^{2d+2m}}\|(y_1,x_1)-(y_2,x_2)\|^2\d \alpha^{\beta_k} \leq W_{2,Y}(\mu,\nu)^2
\]
for all $k\in \N$. Using Remark \ref{rem:uni_weak} we can conclude that 
\[
\int_{\R^{2d+2m}}\|(y_1,x_1)-(y_2,x_2)\|^2\d \alpha=\lim_{k\to\infty}\int_{\R^{2d+2m}}\|(y_1,x_1)-(y_2,x_2)\|^2\d \alpha^{\beta_k} \leq W_{2,Y}(\mu,\nu)^2.
\]
Hence $\alpha$ is an optimal plan for $W_{2,Y}(\mu,\nu)$ and thus the claim. 
\end{proof}
    
\noindent
\textit{Proof of Proposition \ref{prop:ex_fix}.}
 By \citep[Theorem 8.3.2]{bogachev2007measure} we know that weak convergence of probability measures is metrizable and we denote by $d_{weak}$ a metric on $\P\left(\R^{2d}\times\R^{2m}\right)$ that metrizes weak convergence. Let $\alpha_n$ be a an optimal plan for $\mu_n,\nu_n$ and $W_{2,d_{\beta_k}}(\mu_n,\nu_n)$. Then by \citep[Proposition 7.1.3]{ambrosio2005gradient} and Lemma \ref{lemma:conv_sub_ex}, there exists a subsequence of $\alpha_{n}$ optimal for $W_{2,d_{\beta_k}}(\mu_n,\nu_n)$ converging weakly to an optimal plan $\alpha^{\beta_k}$ for $W_{2,d_{\beta_k}}(\mu,\nu)$. Thus we can find a sequence of optimal plans $\alpha_{n_k}$ for $W_{2,d_{\beta_k}}(\mu_{n_k},\nu_{n_k})$ such that $d_{weak}(\alpha^{\beta_k},\alpha_{n_k})<\frac 1 k$. We know by \citep[Proposition 3.11]{hosseini2024conditional} resp. Proposition \ref{prop:huss} that $\alpha_{\beta_k}\to \alpha$  with respect to $d_{weak}$ for an optimal plan $\alpha\in \Gamma^4_Y(\mu,\nu)$ for $W_{2,Y}(\mu,\nu)$. Thus, for $\epsilon >0$, there exists a $k$ such that $\frac{1}{k} + d_{weak}(\alpha^{\beta_k},\alpha)< \epsilon$ and we obtain 
\[
d_{weak}(\alpha_{n_k},\alpha)\leq d_{weak}(\alpha_{n_k},\alpha^{\beta_k})+d_{weak}(\alpha^{\beta_k},\alpha)\leq \frac{1}{k} + d_{weak}(\alpha^{\beta_k},\alpha)< \epsilon
\]
which proves the claim.
\hfill $\blacksquare$

\section{Implementation Details}
We use a setup similar to \citep{tong2023improving}, using the time dependent U-Net architecture from \citep{nichol2021improved, dhariwal2021diffusion} which are trained using Adam \citep{KB2015}. As in \citep{tong2023improving} we clip the gradient norm to 1 and use exponential moving averaging with a decay of $0.9999$. The differences are we use a constant learning rate of 2e-4, 256 model channels and no dropout. We train using 50k target samples for 500 epochs using a batch size of 500 for the minibatch OT couplings and a batch size of 100 for training the networks. We set the same random seed during training to be able to compare runs for different sources of couplings. 
The conditional coupling plans are calculated using the Python Optimal Transport package \citep{flamary2021pot}.  For inference simulate the corresponding ODEs using the torchiffeq \citep{torchdiffeq} package. To evaluate our results, we use the Fr\'echet inception distance (FID) \citep{HRUNH2017}\footnote{We use the implementation from \url{https://github.com/mseitzer/pytorch-fid}.}. We compute the distance on 50k training samples, for which we generate 50k samples given the same labels as the training samples. 

Further generated samples for the best performing method i.e $\beta = 100$:

\begin{figure}
    \centering
     \includegraphics[width=\textwidth]{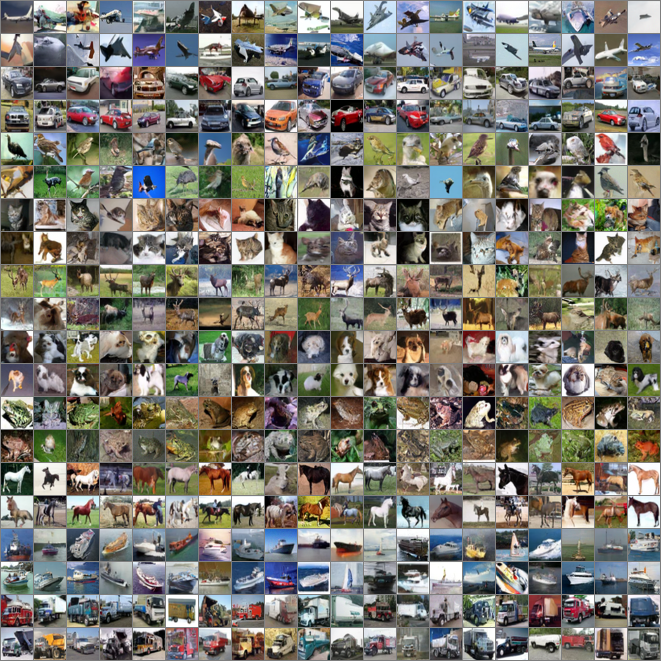}
    \caption{Uncurated samples sorted by class labels of the OT Bayesian Flow matching method with $\beta = 100$.}
\end{figure}

\newpage
\bibliography{references}

\end{document}